\definecolor{tabblue}{RGB}{31,119,180}
\definecolor{taborange}{RGB}{255,127,14}
\definecolor{tabgreen}{RGB}{44,160,44}
\definecolor{tabred}{RGB}{214,39,40}
\definecolor{tabpurple}{RGB}{148,103,189}
\definecolor{tabbrown}{RGB}{140,86,75}
\definecolor{tabpink}{RGB}{227,119,194}
\definecolor{tabgray}{RGB}{127,127,127}
\definecolor{tabolive}{RGB}{188,189,34}
\definecolor{tabcyan}{RGB}{23,190,207}
\definecolor{introred}{rgb}{1,0,0.023}
\definecolor{introgray}{rgb}{0.901, 0.901, 0.956}
\definecolor{fig3blue}{rgb}{0.21, 0.464, 0.684}
\definecolor{linkblue}{HTML}{1A1AFF}   % Blue for internal links
\definecolor{citegreen}{HTML}{117733}  % Green for citations
\definecolor{urlred}{HTML}{AA0000}     % Red for URLs
\definecolor{verylightyellow}{rgb}{1.0, 1.0, 0.88}
\definecolor{modernred}{HTML}{bc272d}
\definecolor{modernteal}{HTML}{0000a2}
\definecolor{modernpink}{HTML}{E91E63}
\newtheorem{theorem}{Theorem}
\newtheorem{lemma}[theorem]{Lemma}
\newtheorem{corollary}[theorem]{Corollary}
\newtheorem{definition}[theorem]{Definition}
\newcommand{\modelnet}{\texttt{modelnet10}}
\newcommand{\mcb}{\texttt{mcb-c}}
\newcommand{\corals}{\texttt{corals}}
\newcommand{\nR}{\mathbb{R}}
\newcommand{\Vor}{\mathrm{Vor}}
\newcommand{\Del}{\mathrm{Del}}
\newcommand{\VC}{\mathrm{VC}}
\newcommand{\hull}[1]{\operatorname{conv}\!\left(#1\right)}
\DeclareMathOperator{\dgm}{dgm}
\DeclareMathOperator{\aff}{aff}
\newacronym{ph}{PH}{Persistent Homology}
\newcommand{\eg}{e.g.\xspace}
\newcommand{\ie}{i.e.\xspace}
\newcommand{\etc}{etc.\xspace}
\newcommand{\wrt}{wrt.\xspace}
\pgfplotsset{compat=1.18}
\newcommand{\tikzcircle}[2][red,fill=red]{\tikz[baseline=-0.7ex]\draw[#1,radius=#2] (0,0) circle ;}
\pgfplotsset{every axis/.append style={
    font=\small
}}
\title{--- The Flood Complex --- \\Large-Scale Persistent Homology on Millions of Points}
\author{%
    Florian Graf$^{1,}$\thanks{equal contribution} \ \href{mailto:florian.graf@plus.ac.at}{\textcolor{black}{\faEnvelope[regular]}}\ , \textbf{Paolo Pellizzoni$^{2,\relax\dagger}$ \href{mailto:pellizzoni@biochem.mpg.de}{\textcolor{black}{\faEnvelope[regular]}}\ , Martin Uray$^{1,3}$, Stefan Huber$^3$, Roland Kwitt$^1$}\\
    $^1$University of Salzburg, Austria\\
    $^2$Max Planck Institute of Biochemistry, Germany\\
    $^3$Josef Ressel Centre for Intelligent and Secure Industrial Automation,\\
    University of Applied Sciences, Salzburg, Austria}
\begin{document}
\maketitle

\begin{abstract}
    We consider the problem of computing \gls{ph} for large-scale Euclidean point cloud data, aimed at downstream machine learning tasks, where the exponential growth of the most widely-used Vietoris-Rips complex imposes serious computational limitations. Although more scalable alternatives such as the Alpha complex or sparse Rips approximations exist, they often still result in a prohibitively large number of simplices. This poses challenges in the complex construction and in the subsequent \gls{ph} computation, prohibiting their use on large-scale point clouds.
    To mitigate these issues, we introduce the \emph{Flood complex}, inspired by the advantages of the Alpha and Witness complex constructions. 
    Informally, at a given filtration value $r\geq 0$, the Flood complex contains all simplices from a Delaunay triangulation of a small subset of a point cloud $X$ that are fully covered by the union of balls of radius $r$ emanating from $X$, a process we call \emph{flooding}. 
    Our construction allows for efficient \gls{ph} computation, possesses several desirable theoretical properties, and is amenable to GPU parallelization. 
    Scaling experiments on 3D point cloud data show that we can compute \gls{ph} of up to dimension 2 on several millions of points. Importantly, when evaluating object classification performance on real-world and synthetic data, we provide evidence that this scaling capability is needed, especially if objects are geometrically or topologically complex, yielding performance superior to other \gls{ph}-based methods and neural networks for point cloud data.
    %
    % \vskip0.5ex
    % \noindent
    Source code and datasets are available on  \faGithub \hspace{0.01cm}:  \url{https://github.com/plus-rkwitt/flooder}.
    \vspace{0.1cm}
\end{abstract}

\section{Introduction}
\label{section:introduction}

Throughout the past years, topological data analysis (TDA) tools and, in particular, persistent homology (\gls{ph}) \cite{Barannikov94a,Edelsbrunner00a,Robins99a,Zomorodian05a}, have found widespread application in machine learning \cite{Papamarkou24a}. Applications range from graph classification \cite{Carriere20a,Hofer20b, Horn22a}, and time series forecasting \cite{chen22a, Zeng21a} to studying representations \cite{Barannikov22a,Moor20a,Trofimov23a} and generalization of neural networks \cite{Birdal21a, Dupuis23a}, and developing novel regularizers or loss functions \cite{Chen19a,Hu19a}. The central pillar of most of these works is the power of \gls{ph} to reveal and concisely summarize topological and geometrical information from a finite sample of points. 
In particular, within this pipeline, one first constructs a \emph{simplicial complex} together with a \emph{filtration} that encodes the underlying topological structure across scales. Once built, one then computes a stable summary called a \emph{persistence barcode} or \emph{persistence diagram}.

Importantly, however, the unfavorable computational complexity of the algorithmic pipeline for \gls{ph} computation has, so far, largely limited the scope of topological approaches to studying connectivity properties only. For this reason, efficient approaches to the computation of \gls{ph} have been a goal for the field for some time, as emphasized in a recent position paper \cite[Section 4]{Papamarkou24a}.

When taking a closer look at how such simplicial complexes are typically built, we immediately identify potential scalability issues. In particular, when the input data is a point cloud, one often chooses the simplicial complex whose $k$-simplices are all subsets of cardinality $(k+1)$. For example, the Vietoris-Rips and the Čech complex at filtration value $r$ consist of all subsets of the point cloud with a diameter less than $r$ and all subsets that can be enclosed in some ball of radius $r$, respectively. 
Hence, at high filtration values, their number of simplices becomes exponentially large, rendering memory consumption and persistent homology computation impractical for large point clouds.
Simple mitigation strategies include computing the complex and homology only up to some degree, only up to some filtration value, or only on a \emph{subsample} of the point cloud. More advanced methods, such as sparse approximations~\cite{Sheehy13a} and collapsing strategies~\cite{Boissonnat23a}, reduce the Vietoris-Rips complex to a smaller one with (approximately) equivalent topology.
However, for (very) large point clouds, one may need to ``sparsify'' so aggressively that topological features and approximation guarantees no longer apply, while the resulting complex may \emph{still} be too large for practical computation.

The natural way to avoid such scalability problems is to work with smaller complexes. Specifically, in low dimension (\eg, three as in our experiments), filtrations on the Delaunay triangulation of the point clouds are used, such as the Alpha complex \cite{edelsbrunner1993}, Delaunay-Čech complex \cite{bauer2017} (both homotopy equivalent to Čech), or Delaunay-Rips \cite{Mishra23a} complex. However, in true large-scale settings, even the benign scaling of the Delaunay triangulation can be challenging, not in terms of memory consumption, but in terms of subsequent \gls{ph} computation time of these filtered complexes, as the latter comes at worst-case cubic complexity (in the complex size) using the standard matrix reduction algorithm \cite{Edelsbrunner00a}.

To further reduce the size of the complex, one can resort to \emph{subsampling} strategies. %and the concept of \emph{Witness complexes}. 
In subsampling \cite{cao2022,Chazal15a}, the key idea is to repeatedly draw small subsets of the full point cloud, execute the \gls{ph} pipeline, and then aggregate the resulting persistence diagrams \cite{cao2022}, or an appropriately vectorized representation of the latter \cite{Chazal15a}. 
Both approaches come with theoretical guarantees but, depending on the data, may require sufficiently large subsets, a large number of random draws, and a computationally expensive aggregation step (as in  \cite{cao2022}). Crucially, once the subsamples from the point cloud are chosen, the complex construction is agnostic of the original data. Because of this, topological features that are smaller than the distance between samples are irreversibly lost.

The \emph{Witness complex} construction \cite{deSiliva04a}, which is most similar to our approach, builds a simplicial complex in a data-driven manner on top of a reduced \emph{landmark set}. In particular, simplices appear in the complex if they are witnessed, \ie, if there exists a {single} non-landmark point that satisfies a distance condition to the simplex. 
Similar to subsampling approaches, small topological features can be lost. Moreover, the construction of the Witness complex can be fragile, as it entails carefully controlling a distance cut-off to avoid truncating genuine topological features while taming the running time, which, in many cases, renders this type of construction impractical on large point clouds.

\textbf{Contribution(s).} 
We seek to mitigate the discussed scalability issues via the \emph{Flood complex}, \ie, a new filtered simplicial complex for {Euclidean point cloud data}. Being built on top of a small subsample of the point cloud at hand, its size is orders of magnitude smaller not only than the Vietoris-Rips complex but also the Alpha complex, facilitating efficient \gls{ph} computation. Moreover, as its simplices are endowed with filtration values that are tied to the \emph{entire} point cloud, the topological approximation quality of the Flood complex is considerably better compared to subsampling strategies.
The computation of the Flood complex is designed for execution on GPUs, enabling efficient exploitation of specialized hardware and software libraries frequently used in contemporary machine learning research. We provide (1) initial theoretical guarantees on the approximation quality of \gls{ph} computed on the Flood complex, stability results, and guarantees for approximation steps. Further, we (2) demonstrate scalability to point cloud sizes that were previously impossible to process, and eventually (3) present strong empirical evidence that this scalability is needed and useful when seeking to classify geometrically complex 3D point cloud data.

\section{Preliminaries}
\label{section:preliminaries}

We study finite point sets $X$ in Euclidean space $\mathbb{R}^d$ in terms of the topology of the union of balls $X_r:= \bigcup_{x\in X} B_r(x)$ at varying radii $r\ge 0$, with  $B_r(x):= \{y\in \mathbb R^d \colon d(x,y) \le r \}$ and metric $d(\cdot, \cdot)$.
The topology of $X_r$ and, in particular, its homology, can be computed from a combinatorial object called a simplicial complex. An (abstract) simplicial complex $\Sigma$ over a set $S$ is a collection of non-empty, finite subsets $\sigma \subset S$ such that for every non-empty subset $\tau\subset \sigma$ also $\tau\in \Sigma$.
Moreover, $\sigma$ is called a $k$-simplex if it has cardinality $k+1$, and a simplex $\tau\subset \sigma$ is called a \emph{face} of $\sigma$.

As we study $X_r$ for different $r\ge 0$, we need a simplicial complex $\Sigma_r$ at each radius $r$. Analogously to $t\le r$ implying $X_t\subset X_r$, we want $\Sigma_t \subset \Sigma_r$. 
A sequence of simplicial complexes $\{\Sigma_t \colon t\ge 0\}$ satisfying this property is called a \emph{filtration} or \emph{filtered simplicial complex}. 
In particular, a function $f:\Sigma_{\infty}\to \mathbb R$ that fulfills certain consistency properties induces a filtration via $\Sigma_t = f^{-1}((-\infty, t])$.

Informally, when increasing $r$, the shape of $X_r$ will gradually change from the points $X_0=X$ themselves to $X_\infty = \mathbb R^d$.
During this process, we can study $X_r$ by means of the homology groups $H_k$ of an associated simplicial complex $\Sigma_r$ and their evolution over $r$. Specifically, $H_0$ encodes connectivity information, $H_1$ information about loops, $H_2$ information about 3-dimensional voids (and subsequent homology groups encode higher dimensional topological information). As $r$ grows, these homological features appear and disappear, and we summarize this information in the form of a \emph{persistence diagram} $\dgm_k(\Sigma)$, \ie, a multiset of tuples $(b,d)\in \mathbb R^2$, where $b$ denotes the minimal value $r$ such that a feature exists in $H_k(\Sigma_r)$ and $d$ denotes the maximal such value. Informally, the feature is born at time $b$ and dies at time $d$.
Two persistence diagrams $D_1$ and $D_2$ can be compared via their \emph{bottleneck distance} 
$d_B( D_1,  D_2)
    := \inf_{\eta:  D_1 \to  D_2} \sup_{p \in  D_1} \|p - \eta(p)\|_\infty$
where the infimum is taken over all bijections $\eta: D_1 \to  D_2$, considering each diagonal point $(b,b)$ with infinite multiplicity. 
Importantly, if for two filtered complexes $\Sigma_r$ and $\Sigma'_r$, there exists $\epsilon>0$ such that $\Sigma_{r-\epsilon}\subset \Sigma'_r \subset \Sigma_{r+\epsilon}$
for all $r\ge 0$,
then $d_B( \dgm(\Sigma), \dgm(\Sigma'))\le \epsilon$, see \cite{Cohen-Steiner2007a}.

A particularly relevant family of simplicial complexes over point clouds $X$ are Alpha complexes \cite{edelsbrunner1993} $\mathrm{Alpha}_r(X)$, which form a filtration of the Delaunay complex $\mathrm{Del}(X)$ on $X$. %Their simplices are combinations of the points $x\in X$. 
They are naturally embedded in $\mathbb R^d$ with each simplex $\sigma$ represented by its convex hull $\hull{\sigma} := \{ \sum_{x \in \sigma}  \lambda_x x : \sum_{x \in \sigma} \lambda_x = 1, \lambda_x \geq 0 \}$. Denoting by $|\mathrm{Alpha}_r|:= \bigcup_{\sigma \in \mathrm{Alpha}_r} \hull{\sigma} \subset \mathbb R^d$, we have that $|\mathrm{Alpha}_r|\subset X_r$ and that both are homotopy equivalent \cite{edelsbrunner1993}. This relates back to the very first point, \ie, it guarantees that $X_r$ can be studied in terms of $\mathrm{Alpha}_r(X)$.

\section{The {Flood} complex} \label{section:theory}

As discussed, given a finite point cloud $X\subset \mathbb R^d$, ideally we would want to compute the \glsfirst{ph} of its (filtered) Alpha complex. The latter can be computed from a filtration of the Delaunay complex $\mathrm{Del}(X)$ which has, for $|X|=n$, at most $O\left( n^{\lfloor d/2\rfloor} \right)$ simplices (and thus much fewer than the $2^n-1$ simplices of the Vietoris-Rips complex). However, the computation of \gls{ph} will still be  challenging if $n$ becomes very large.
While, of course, one could compute the Alpha complex on only a subset $L\subset X$, one may lose valuable information when doing so. We therefore propose a filtration on $\mathrm{Del}(L)$ that is \textit{aware} of the entire point cloud $X$. 

Informally, at a given 
filtration value
$r\geq 0$, our novel complex contains all simplices $\mathrm{Del}(L)$
that are fully covered by balls of radius $r$ emanating from $X$, a process we call \emph{flooding}.
We refer to the resulting complexes $\mathrm{Flood}_r(X, L)\subset \mathrm{Del}(L)$ as Flood complexes and, in alignment with the terminology of witness complexes, we will refer to $L$ as landmarks.
We often select $L$ as a subset of $X$, but doing so is not necessary, and the theoretical results in \cref{sec:tehoretical_results} do not require this assumption.

\begin{definition}
For $r \geq 0$, the Flood complex $\mathrm{Flood}_r(X,L)$ at flood radius $r$ is the simplicial complex
\begin{equation}
    \mathrm{Flood}_r(X,L) = \Big\{\sigma \in \Del(L) \colon \hull{\sigma} \subset {\bigcup_{x\in X} B_r(x)} \Big\}\enspace.
\end{equation}
\vspace{-0.3cm}
\end{definition}

Whenever $0 \leq r \leq t$, we have $\text{Flood}_r(X,L) \subset \text{Flood}_t(X,L)$ from $X_r \subset X_t$, which yields a genuine filtration, \ie, a nested family $\{\mathrm{Flood}_r(X,L)\}_{r \in [0,\infty)}$ to which we refer to as the \emph{filtered Flood complex}. By construction, if $L\subset X$, then all 0-simplices are already in the complex at time $r=0$ and vice versa. For brevity, we will refer to the entire nested family of Flood complexes as $\mathrm{Flood}(X,L)$. 

\subsection{Intuition}
\label{subsection:flood_intuition}

\begin{figure}
\centering{
\includegraphics[width=\textwidth]{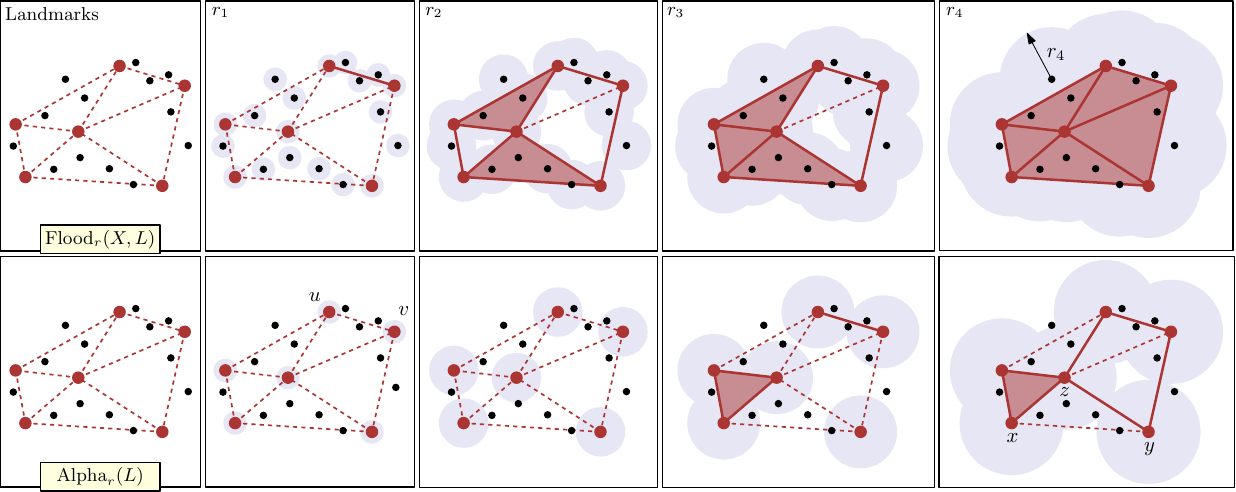}}
\caption{Schematic overview of the Flood complex $\mathrm{Flood}_r(X,L)$ (\textbf{top}), the Alpha complex on a subsample $\mathrm{Alpha}_r(L)$ (\textbf{bottom}), and their accordance with the union of balls $X_r$ at different radii $r$.
The point cloud $X$ is marked by $\bullet$, the landmarks $\subset X$ by $\textcolor{modernred}{\bullet}$, and \tikzcircle[introgray, fill]{.1} identify the balls of radius $r$.
\label{fig:intro}}
\vspace{-0.3cm}
\end{figure}

\cref{fig:intro} illustrates a point cloud $X$, a subset $L  \subset X$, and
the construction of the filtered complexes ${\rm Alpha}(L)$ and  ${\rm Flood}(X,L)$.
We argue that ${\rm Alpha}_r(L)$, which is homotopy equivalent to the union of balls of radius $r$ centered on $L$, can be a poor representation of the topology of $X_r$, while the filtration function of the Flood complex makes it more aligned to the topology of the underlying data.

For example, the points $u,v\in X$ are in the same connected component of $X_{r_1}$, as illustrated in the second column of \cref{fig:intro}. This is faithfully encoded in $\mathrm{Flood}_{r_1}(X,L)$ which already contains the edge $\{u,v\}$, because $\hull{\{u,v\}} \subset X_{r_1}$. 
However, in the Alpha complex, $u$ and $v$ will be in the same connected component much later, \ie, only after $r_3$. 
Similarly, at radius $r_2$, there are two cycles in $X_{r_2}$, which are recognized by $\mathrm{Flood}_{r_2}(X,L)$ but not by $\mathrm{Alpha}_{r_2}(L)$.
Specifically, the rightmost cycle in $X_{r_2}$ is less persistent in the Alpha filtration, as it is born only much later in $\mathrm{Alpha}(L)$ at filtration value $r_4$ and dies shortly afterwards. For the leftmost cycle this effect is even more pronounced.
Similar considerations apply to higher order simplices, such as, for example, the triangle $\{x, y, z\}$, which is missing from ${\rm Alpha}(L)$ even at filtration value $r_4$.

It is worth mentioning that since $\mathrm{Flood}(X,L)$ is defined by a filtration on the Delaunay triangulation only on $L$ instead of $X$, there are natural limitations to its resolution. 
Most notably, every $k$-simplex in $\mathrm{Del}(L)$ can detect at most one $k$-dimensional hole.
Still, if the number of landmarks is sufficiently large and they are well-placed, then $\mathrm{Flood}_r(X,L)$ will be able to capture the relevant homological information of $X_r$, as shown in the next subsection.

\subsection{Theoretical results}
\label{sec:tehoretical_results}

The Flood complex satisfies certain beneficial properties (proofs can be found in \cref{appendix:section:theory}). First, it is stable with respect to the point cloud $X$. This means that if the point cloud $X$ is perturbed by a little, then the \gls{ph} of $\mathrm{Flood}(X,L)$ changes only a bit, quantified as follows.
\begin{theorem}
    \label{theorem:bottleneckstabilityX}
    The Flood complex is bottleneck stable with respect to its first argument, \ie, given $L,X,X'\subset \mathbb R^d$, it holds that $\forall i\in \mathbb N$ 
    \begin{equation}
        d_B\left(\dgm_i(\mathrm{Flood}(X,L)), \dgm_i(\mathrm{Flood}(X',L))\right) \le d_H(X,X')
        \enspace.
        \label{eq:stable1}
    \end{equation}
\end{theorem}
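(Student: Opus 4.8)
The plan is to derive the bottleneck bound from the interleaving criterion recalled in \cref{section:preliminaries}: if I can show that, for $\epsilon := d_H(X,X')$,
$\mathrm{Flood}_{r-\epsilon}(X,L)\subset \mathrm{Flood}_r(X',L)\subset \mathrm{Flood}_{r+\epsilon}(X,L)$ holds for all $r\ge 0$, then \eqref{eq:stable1} follows immediately by applying the stability result of \cite{Cohen-Steiner2007a} degreewise. The observation that makes this strategy clean is that both filtrations live on the \emph{same} underlying complex $\Del(L)$, since the Delaunay triangulation depends only on $L$; hence ``the same simplex $\sigma$'' is meaningful on both sides, and only the filtration value at which a given $\sigma$ enters differs between $\mathrm{Flood}(X,L)$ and $\mathrm{Flood}(X',L)$.

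First I would establish the elementary containment of ball unions under the Hausdorff distance: with $\epsilon = d_H(X,X')$, I claim $X_r\subset X'_{r+\epsilon}$ and, symmetrically, $X'_r\subset X_{r+\epsilon}$ for every $r\ge 0$. Indeed, any $p\in X_r$ satisfies $d(p,x)\le r$ for some $x\in X$; since $d_H(X,X')\le\epsilon$ there is $x'\in X'$ with $d(x,x')\le \epsilon$, and the triangle inequality gives $d(p,x')\le r+\epsilon$, so $p\in B_{r+\epsilon}(x')\subset X'_{r+\epsilon}$. The reverse inclusion is identical after swapping $X$ and $X'$, using the symmetric term in the definition of $d_H$.

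Next I would transfer these containments to the complexes via the flooding condition, which is monotone in the union of balls. If $\sigma\in \mathrm{Flood}_{r-\epsilon}(X,L)$, \ie $\hull{\sigma}\subset X_{r-\epsilon}$, then the lemma yields $\hull{\sigma}\subset X'_{(r-\epsilon)+\epsilon}=X'_r$, whence $\sigma\in \mathrm{Flood}_r(X',L)$; this is the left inclusion. Symmetrically, $\sigma\in \mathrm{Flood}_r(X',L)$ gives $\hull{\sigma}\subset X'_r\subset X_{r+\epsilon}$, so $\sigma\in \mathrm{Flood}_{r+\epsilon}(X,L)$, which is the right inclusion. For the boundary regime $r<\epsilon$ the left-hand complex $\mathrm{Flood}_{r-\epsilon}(X,L)$ is interpreted as empty (the filtration being indexed by $[0,\infty)$), so that inclusion holds trivially.

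I do not expect a substantial obstacle: the core is a single triangle-inequality step promoted to an $\epsilon$-interleaving, and the only points requiring care are the shared complex $\Del(L)$ and the empty-complex convention for $r<\epsilon$. An equivalent, perhaps tidier, packaging is to note that the flooding filtration value of a simplex is $f_X(\sigma)=\max_{p\in \hull{\sigma}} d(p,X)$ and to show directly that $\|f_X-f_{X'}\|_\infty\le d_H(X,X')$ (each $p$ moves its distance-to-set by at most $\epsilon$, uniformly over the compact set $\hull{\sigma}$), after which sup-norm stability of persistence diagrams closes the argument; I would use whichever formulation meshes most smoothly with the interleaving lemma already stated in \cref{section:preliminaries}.
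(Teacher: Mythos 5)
Your proposal is correct and is in substance the same argument as the paper's: the paper proves the sup-norm bound $|f(\sigma)-f'(\sigma)|\le d_H(X,X')$ on the filtration functions via the triangle inequality for the directed Hausdorff distance (\cref{lem:triangle_hausdorff}) and then invokes \cite{Cohen-Steiner2007a}, which is exactly the ``equivalent packaging'' you describe in your final paragraph, while your primary route through the ball-union containments $X_r\subset X'_{r+\epsilon}$ is the same single triangle-inequality step expressed as an $\epsilon$-interleaving of the sublevel complexes on the common underlying complex $\Del(L)$. Both versions close the argument identically, so no gap.
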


A second desirable property satisfied by the Flood complex is that it recovers the topology of the union of balls $X_r$ when the landmarks $L$ are chosen as the whole point cloud $X$.
\begin{theorem}
\label{theorem:homotopytypeequiv}
    Let $X\subset \mathbb R^2$ be a finite subset of points in general position. Then,
    $|\mathrm{Flood}_r(X,X)|$ is homotopy equivalent to $X_r$ for any $r\ge 0$.
\end{theorem}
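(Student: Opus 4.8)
The plan is to compare the Flood complex with the Alpha complex on the same landmark set $L=X$, whose homotopy type is already understood. First I would record the sandwich $|\mathrm{Alpha}_r(X)|\subseteq|\mathrm{Flood}_r(X,X)|\subseteq X_r$. The right inclusion is immediate from the definition, since every $\sigma\in\mathrm{Flood}_r(X,X)$ has $\hull{\sigma}\subseteq X_r$. The left inclusion holds because $\mathrm{Alpha}_r(X)\subseteq\mathrm{Del}(X)$ and $|\mathrm{Alpha}_r|\subseteq X_r$, so every Alpha simplex already satisfies the flooding condition and hence lies in $\mathrm{Flood}_r(X,X)$. Since the inclusion $|\mathrm{Alpha}_r(X)|\hookrightarrow X_r$ is a homotopy equivalence (a deformation retract, as recalled in \cref{section:preliminaries}), it then suffices to prove that the inclusion $j\colon|\mathrm{Flood}_r(X,X)|\hookrightarrow X_r$ is a homotopy equivalence.

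Both spaces are homotopy equivalent to finite graphs (wedges of circles): $X_r$ is a finite union of disks in the plane, while $F:=|\mathrm{Flood}_r(X,X)|$ is a finite $2$-complex embedded in $\mathbb{R}^2$, so both are aspherical and it is enough to show that $j$ induces a bijection on connected components and an isomorphism on fundamental groups. Surjectivity on $\pi_0$ and $\pi_1$ is free from the sandwich: every component and every loop of $X_r$ is already realized in $|\mathrm{Alpha}_r(X)|\subseteq F$, because $|\mathrm{Alpha}_r(X)|\hookrightarrow X_r$ is (bijective, hence) surjective on $\pi_0$ and $\pi_1$.

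The heart of the argument, and the step I expect to be the main obstacle, is injectivity, where planarity is essential. Consider a loop $\gamma$ in $F$ that is null-homotopic in $X_r$. Since $F$ is a subcomplex of the Delaunay triangulation $\mathrm{Del}(X)$, which is a genuine triangulation of $\hull{X}$ by general position, $\gamma$ is homotopic within $F$ to an edge-loop in the $1$-skeleton of $\mathrm{Del}(X)$, whose bounded complementary regions are unions of closed Delaunay triangles. Being null-homotopic in $X_r$, $\gamma$ has winding number zero about every point of $\mathbb{R}^2\setminus X_r$; hence every point enclosed by $\gamma$, having nonzero winding number, must lie in $X_r$. Consequently each enclosed closed Delaunay triangle $\hull{\sigma}$ satisfies $\hull{\sigma}\subseteq X_r$ (using that $X_r$ is closed, so boundary points are included), so $\sigma\in\mathrm{Flood}_r(X,X)$ and $\hull{\sigma}\subseteq F$. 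Thus the whole region bounded by $\gamma$ lies in $F$ and $\gamma$ is null-homotopic in $F$, giving $\pi_1$-injectivity. The analogous component-level statement, namely that a point of $F$ is joined inside $F$ to a vertex of $\mathrm{Del}(X)$ and that vertices lying in one component of $X_r$ already lie in one component of $|\mathrm{Alpha}_r(X)|\subseteq F$, yields $\pi_0$-injectivity.

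Combining surjectivity and injectivity, $j$ is a $\pi_0$-bijection and a $\pi_1$-isomorphism between aspherical complexes, hence a homotopy equivalence, proving $|\mathrm{Flood}_r(X,X)|\simeq X_r$. The delicate points to pin down are (i) reducing an arbitrary loop of $F$ to a Delaunay edge-loop and decomposing its enclosed region into closed triangles, and (ii) justifying that these finite planar complexes are aspherical so that $\pi_0$/$\pi_1$ information suffices. It is precisely the winding-number step that breaks down in higher dimensions, where a sphere of fully flooded Delaunay simplices may bound an \emph{unflooded} void, which explains the restriction to $\mathbb{R}^2$ in the statement.
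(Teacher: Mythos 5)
Your route is genuinely different from the paper's (which proves the result by constructing, for each $r$, a sequence of simplicial collapses from $\mathrm{Flood}_r(X,X)$ onto $\mathrm{Alpha}_r(X)$, via a matching of free faces and critical events in the two filtrations), and most of your scaffolding is sound: the sandwich $|\mathrm{Alpha}_r(X)|\subseteq|\mathrm{Flood}_r(X,X)|\subseteq X_r$ is correct, surjectivity of $j_*$ on $\pi_0$ and $\pi_1$ does follow from it, both spaces are aspherical with free fundamental group, and the winding-number observation (a loop null-homotopic in $X_r$ has winding number $0$ about every point of $\mathbb{R}^2\setminus X_r$, hence every open Delaunay triangle with nonzero winding number lies in $X_r$ and is therefore flooded) is valid.

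The gap is in the step you yourself flag as the heart of the argument: from ``every triangle enclosed by $\gamma$ lies in $F$'' you conclude that $\gamma$ is \emph{null-homotopic} in $F$, but what the winding-number computation actually gives is that $\gamma$ is \emph{null-homologous} in $F$ --- it bounds the $2$-chain $\sum_\sigma w_\sigma\,\sigma$ supported on flooded triangles. For a non-simple edge-loop, ``the region bounded by $\gamma$'' (the set of nonzero winding number) being contained in $F$ does not force $\gamma$ to contract in $F$: a commutator loop $aba^{-1}b^{-1}$ around two holes of $F$ has winding number $0$ about both holes, its nonzero-winding region can consist of small lens-shaped pieces entirely inside $F$, and yet the loop is not null-homotopic in $F$. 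Winding numbers only see $H_1$, and for free groups of rank $\geq 2$ the abelianization map has a large kernel, so $H_1$-injectivity of $j_*$ is strictly weaker than the $\pi_1$-injectivity you need. (Only for \emph{simple} closed curves does ``the enclosed disk lies in $F$'' give a genuine null-homotopy, and you cannot assume a kernel element is represented by a simple loop.) The gap is reparable without changing your strategy: your argument does show that $j_*\colon H_1(F)\to H_1(X_r)$ is injective, and the Alpha sandwich shows it is surjective, so the two free groups $\pi_1(F)$ and $\pi_1(X_r)$ have the same finite rank; since $j_*$ is surjective on $\pi_1$ and finitely generated free groups are Hopfian, $j_*$ is then an isomorphism. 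With that one additional invocation (or an equivalent careful correspondence between the bounded complementary components of $F$ and of $X_r$), your planar argument closes and yields a proof that is arguably more elementary than the paper's collapse-based one, though, like the paper's, it is firmly tied to $d=2$.
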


While, at the moment, our proof only covers the case $X \subset \mathbb{R}^2$, we conjecture that similar arguments work for higher dimensions, and we leave the proof for future work. Nevertheless, because homotopy equivalent spaces have the same homology,
\cref{theorem:homotopytypeequiv} directly implies that the persistent homology of $\mathrm{Flood}(X,X)$ equals that of $\mathrm{Alpha}(X)$. Moreover, in combination with \cref{theorem:bottleneckstabilityX} and the classic Hausdorff stability of Čech and Alpha complexes \cite{Cohen-Steiner2007a}, we get the following approximation guarantee.
\begin{corollary}
\label{corollary:alphavsflood}
    Let $L,X\subset \mathbb R^2$ be finite subsets of points in general position.
    Then, it holds that $\forall i \in \mathbb{N}$
    \begin{equation}
        d_B\left(\dgm_i(\mathrm{Alpha}(X)), \dgm_i(\mathrm{Flood}(X,L))\right) \le 2d_H(X,L)
        \enspace.
    \end{equation}
\end{corollary}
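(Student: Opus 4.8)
The plan is to derive the bound from a single application of the triangle inequality for the bottleneck distance $d_B$, routing the comparison through the Alpha complex (equivalently, the Flood complex) built entirely on the landmark set $L$. The crucial structural observation is that \cref{theorem:bottleneckstabilityX} only controls perturbations of the \emph{first} argument of the Flood complex while the landmark set is held fixed. Consequently I cannot directly interleave $\mathrm{Flood}(X,L)$ with $\mathrm{Flood}(X,X)$ (these differ in the second argument); instead I use $\mathrm{Flood}(L,L) \simeq \mathrm{Alpha}(L)$ as a bridge, which lets me invoke the stability theorem with fixed landmarks $L$ and first arguments $L$ and $X$.

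First I would instantiate \cref{theorem:homotopytypeequiv} with the point set taken to be $L$: since $|\mathrm{Flood}_r(L,L)|$ is homotopy equivalent to $L_r$ for every $r \ge 0$, the same reasoning that yields the stated equality $\dgm_i(\mathrm{Flood}(X,X)) = \dgm_i(\mathrm{Alpha}(X))$ gives $\dgm_i(\mathrm{Flood}(L,L)) = \dgm_i(\mathrm{Alpha}(L))$ for all $i$. Next I would split the target distance as
\begin{align}
    d_B\big(\dgm_i(\mathrm{Alpha}(X)), \dgm_i(\mathrm{Flood}(X,L))\big)
    &\le d_B\big(\dgm_i(\mathrm{Alpha}(X)), \dgm_i(\mathrm{Alpha}(L))\big) \notag \\
    &\quad + d_B\big(\dgm_i(\mathrm{Flood}(L,L)), \dgm_i(\mathrm{Flood}(X,L))\big). \notag
\end{align}
The first summand is at most $d_H(X,L)$ by the classical Hausdorff stability of Alpha (equivalently \v{C}ech) complexes \cite{Cohen-Steiner2007a}. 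The second summand is precisely an instance of \cref{theorem:bottleneckstabilityX} with landmark set fixed to $L$ and first arguments $L$ and $X$, so it is at most $d_H(L,X) = d_H(X,L)$. Adding the two bounds gives the claimed $2\,d_H(X,L)$.

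The only genuinely delicate point is the identification $\dgm_i(\mathrm{Flood}(L,L)) = \dgm_i(\mathrm{Alpha}(L))$: \cref{theorem:homotopytypeequiv} supplies a homotopy equivalence at each fixed radius, whereas equality of persistence diagrams requires these equivalences to assemble into an isomorphism of persistence modules, \ie{} to commute with the filtration inclusions $|\mathrm{Flood}_r| \hookrightarrow |\mathrm{Flood}_t|$ and $L_r \hookrightarrow L_t$. Since both $|\mathrm{Flood}_r(L,L)|$ and $|\mathrm{Alpha}_r(L)|$ are subspaces of $L_r$ and the equivalences are realized by these inclusions, the required compatibility is automatic, so both modules are isomorphic to $\{H_i(L_r)\}_r$ and hence to each other. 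Everything else is a routine assembly of the triangle inequality, and I expect no further obstacle.
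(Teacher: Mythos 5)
Your proposal is correct and follows essentially the same route as the paper: a triangle inequality through $\mathrm{Alpha}(L)\simeq\mathrm{Flood}(L,L)$, bounding one leg by classical Hausdorff stability of Alpha complexes and the other by \cref{theorem:bottleneckstabilityX} applied with the landmark set $L$ fixed and first arguments $X$ and $L$. Your explicit remark on why the level-wise homotopy equivalences of \cref{theorem:homotopytypeequiv} assemble into an isomorphism of persistence modules is a point the paper passes over silently, but it does not change the argument.
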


Essentially, if we want $d_B(\mathrm{dgm}_i (\mathrm{Flood}(X,L)), \mathrm{dgm}_i (\mathrm{Alpha}(X))) \le  2 \epsilon$, it suffices to select $L$ such that $d_H(X,L) \le \epsilon$.
In particular, if $L\subset X$, finding the optimal landmark positions is just the metric $k$-center problem, and greedy selection strategies, such as farthest point sampling (FPS), achieve a factor 2-approximation of the optimal covering radius \cite{Hochbaum85a}. Thus, $d_H(X,L) \le \epsilon$ is achieved once  $|L| \ge O(\epsilon^{-D})$ landmarks are selected, where $D$ is the intrinsic dimension of $X$. Similar guarantees hold with high probability if $X$ is a random sample and its first $|L|$ points are used as landmarks \cite{Kulkarni95a}.

Moreover, \cref{corollary:alphavsflood} implies that the persistence diagrams of the Flood complex are stable with respect to the landmarks $L$, albeit with a multiplicative constant of 4. Still, in case $L$ is perturbed only slightly such that its Delaunay triangulation is preserved, then there is a direct proof that $d_B\left(\dgm_i(\mathrm{Flood}(X,L)), \dgm_i(\mathrm{Flood}(X,L'))\right) \le d_H(L,L')$.

\section{Computational aspects}
\label{sec:computational}

We next discuss the computational aspects of constructing the Flood complex. Note that the set of simplices, \ie, a Delaunay triangulation of the landmark set $L$, can be computed in $O(|L|^{\lfloor d/2\rfloor})$ \cite{Chazelle1993}, and efficient implementations can be found in libraries such as \texttt{CGAL} \cite{CGAL} or \texttt{qhull} \cite{qhull}.

\subsection{Approximation}
\label{subsection:approximation}

The definition of the Flood complex entails calculating the filtration value $f(\sigma)$ of each simplex $\sigma\in \mathrm{Del}(X)$. This filtration value $f(\sigma) = \max_{p\in \hull{\sigma}} \min_{x\in X} d(p,x)$ is the minimal radius $r$ such that $\hull{\sigma}\subset X_r$, or equivalently, the directed Hausdorff distance between $\hull{\sigma}$ and $X$.
Since computing the directed Hausdorff distance is a nonconvex problem, finding $f(\sigma)$ would be inefficient. Hence, in our implementation, we replace each convex hull $\hull{\sigma}$ by a finite subset $P_\sigma\subset \hull{\sigma}$, resulting in the simplicial complex 
    \begin{equation}
    \mathrm{Flood}_r(X,L,P) = \{\sigma \in \Del(L) \colon P_\tau \subset X_r \, \forall \tau \subset \sigma \}\enspace.
\end{equation}
Specifically, $\sigma \in \mathrm{Flood}_r(X,L,P)$ iff $P_\sigma$ is flooded and all its faces $\tau\subset \sigma$ are in $ \mathrm{Flood}_r(X,L,P)$. By construction, $\mathrm{Flood}_r(X,L) \subset  \mathrm{Flood}_r(X,L,P)$, because $\hull{\sigma}$ being flooded implies that $P_\sigma$ is flooded.  Moreover, if the (directed) Hausdorff distance between each pair $P_\sigma$ and $\hull\sigma$ satisfies $d_H(P,\hull{\sigma})<\epsilon$, then $\mathrm{Flood}_r(X,L,P) \subset \mathrm{Flood}_{r+\epsilon}(X,L)$. We get the following guarantee.
\begin{theorem}
    \label{thm:stable_random_points}
    Let $X,L \in \mathbb R^d$ and let $P = \{P_\sigma\colon \sigma \in \mathrm{Del}(L)\}$ \text{with} $P_\sigma \subset \hull{\sigma}$ for all $\sigma \in \mathrm{Del}(L)$.
    Then, it holds that $\forall i \in \mathbb{N}$ 
    \begin{equation}
        \label{eq:interleving_rp}
        d_B\left(\dgm_i(\mathrm{Flood}(X,L)), \dgm_i(\mathrm{Flood}(X,L,P))\right) \le \max_{\sigma \in \mathrm{Del}(L)} d_H(P_\sigma, \hull{\sigma})
        \enspace.
    \end{equation}
\end{theorem}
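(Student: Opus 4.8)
The plan is to reduce the bottleneck bound to a one-parameter interleaving of the two filtrations and then invoke the sandwich stability criterion recalled in \cref{section:preliminaries} (from \cite{Cohen-Steiner2007a}). Write $\Sigma_r := \mathrm{Flood}_r(X,L)$ and $\Sigma'_r := \mathrm{Flood}_r(X,L,P)$, and set $\epsilon := \max_{\sigma \in \mathrm{Del}(L)} d_H(P_\sigma, \hull{\sigma})$. It suffices to establish the two inclusions $\Sigma_{r-\epsilon} \subset \Sigma'_r \subset \Sigma_{r+\epsilon}$ for all $r \ge 0$; the claimed bound $d_B(\dgm_i(\Sigma), \dgm_i(\Sigma')) \le \epsilon$ then follows immediately. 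Because both filtrations are monotone, the left inclusion reduces to $\Sigma_r \subset \Sigma'_r$, so the whole argument splits into one easy and one substantive inclusion.

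For the easy inclusion $\Sigma_r \subset \Sigma'_r$, I would take $\sigma$ with $\hull{\sigma} \subset X_r$ and observe that every face $\tau \subset \sigma$ satisfies $\hull{\tau} \subset \hull{\sigma} \subset X_r$, whence $P_\tau \subset \hull{\tau} \subset X_r$; by the definition of $\mathrm{Flood}_r(X,L,P)$ this places $\sigma$ in $\Sigma'_r$. This inclusion is already noted in the text preceding the statement, and I would only restate it for completeness.

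The substantive step is $\Sigma'_r \subset \Sigma_{r+\epsilon}$. Here I would argue through the filtration value $f(\sigma) = \max_{p \in \hull{\sigma}} \min_{x \in X} d(p,x)$, the minimal radius at which $\hull{\sigma} \subset X_r$. Fix $\sigma \in \Sigma'_r$ and any $p \in \hull{\sigma}$. Since $P_\sigma \subset \hull{\sigma}$, the directed Hausdorff distance from $\hull{\sigma}$ to $P_\sigma$ is at most $\epsilon$, so there is $q \in P_\sigma$ with $d(p,q) \le \epsilon$. Membership $\sigma \in \Sigma'_r$ forces $P_\sigma \subset X_r$ (the full simplex is a face of itself), hence some $x \in X$ has $d(q,x) \le r$. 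The triangle inequality gives $d(p,x) \le r + \epsilon$, and maximizing over $p$ yields $f(\sigma) \le r + \epsilon$, i.e. $\hull{\sigma} \subset X_{r+\epsilon}$; closure under faces of the Flood complex then guarantees $\sigma \in \Sigma_{r+\epsilon}$.

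The main obstacle — really the only subtlety — is the bookkeeping around the face condition in the definition of $\mathrm{Flood}_r(X,L,P)$: the entry radius of $\sigma$ there is governed by its worst-covered face, not by $P_\sigma$ alone, so I must ensure the interleaving constant is a single $\epsilon$ uniform over all simplices. Taking $\epsilon$ as the maximum over $\mathrm{Del}(L)$ of the per-simplex Hausdorff distances handles this cleanly, since the covering bound $d(p,q) \le \epsilon$ holds simplex-wise and $P_\sigma \subset X_r$ follows from $\sigma$'s own membership. A secondary point to verify is that, because $P_\sigma \subset \hull{\sigma}$, only one direction of the Hausdorff distance is nonzero, so $d_H(P_\sigma, \hull{\sigma})$ is exactly the covering radius used in the triangle-inequality step, regardless of whether $d_H$ is read as directed or symmetric. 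With both inclusions in hand, the sandwich criterion closes the argument.
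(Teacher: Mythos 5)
Your proposal is correct and follows essentially the same route as the paper: both establish the trivial inclusion $\mathrm{Flood}_r(X,L)\subset\mathrm{Flood}_r(X,L,P)$ and then prove $\mathrm{Flood}_r(X,L,P)\subset\mathrm{Flood}_{r+\epsilon}(X,L)$ via the same covering-plus-triangle-inequality argument, before invoking the interleaving stability result of \cite{Cohen-Steiner2007a}. Your handling of $\epsilon$ as the exact maximum (and your remark that $P_\sigma\subset\hull{\sigma}$ makes the directed and symmetric Hausdorff distances coincide) is, if anything, slightly cleaner than the paper's phrasing.
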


We can select the sets $P_\sigma$ in various ways, \eg, by 
explicitly enforcing a small Hausdorff distance to $\hull{\sigma}$, taking a random sample, or based on an evenly spaced grid of barycentric coordinates. 

\begin{lemma} \label{lem:covering1}
    Given $m\in \mathbb N$ and a $k$-simplex $\sigma = \{v_0,\dots,v_k\}$, let $P_\sigma = \{p=\sum_{i=0}^k \lambda_i v_i \colon \sum_{i=0}^k \lambda_i = 1, m \lambda_i \in \mathbb N \} \subset \hull{\sigma}$ be a grid of simplex points induced by evenly spaced barycentric coordinates.
    Then, it holds that
    $$d_H(P_{\sigma}, \hull{\sigma}) \le \frac 1 m \sqrt{\sum_{i<j} \Vert v_i - v_j \Vert^2} 
    % \le  \sqrt{\binom{k}{2}} \frac{\operatorname{diam}(\sigma)}{m}
    \enspace.$$
\end{lemma}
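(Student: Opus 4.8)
The plan is to bound, for an \emph{arbitrary} point $q \in \hull{\sigma}$, its distance to a suitably chosen grid point $p \in P_\sigma$; taking the supremum over $q$ then yields the directed Hausdorff bound, since the reverse direction is trivially $0$ (as $P_\sigma \subset \hull{\sigma}$). Writing $q = \sum_{i=0}^k \mu_i v_i$ in barycentric coordinates ($\mu_i \ge 0$, $\sum_i \mu_i = 1$), I would construct $p = \sum_i \lambda_i v_i$ with $\lambda_i = n_i/m$ by rounding the reals $a_i := m\mu_i$, which satisfy $\sum_i a_i = m$, to integers $n_i$.

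The first step is the rounding. I would set $n_i \in \{\lfloor a_i\rfloor, \lceil a_i\rceil\}$ by rounding every $a_i$ down and then promoting exactly $t := m - \sum_i \lfloor a_i\rfloor$ of them upward (e.g.\ those with the largest fractional parts). Since $\sum_i a_i = m$ is an integer, $t = \sum_i (a_i - \lfloor a_i\rfloor)$ is a non-negative integer strictly below $k+1$, so this promotion is feasible and yields $\sum_i n_i = m$, hence $p \in P_\sigma$. Crucially, each $\delta_i := \mu_i - \lambda_i = (a_i - n_i)/m$ satisfies $|\delta_i| < 1/m$ (a coordinate rounded down gives $\delta_i \in [0,1/m)$, one rounded up gives $\delta_i \in (-1/m,0)$), and $\sum_i \delta_i = 0$.

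The second step is the key algebraic identity. Because $\sum_i \delta_i = 0$, the combination $\sum_i \delta_i v_i$ is translation invariant, so expanding the square and using $\sum_{i<j}\delta_i\delta_j = -\tfrac12\sum_i\delta_i^2$ gives
\[
\Big\| \sum_i \delta_i v_i \Big\|^2 = -\sum_{i<j}\delta_i\delta_j\,\|v_i-v_j\|^2 ,
\]
an expression depending only on the pairwise differences $v_i - v_j$. Since $q - p = \sum_i \delta_i v_i$, I would then bound each coefficient: when $\delta_i,\delta_j$ share a sign $-\delta_i\delta_j \le 0$, and otherwise $-\delta_i\delta_j = |\delta_i|\,|\delta_j| < 1/m^2$, so in every case $-\delta_i\delta_j \le 1/m^2$. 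Using $\|v_i-v_j\|^2 \ge 0$ termwise,
\[
\|q-p\|^2 = -\sum_{i<j}\delta_i\delta_j\,\|v_i-v_j\|^2 \le \frac{1}{m^2}\sum_{i<j}\|v_i-v_j\|^2 .
\]
Taking square roots and then the supremum over $q \in \hull{\sigma}$ delivers the claim.

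I expect the main obstacle to be the rounding step: one must round the barycentric coordinates while simultaneously (i) preserving the affine constraint $\sum_i n_i = m$ so that $p$ genuinely lands in $P_\sigma$, and (ii) keeping every coordinate error strictly below $1/m$. The standard apportionment argument (round down, then promote the largest fractional parts) settles both at once. The quadratic identity is the complementary ingredient that makes the final bound come out precisely as $\tfrac1m\sqrt{\sum_{i<j}\|v_i-v_j\|^2}$, rather than a coarser quantity such as the simplex diameter.
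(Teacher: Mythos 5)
Your proposal is correct and follows essentially the same route as the paper: round the barycentric coordinates of an arbitrary $q\in\hull{\sigma}$ to a grid point $p$ with coordinate errors $\delta_i$ satisfying $|\delta_i|\le \nicefrac 1 m$ and $\sum_i\delta_i=0$, then use the translation-invariant identity $\Vert\sum_i\delta_i v_i\Vert^2=-\sum_{i<j}\delta_i\delta_j\Vert v_i-v_j\Vert^2$ and bound each term by $\nicefrac{1}{m^2}$. The only difference is that you spell out the apportionment-style rounding (round down, promote the largest fractional parts) that the paper leaves as an assertion, which is a welcome addition rather than a deviation.
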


Notably, the bound decays with $\nicefrac{1}{m}$\ , where $m+1$ is the number of grid points on each edge. The shape of the simplices enters in form of the square root term, which scales with $O(\operatorname{diam}(\sigma))$. A similar scaling behavior can be observed when $P_\sigma$ is selected randomly, see \cref{lem:covering}.

Thanks to the finite set $P_\sigma$, the optimization problem can be solved directly by iterating over the points in $P_\sigma$ and $X$.
However, the number of points in $P_\sigma$ required for a good approximation is large, so this approach does not scale to large point clouds if implemented naively.
As the problem entails finding the nearest neighbor in $X$ for each point in $P_\sigma$, it can be efficiently solved using a data structure such as a $k$-d tree \cite{kdtree} or a cover tree \cite{covertree}. Building such a data structure can, however, incur significant computational overhead. To avoid this issue, we compute the directed Hausdorff distance using a custom GPU algorithm, paired with a principled \emph{masking} procedure, which is presented below.

\subsection{Masking and GPU acceleration for the flooding process}
\label{subsection:filtering}

Since the filtration value $\max_{p\in P_\sigma} \min_{x\in X} d(p,x)$ of the simplex $\sigma$ depends only on a single pair of points $(p,x)$, we want to ignore points $x\in X$ that are far from $\sigma$. In fact, if $L\subset X$, then we can precompute for each simplex $\sigma$ a subset of $X$ that must be considered.

\begin{lemma} \label{lem:filter}
    Let $X\subset \mathbb R^d$ be a finite set in general position, and let $\sigma\subset X$ be a set of $k\le d+1$ points. If $c \in \mathbb{R}^d$ and $r>0$ are such that $\sigma \subset B_r(c)$, \ie, such that $B_r(c)$ is an enclosing ball of $\sigma$, then
    % Let $B(\sigma) = B_{\sqrt 2 r}(c)$. Then,
    \begin{equation}
        f(\sigma) = \max_{p\in \hull{\sigma}} \min\{ d(p, x)\colon x \in X\}  = \max_{p\in \hull{\sigma}} \min \left\{d(p, x)\colon x \in  X  \cap B_{\sqrt 2 r}(c)\right\} \enspace.
    \end{equation} 
\end{lemma}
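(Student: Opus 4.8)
The plan is to show, for each fixed $p \in \hull{\sigma}$, that a nearest point of $X$ to $p$ may always be chosen inside $B_{\sqrt 2 r}(c)$; taking the maximum over $p$ then yields the claimed identity. One inequality is immediate: since $X \cap B_{\sqrt 2 r}(c) \subseteq X$, the inner minimum over the smaller set can only be larger, so $\max_{p} \min_{x \in X} d(p,x) \le \max_{p} \min_{x \in X \cap B_{\sqrt 2 r}(c)} d(p,x)$. For the reverse, I would fix $p$ and argue that every $x \in X$ with $d(c,x) > \sqrt 2 r$ is strictly farther from $p$ than some vertex of $\sigma$. Since $\sigma \subseteq X$ and $\sigma \subset B_r(c) \subseteq B_{\sqrt 2 r}(c)$, such a vertex already lies in the restricted set, so no point outside $B_{\sqrt 2 r}(c)$ can be the nearest neighbour of $p$, giving $\min_{x \in X} d(p,x) = \min_{x \in X \cap B_{\sqrt 2 r}(c)} d(p,x)$.

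The heart of the argument is a \emph{nearby-vertex} estimate. Assuming WLOG $c = 0$ and writing $\delta := \|p\| = d(p,c) \le r$ (the inclusion $\hull{\sigma} \subset B_r(c)$ follows from convexity of the ball), I would represent $p = \sum_i \lambda_i v_i$ as a convex combination of the vertices $v_i \in \sigma$ and project onto the unit direction $\hat u := p/\delta$. Since $\sum_i \lambda_i \langle v_i, \hat u\rangle = \langle p, \hat u\rangle = \delta$ and the $\lambda_i$ form a probability vector, some vertex $v_j$ satisfies $\langle v_j, \hat u\rangle \ge \delta$. Expanding $\|p - v_j\|^2 = \|v_j\|^2 - 2\delta\langle v_j, \hat u\rangle + \delta^2$ and using $\|v_j\| \le r$ then yields $\min_{v \in \sigma} d(p,v) \le \|p - v_j\| \le \sqrt{r^2 - \delta^2}$. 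Geometrically, this captures the intuition that a point of $\hull{\sigma}$ near the boundary sphere of the enclosing ball must be close to a vertex.

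It then remains to compare this with the distance to a far point. For $x$ with $d(c,x) > \sqrt 2 r$, the triangle inequality gives $d(p,x) \ge d(c,x) - \delta > \sqrt 2 r - \delta$, and the elementary identity $(\sqrt 2 r - \delta)^2 - (r^2 - \delta^2) = (\sqrt 2 \delta - r)^2 \ge 0$ shows $\sqrt 2 r - \delta \ge \sqrt{r^2 - \delta^2}$. Combining, $d(p,x) > \sqrt{r^2 - \delta^2} \ge \min_{v \in \sigma} d(p,v)$, which is precisely the required strict inequality. I expect the main obstacle to be exactly this constant-pinning step: a naive triangle-inequality bound only yields a radius like $2r$, and it is the interplay between the nearby-vertex estimate $\sqrt{r^2 - \delta^2}$ and the factorization $(\sqrt 2 \delta - r)^2$ — tight at $\delta = r/\sqrt 2$ — that forces the sharp constant $\sqrt 2$. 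General position and $k \le d+1$ enter only to ensure that $\sigma$ is a genuine simplex with a well-defined enclosing ball; the geometric estimate itself holds for any finite $\sigma \subset B_r(c)$.
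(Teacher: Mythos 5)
Your proof is correct and takes essentially the same route as the paper's: both arguments reduce to the observation that, since $\sigma\subset X$, the nearest neighbour of $p$ is no farther than the nearest vertex, and that this vertex distance is bounded by $\sqrt{r^2-d(p,c)^2}$, which pins the constant $\sqrt 2$ via the same optimization ($s+\sqrt{r^2-s^2}\le\sqrt 2 r$, equivalently your factorization $(\sqrt 2\,\delta-r)^2\ge 0$). The only difference is cosmetic: you obtain the nearby-vertex estimate by averaging the projections $\langle v_i,\hat u\rangle$ onto the direction of $p-c$, whereas the paper averages squared distances via the identity $\sum_i\lambda_i\|v_i-c\|^2=\|p-c\|^2+\sum_i\lambda_i\|v_i-p\|^2$; both yield the identical bound (modulo the trivial case $p=c$, where your unit vector $\hat u$ is undefined but the estimate holds anyway).
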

In practice, we compute the enclosing balls $B_r(c)$ using a variation of Ritter's heuristic \cite{ritter}, \ie, we set $c$ as the center of the longest edge of $\sigma$ and use radius \smash{$\sqrt{2} \max_{i=1}^k d(c, v_i)$}. 
This can be done in $O(d^2)$ time and is highly parallelizable on GPUs. In case $\sigma$ is itself an edge, it suffices to set the radius to half its diameter, \ie, without the factor $\sqrt 2$.

Once centers and radii have been computed for all simplices  $\sigma \in \mathrm{Del}(L)$, checking whether the points in $X$ belong to \smash{$B_{\sqrt 2 r}(c)$} can be done with $|\mathrm{Del}(L)| \cdot |X|$ distance evaluations, 
yielding a Boolean \emph{masking} matrix of shape $|\mathrm{Del}(L)|\times|X|$.
Since the mask can be computed independently for each (simplex, point) pair, computation of the masking matrix can be efficiently parallelized.
Eventually, the masking matrix together with the points in $X$ and the sets $P_\sigma$ are fed to a custom Triton \cite{triton} kernel, which computes the directed Hausdorff distance; for specifics of this kernel, see \cref{sec:app:triton}.

\vspace{-0.1cm}
\section{Experiments}
\label{section:experiments}

We validate the applicability and relevance of the Flood complex as follows: in \cref{sec:exp-largescale}, we use it to compute \gls{ph} on point clouds for which existing approaches require an impractical amount of computational resources; in \cref{sec:exp-scalability}, we study the scalability of the Flood complex, and, in \cref{sec:exp-ml}, we show that \gls{ph} computed on the Flood complex (in short, Flood PH) improves predictions in downstream machine learning tasks compared to simpler approaches such as subsampling. 

In our experiments, we often compare the Flood complex and its \gls{ph} to the Alpha complex on the entire point cloud and to the Alpha complex on a subset of size selected so that runtime is similar to Flood PH at the given number of landmarks and discretization level of simplices. Specifically, we discretize to 30 points per edge in \cref{sec:exp-largescale,sec:exp-scalability} and 20 points per edge in \cref{sec:exp-ml}. We denote the former by Alpha PH (full) and the latter by Alpha$^\dagger$ PH. For \gls{ph} computation and the construction of Alpha complexes, we use \texttt{Gudhi} \cite{gudhi}.

\subsection{Persistent homology on large-scale point clouds} \label{sec:exp-largescale} 

First, we showcase the approximation capabilities of the Flood complex on two exemplary real-world point clouds from different scientific disciplines, one (12M points) based on a cryo-electron microscopy of a rhinovirus (RV-A89) from \cite{Wald2024}, the other (10M points) based on a 3D-scan of a Leptoseris paschalensis coral from the \emph{Smithsonian 3D Digitization} initiative. Additional information about the point clouds can be found in \cref{sec:app:datasets}.
We compute zero-, one-, and two-dimensional \gls{ph} using the Alpha complex, which can be thought of as the \emph{ground truth}. Notably, this requires more than 15 minutes of runtime per point cloud and more than $90$ GB of RAM. We then compare its \gls{ph} to that of the Flood complex and to that of the Alpha complex computed on a subsample.

\begin{figure}[t!]
    \centering
    \includegraphics[width=0.95\linewidth]{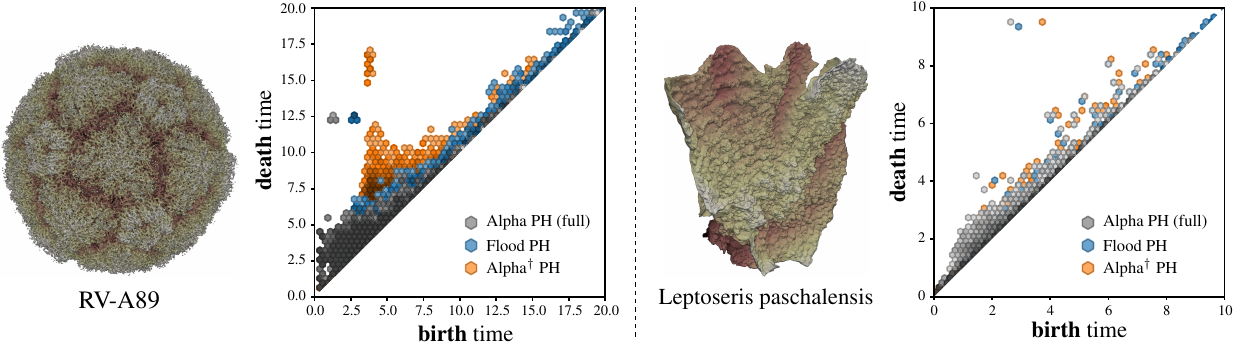}
    \caption{Exemplary hexbin plots of persistence diagrams of RV-A89 (\textbf{left}) and the Leptoseris paschalensis coral (\textbf{right}). Gray corresponds to Alpha \gls{ph} of the full point cloud, blue to Flood PH with 10k landmarks and orange to Alpha$^\dagger$ PH with 75k points. Point clouds are visualized via small spheres colored by distance to their bounding box center (for RV-A89) or by elevation (for the Leptoseris paschalensis coral). Best viewed in color.
    \label{fig:large-scale}}
    \vspace{-0.2cm}
\end{figure}

A visual inspection of the persistence diagrams, see \cref{fig:large-scale}, shows that Flood PH mitigates the 
shift of (birth, death)-tuples along the diagonal (that is characteristic for subsampling methods), resulting in a better approximation of birth and death times compared to Alpha PH. 
Moreover, the persistent $H_1$ features of the RV-A89 virus become scattered for Alpha$^\dagger$ \gls{ph} but remain close together for Flood \gls{ph}.

For a quantitative comparison between Flood PH and Alpha PH on subsamples, we consider bottleneck distances to Alpha PH (full) on the RV-A89 point cloud. Results are presented in \cref{fig:bottlneck_dists}; for results on the Leptoseris coral, we refer to \cref{sec:large-scale-appendix}. As expected, the approximation quality of Flood PH improves with the number of landmarks. Notably, the randomness in the landmarks, caused by different starting points for FPS, mainly affects $H_2$. Beyond 500 landmarks, bottleneck distances are significantly smaller in dimensions 0 and 1, and similar in dimension 2, when comparing Flood PH to Alpha PH at the same runtime budget. A comparison between landmark selection using farthest point and uniform random sampling is reported in \cref{sec:fps_vs_unif}.

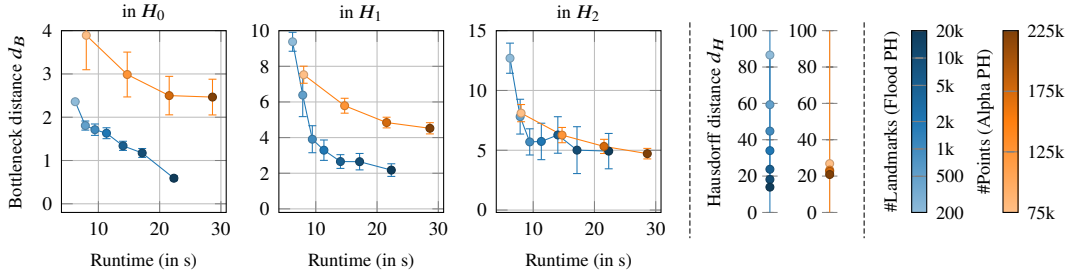
\begin{figure}[t!]
    \centering
\pgfplotstableread[col sep=comma]{new_images/bottleneck_dists/coral_flood.csv}\floodcoraltable
\pgfplotstableread[col sep=comma]{new_images/bottleneck_dists/virus_flood.csv}\floodvirustable
\pgfplotstableread[col sep=comma]{new_images/bottleneck_dists/coral_alpha.csv}\alphacoraltable
\pgfplotstableread[col sep=comma]{new_images/bottleneck_dists/virus_alpha.csv}\alphavirustable

\def\pgffigheight{4cm}
\pgfplotsset{
  % fonts (as you had)
  every axis/.append style={
    font=\scriptsize,
    title style={font=\scriptsize},
    label style={font=\scriptsize},
    tick label style={font=\scriptsize},
    legend style={font=\scriptsize},
    % make legend dots small as well:
    legend image post style={mark size=1pt},
  },
  % shrink markers for all plots:
  every axis plot/.append style={mark size=1.5pt},
}
\pgfplotsset{
    alphastyle/.style={
        scatter,
        mark=*,
        scatter src=explicit,
        colormap name = taborange_shades,%/viridis,
        color = taborange,
        error bars/.cd,
            y dir=both,
            y explicit,
            error bar style={color=taborange},
    }
}
\pgfplotsset{
   floodstyle/.style={
        scatter,
        mark=*,
        scatter src=explicit,
        colormap name = tabblue_shades,%/viridis,
        color=tabblue,
        error bars/.cd, 
            y dir=both, 
            y explicit,
            error bar style={color=tabblue},
    }
}
\pgfplotsset{
  colormap={tabblue_shades}{%
    color(0cm)=(tabblue!50!white)
    color(.5cm)=(tabblue)
    color(1cm)=(tabblue!50!black)
  }
}
\pgfplotsset{
  colormap={taborange_shades}{%
    color(0cm)=(taborange!50!white)
    color(0.5cm)=(taborange) 
    color(1cm)=(taborange!50!black)
  }
}

\begin{tikzpicture}

\begin{groupplot}[
    group style={
        group size=10 by 1,   % 3 plots in 1 row
        horizontal sep=.7cm,% space between plots
        % vertical sep=2cm,    % (for multi-row groups)
    },
    height=\pgffigheight,
    grid=major,
    title style={
            yshift=-.2cm               % shift left
        }
]
% ---------- First plot ----------
\nextgroupplot[
    width=0.27\textwidth,
    xlabel={Runtime (in s)},
    ylabel={Bottleneck distance $d_B$},
    ymin=-.2, ymax=4,
    % legend style={at={(0.25,.975)}, anchor=north, legend columns=1},
    % legend image post style={only marks},
    title={in $H_0$},
]

\addplot+[
    floodstyle,
] table[x=t, y=db0m, y error=db0s, meta=meta] {\floodvirustable};
\addplot+[
    alphastyle,
] table[x=t, y=db0m, y error=db0s, meta=meta] {\alphavirustable};

\nextgroupplot[
    width=0.27\textwidth,
    xlabel={Runtime (in s)},
    ymin=-.2, ymax=10,
    % ylabel={Bottleneck Distance},
    % legend style={at={(0.25,.975)}, anchor=north, legend columns=1},
    % legend image post style={only marks},
    title={in $H_1$},
]
\addplot+[
    floodstyle,
] table[x=t, y=db1m, y error=db1s, meta=meta] {\floodvirustable};
\addplot+[
    alphastyle,
] table[x=t, y=db1m, y error=db1s, meta=meta] {\alphavirustable};

\nextgroupplot[
    width=0.27\textwidth,
    xlabel={Runtime (in s)},
    ymin=-.2, ymax=15,
    % ylabel={Bottleneck Distance},
    % legend style={at={(0.25,.975)}, anchor=north, legend columns=1},
    % legend image post style={only marks},
    title={in $H_2$},
]
\addplot+[
    floodstyle,
] table[x=t, y=db2m, y error=db2s, meta=meta] {\floodvirustable};
\addplot+[
    alphastyle,
] table[x=t, y=db2m, y error=db2s, meta=meta] {\alphavirustable};

\nextgroupplot[hide axis, width=0pt, height=0pt, scale only axis]

\nextgroupplot[
    width=0.12\textwidth,
    x axis line style={draw=none}, % hide axis line
    x tick style={draw=none},   % hides major tick lines
    xtick=\empty,               % removes major ticks
    xticklabels=\empty,          % removes labels
    minor x tick num=0,          % removes minor tick subdivisions
    xlabel={\phantom{Runtime (in s)}},
    ylabel={Hausdorff distance $d_H$},
    ymin =0, ymax= 100,
    ylabel style = {yshift =-5pt},
]
\addplot+[
    floodstyle,
] table[x expr=0, y=dhm, meta=log_lms] {\floodvirustable};
\addplot+[
    color = tabblue,
    no marks,    
] table[row sep=\\] {
x y \\
0 0 \\
0 100 \\
};

\nextgroupplot[
    width=0.12\textwidth,
    % legend style={at={(0.25,.975)}, anchor=north, legend columns=1},
    % legend image post style={only marks},
    x axis line style={draw=none}, % hide axis line
    x tick style={draw=none},   % hides major tick lines
    xtick={0},               % removes major ticks
    xticklabels={\phantom{0}},          % removes labels
    minor x tick num=0,          % removes minor tick subdivisions
    xlabel={\phantom{Runtime (in s)}},
    title={\phantom{$H_0$}},
    ymin = 0, ymax = 100,
]

\addplot+[
    alphastyle,
] table[x expr=0, y=dhm, meta=lms] {\alphavirustable};
\addplot+[
    color = taborange,
    no marks,    
] table[row sep=\\] {
x y \\
0 0 \\
0 100 \\
};

\nextgroupplot[
    hide axis,
    width=0.15\textwidth,
    point meta min = 5.29831736654804,
    point meta max = 9.90348755253613,    
    colormap name = tabblue_shades,%/viridis,
    colorbar,
    colorbar style={
        ytick={
            5.29831736654804,
            6.21460809842219,
            6.90775527898214,
            7.60090245954208,
            8.51719319141624,
            9.21034037197618,
            9.90348755253613
        },
        yticklabels = {
            200,
            500,
            1k,
            2k,
            5k,
            10k,
            20k
            },
        width=0.2cm, 
        yticklabel style={/pgf/number format/fixed},
        ylabel={\#Landmarks (Flood PH)},
        ylabel style={
            yshift=1.3cm               % shift left
        }
    },
]

% \nextgroupplot[hide axis, width=0pt, height=0pt, scale only axis]

\nextgroupplot[
    width=0.15\textwidth,
    hide axis,
    colormap name = taborange_shades,%/viridis,
    point meta min=75000,
    point meta max=225000,
    colorbar,
    colorbar style={
    scaled y ticks=false,
    ytick={
            75000,125000,175000,225000
        },
        yticklabels = {
            75k,
            125k,
            175k,
            225k
            },
        width=0.2cm, 
        yticklabel style={/pgf/number format/fixed},
        ylabel={\#Points (Alpha PH)},
        ylabel style={
            yshift=1.4cm               % shift left
        }
    },
]

\end{groupplot}

\path (group c3r1.east) -- (group c5r1.west)
      node[midway] (midline) {};
\draw[dash pattern=on 2pt off 1pt]
  ([yshift=-10pt, xshift=-9pt]midline |- current bounding box.north)
  -- ([yshift=15pt, xshift=-9pt]midline |- current bounding box.south);
\draw[dash pattern=on 2pt off 1pt]
  ([yshift=-10pt, xshift=2cm]midline |- current bounding box.north)
  -- ([yshift=15pt, xshift=2cm]midline |- current bounding box.south);
\end{tikzpicture}
    \vspace{-0.1cm}
\caption{\label{fig:bottlneck_dists} \emph{Approximation quality} of Flood PH and Alpha PH on RV-A89. The \textbf{(left)} panel shows bottleneck distances to Alpha PH (full) in $H_0$, $H_1$ and $H_2$ when varying the number of landmarks for Flood PH and the subsample size for Alpha PH. The (\textbf{middle}) panel shows the Hausdorff distance between the full point cloud and the landmarks,  resp., the points in the subsample. The (\textbf{right}) panel shows the color coding used in all the plots.}
    \vspace{-0.3cm}
\end{figure}

\vspace{-0.2cm}
\subsection{Runtime and scalability} \label{sec:exp-scalability}

\begin{wraptable}{r}{0.50\textwidth}
    \centering
    \vspace{-0.455cm}
\caption{\label{table:runtime_breakdown}Relative runtime breakdown (in \%) of Flood PH on \emph{one} point cloud from two datasets.}
    \vspace{-0.1cm}
    \begin{small}
    \begin{tabular}{lrrr}
    \toprule
     & \texttt{swisscheese} & \multicolumn{2}{c}{RV-A89} \\
     \midrule
     \#Points $|X|$ & 1M & 12M & 12M \\
     \#Landmarks $|L|$ &2k &2k &10k \\
    \midrule\midrule
    Landmark select.
        & 13.3   
        & 15.8  
        & 18.1  \\
    Delaunay triang. 
        & \phantom{0}6.0  
        & \phantom{0}0.8  
        & 2.9  \\
    Masking   
        & \phantom{0}3.7  
        & \phantom{0}3.2  
        & \phantom{0}7.7  \\
    Filtration
        & 69.5   
        & 79.3  
        & 68.2  \\
    \gls{ph} computation 
        & \phantom{0}3.0   
        & \phantom{0}0.2  
        & \phantom{0}1.0  \\
    Other 
        & \phantom{0}4.4  
        & \phantom{0}0.6  
        & \phantom{0}2.1  \\
    %\midrule
    %Total runtime\\
    \bottomrule
    \end{tabular}
    \end{small}
    \vspace{-0.2cm}
\end{wraptable}
\textbf{Breakdown of runtimes}.
In \cref{table:runtime_breakdown}, we report the runtime share of different parts of Flood PH. Specifically, we consider one point cloud from the \texttt{swisscheese} dataset (1M points) and one from the RV-A89 (12M points), see \cref{sec:exp-largescale}. 
We compute filtration values for each simplex from a grid with 30 points per edge and use FPS for landmark selection (2k and 10k). Although the (relative) runtimes inevitably depend on the configuration of the Flood complex and the shape and size of the point clouds, several trends can be observed:  
the majority of time is spent computing filtration values, followed by landmark selection and masking; the runtime of the remaining parts, \ie, triangulating the landmarks, \gls{ph} computation and other overhead (\eg, computing grid points and enclosing balls of simplices) mainly depends on the number of landmarks. On RV-A89 and $|L|=\text{2k}$, this overhead is negligible.

\textbf{Scalability}. We assess the scaling behavior of the Flood complex on point clouds constructed in the same manner as the \texttt{swisscheese} dataset. Specifically, we study its scaling  \wrt (a) the number $|X|$ of points in the point cloud, (b) the number $|L|$ of landmarks, and (c) the ambient dimension.  We always compare to Alpha PH computed from subsamples of the same size as $|X|$. As can be seen in \cref{fig:scale-data}~(a), both methods exhibit an increase in runtime with $|X|$. However, while for $|X|<10^4$, the Flood PH runtime is similar to Alpha PH (likely due to the overhead of landmark selection), for larger point clouds, Flood PH requires consistently one to two orders of magnitude less time than Alpha PH. \cref{fig:scale-data}~(b) presents the runtime of Flood \gls{ph} as a function of $|L|$ (with $|X| = \text{1M}$ fixed). In general, we observe a linear scaling behavior that is aligned with the typically linear growth \cite{Erickson03a} in the number of simplices in the Delaunay triangulation of $L$. Finally, \cref{fig:scale-data} (c) shows the impact of the ambient dimensionality $d \in \{2, 3, 4, 5\}$ when keeping $|X| = \text{1M}$ and $|L|=\text{1k}$ fixed. Here, runtimes increase with $d$ for both methods, as the number of simplices in the Delaunay triangulations grows. 
Notably, Flood PH consistently remains at least one order of magnitude faster than Alpha PH. 

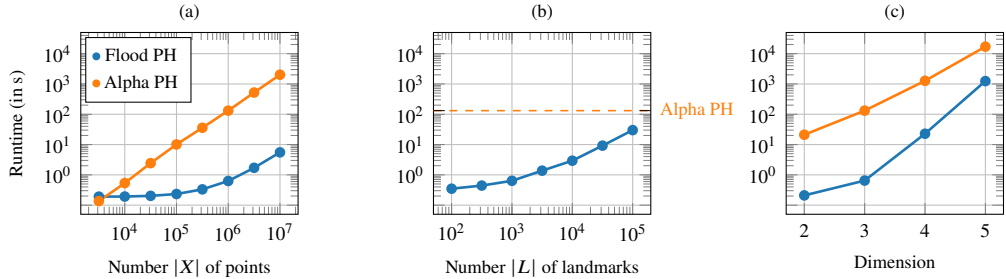
\begin{figure}[t!]
    \centering
    \pgfplotstableread[col sep=comma]{new_images/Figure4/flood_pts.csv}\floodtablepts
\pgfplotstableread[col sep=comma]{new_images/Figure4/flood_lms.csv}\floodtablelms
\pgfplotstableread[col sep=comma]{new_images/Figure4/flood_dim.csv}\floodtabledim

\pgfplotstableread[col sep=comma]{new_images/Figure4/alpha_pts.csv}\alphatablepts
\pgfplotstableread[col sep=comma]{new_images/Figure4/alpha_lms.csv}\alphatablelms
\pgfplotstableread[col sep=comma]{new_images/Figure4/alpha_dim.csv}\alphatabledim
\def\pgffigheight{4cm}

\begin{tikzpicture}
\pgfplotsset{
  % fonts (as you had)
  every axis/.append style={
    font=\scriptsize,
    title style={font=\scriptsize},
    label style={font=\scriptsize},
    tick label style={font=\scriptsize},
    legend style={font=\scriptsize},
    % make legend dots small as well:
    legend image post style={mark size=1pt},
  },
  % shrink markers for all plots:
  every axis plot/.append style={mark size=1.5pt},
}
\begin{groupplot}[
    group style={
        group size=3 by 1,   % 3 plots in 1 row
        horizontal sep=1.8cm,% space between plots
        vertical sep=2cm,    % (for multi-row groups)
    },
    width=0.32\textwidth,
    height=\pgffigheight,
    grid=major,
    xmode=log,
    ymode=log,
    log basis x=10,
    log basis y=10,
    ymin=0.05, ymax=5e4,
    ytick={10^-1, 10^0, 10^1, 10^2, 10^3,10^4},
    yticklabels={, $10^{0}$, $10^{1}$, $10^{2}$, $10^{3}$, $10^{4}$},
    minor ytick={
        0.2,0.3,0.4,0.5,0.6,0.7,0.8,0.9,
        2,3,4,5,6,7,8,9,
        20,30,40,50,60,70,80,90,
        200,300,400,500,600,700,800,900,
        2000,3000,4000,5000,6000,7000,8000,9000,
        20000,30000,40000,50000,60000,70000,80000,90000
    },
    title style={yshift=-5pt},
]
% ---------- First plot ----------
\nextgroupplot[
    xlabel={Number $|X|$ of points},
    ylabel={Runtime (in s)},
    legend style={at={(0.265,.975)}, anchor=north, legend columns=1},
    legend image post style={only marks},
    title={(a)},
    % ymin=0.1, ymax=3e3,
    xtick={10^4,10^5,10^6,10^7},
    minor xtick={
        0.2,0.3,0.4,0.5,0.6,0.7,0.8,0.9,
        2,3,4,5,6,7,8,9,
        20,30,40,50,60,70,80,90,
        200,300,400,500,600,700,800,900,
        2000,3000,4000,5000,6000,7000,8000,9000,
        20000,30000,40000,50000,60000,70000,80000,90000,
        200000,300000,400000,500000,600000,700000,800000,900000,
        2000000,3000000,4000000,5000000,6000000,7000000,8000000,9000000,
        20000000,30000000,40000000,50000000,60000000,70000000,80000000,90000000
    }
]
\addplot+[
    mark=*,
    mark options={draw=tabblue, fill=tabblue},
    color=tabblue,
    line width=1,
] table[x=Points, y=Total] {\floodtablepts};
\addlegendentry{Flood PH};
\addplot+[
    mark=*,
    mark options={draw=taborange, fill=taborange},
    color=taborange,
    line width=1,
] table[x=Points, y=Total] {\alphatablepts};
\addlegendentry{Alpha PH};

% ---------- Second plot ----------
\nextgroupplot[
    xlabel={Number $|L|$ of landmarks},
    % ymin=0.1, ymax=3e3,
    title={(b)},
    extra y ticks={131.66},   
    extra y tick labels={\scriptsize{\textcolor{taborange}{Alpha PH}}},        
    extra y tick style={
      grid style={dashed, taborange, line width=0.6},
      tick style=black,   
      yticklabel pos=right,
    },
]
\addplot+[
    mark=*,
    mark options={draw=tabblue, fill=tabblue},
    color=tabblue,
    line width=1,
] table[x=Landmarks, y=Total] {\floodtablelms};

% ---------- Third plot ----------
\nextgroupplot[
    title={(c)},
    xlabel={Dimension},
    ymode=log,
    xmode=linear,
    % ymin=0.1, ymax=3e3
]
\addplot+[
    mark=*,
    mark options={draw=tabblue, fill=tabblue},
    color=tabblue,
    line width=1,
] table[x=Dim, y=Total] {\floodtabledim};
\addplot+[
    mark=*,
    mark options={draw=taborange, fill=taborange},
    color=taborange,
    line width=1,
] table[x=Dimension, y=Total] {\alphatabledim};

\end{groupplot}
\end{tikzpicture}
    \vspace{-0.1cm}
    \caption{Runtime (in s) of Flood PH and Alpha PH for \texttt{swisscheese}-like point clouds: 
    (a) in $\mathbb{R}^3$,  varying the point cloud size $|X|$ with $|L|=1$k landmarks; 
    (b) in $\mathbb{R}^3$, varying the number of landmarks $|L|$ with $|X|=1$M points; 
    (c) varying the dimensionality with $|X|=1$M and $|L|=1$k.}
    \label{fig:scale-data}
    \vspace{-0.4cm}
\end{figure}

%\vspace{-0.2cm}
\subsection{Object classification}
\label{sec:exp-ml}
We run classification experiments on five datasets, including three real-world datasets, \ie, \modelnet, \mcb\ and the self-curated \texttt{corals}, which come in the form of surface meshes with a varying number of vertices and varying object ``complexity''. In terms of the latter, the number of vertices and triangles is a good proxy (especially for the CAD-based models in \mcb), as, \eg, representing a helical geared motor requires a finer mesh than representing a taper pin. To create point clouds, we uniformly sample $|X|$ points from the mesh surfaces (as, \eg, done in \cite{Surrel22a}), which mitigates artifacts from irregularly spaced vertices. We also create two topologically challenging 3D synthetic datasets, \ie, \texttt{swisscheese} and \texttt{rocks}. Dataset details are provided in \cref{sec:app:datasets}.
We use \emph{ten} random 80/20\% training/testing splits, with 10\% of the training data reserved for validation.

\textbf{Parametrization of the Flood complex \& classifier.} Unless otherwise stated, we select $|L|=\text{2k}$ landmarks from $|X|=\text{1M}$ points using FPS. To compute filtration values, we discretize each simplex based on an equally spaced grid of barycentric coordinates with 20 points per edge (resulting in 210 points per triangle and 1540 points per tetrahedron); cf. \cref{subsection:approximation}. We vectorize persistence diagrams ($H_0, H_1$ and $H_2$) using \cite{Hofer19b} with (exponential) structure elements, parametrized as follows: \emph{locations} are set to 
64 $k$-means++ centers, computed from the (birth, death) tuples of all diagrams in the training data, \emph{scales} are chosen as in ATOL \cite[Eq. (2)]{Royer21a}, and the vectorization's \emph{stretch} parameter is set to either the one, five, or ten percent lifetime quantile (based on the validation data); this yields 64-dim. vectorizations per diagram which, upon concatenation, are fed as 192-dim. feature vectors to an LGBM \cite{Ke17a} classifier (except for \texttt{corals}, where we use $\ell_1$-regularized logistic regression). Hyperparameters are tuned on the validation data using FLAML \cite{flaml} and a time budget of 10 minutes.

\textbf{Baselines.} Our primary TDA baseline is Alpha$^\dagger$ \gls{ph}, \ie, persistent homology computed from a subsample of size chosen such that the runtime is similar to Flood PH, \ie 1\% of the \texttt{rocks} point clouds and otherwise 20k points
. Persistence diagrams are processed in the same manner as described above. Importantly, we do \emph{not} account for the vectorization time when choosing the subsample size for Alpha$^\dagger$ \gls{ph}, although persistence diagrams obtained from Alpha complexes can become very large and require substantially more time for vectorization than those obtained from Flood PH. We also compare to \emph{averaged} persistence diagram vectorizations~\cite{Chazal15a}, denoted as Alpha \gls{ph} (avg.), obtained from Alpha PH of \emph{five} point cloud subsamples
% with the subsample size chosen to match the Flood PH time budget.
of accordingly reduced size.
Unfortunately, methods such as Witness complexes \cite{deSiliva04a}, sparse Rips filtrations \cite{Sheehy13a}, or adaptive approximations \cite{Herick24a}, which might at first glance seem natural competitors to the Flood complex, proved to be infeasible on large-scale point clouds. For example, computing a lazy Witness complex with 100 landmarks and 1M witnesses required more than 20 minutes on our hardware (using the \texttt{Gudhi} \cite{gudhi} implementation).

Moreover, we compare to three neural network methods, designed for point cloud data: PointNet++ \cite{Qi17a}, PointMLP \cite{Ma22a}, and PVT \cite{Zhang21a} (with network width and depth as in the reference implementations). We train all models on point clouds subsampled to 2k points by minimizing cross-entropy (or MSE for the regression task on \texttt{rocks}) over 200 epochs using Adam \cite{Kingma15a} with a cosine annealing schedule and batch size 64. We select the initial learning rate, weight decay, and early stopping period based on the validation data. On the real-world datasets, we use random scaling and shifts as data augmentation.

\textbf{Evaluation metrics.} 
% For the classification tasks, we report \emph{balanced accuracy} and for the regression task on \texttt{rocks} we report \emph{mean squared error (MSE)}, both $\pm$ 1 standard deviation.
% \textcolor{red}{We highlight the best average score and all entries not significant under a paired-samples (over splits) t-test ($p \ge 0.05$).}
For classification tasks, we report the mean balanced accuracy over training/testing splits $\pm$ 1 standard deviation
on all datasets (to account for class imbalances). For the regression task on \texttt{rocks}, we report the mean of mean
squared errors (MSE) over splits $\pm$ 1 standard deviation.

\subsubsection{Synthetic data}
\label{subsection:experiments:synthetic_data}

\begin{table}[t!]
\begin{center}
\caption{Classification results on \textbf{synthetic} data. \emph{Runtime} (in s) is given per point cloud (averaged over all point clouds in a dataset); \emph{Acc} denotes balanced accuracy (averaged over all ten splits). We do not list runtimes for neural networks (bottom part), as they are not directly comparable. Further, there is only one \emph{Runtime} column for \texttt{rocks}, as the classification and regression task use the same persistence diagrams. {\scriptsize\faClock[regular]} indicates that results cannot be computed within a 48-hour time budget (runtime for \texttt{rocks} is \textcolor{black!50}{estimated}).}
\label{table:synthetic}
\vskip0.5ex
\begin{small}
\begin{adjustbox}{width=\textwidth}
{
\begin{tabular}{lccccc}
\toprule
 \multirow{2}{*}{} & \multicolumn{2}{c}{\texttt{swisscheese}\ (2)} & \texttt{rocks}\ (2)  & \texttt{rocks}\ (reg) & \\   
        & Acc $\uparrow$ & Runtime  $\downarrow$   & Acc $\uparrow$ & \multicolumn{1}{c}{MSE $\downarrow$} & Runtime $\downarrow$ \\
        \cmidrule{2-6}
  \textbf{Flood PH} 
    & {\textbf{0.98} $\pm$ 0.01} 
    & {\phantom{00}1.1 $\pm$ \phantom{0}0.1}
    & {\textbf{0.88} $\pm$ 0.03}
    & {(\phantom{0}\textbf{0.5} $\pm$ 0.2) $\times$ 1e-3} 
    & {\phantom{0}7.0 $\pm$ 2.9} \\
  Alpha$^\dagger$ PH
    & {0.85 $\pm$ 0.04}
    & {\phantom{00}1.8 $\pm$ \phantom{0}0.0}
    & {0.78 $\pm$ 0.03}
    & {(\phantom{0}1.5 $\pm$ 0.3)  $\times$ 1e-3}
    & {\phantom{0}9.1 $\pm$ 4.2} \\
  Alpha$^\dagger$ PH (avg.) 
    & {0.80 $\pm$ 0.02}
    & {\phantom{00}0.9 $\pm$ \phantom{0}0.0}  
    & 0.76 $\pm$ 0.04 
    & (20.9 $\pm$ 1.3) $\times$ 1e-3
    & \phantom{0}6.9 $\pm$ 3.1 \\
  Alpha PH (full)      
    & {0.94 $\pm$ 0.02}
    & {134.3 $\pm$ 15.8} 
    & {\scriptsize\faClock[regular]}
    & {\scriptsize\faClock[regular]}
    & \textcolor{black!50}{1490 $\pm$ 680}\\
    %1065.48 $\pm$ 430.40}\\
\midrule
 PointNet++ \cite{Qi17a}    
    &  0.49 $\pm$ 0.03              
    &  -
    &  0.54 $\pm$ 0.01 
    & (18.6 $\pm$ 2.8) $\times$ 1e-3
    & - \\
 PointMLP \cite{Ma22a}      
    & 0.51 $\pm$ 0.02
    & -
    & 0.51 $\pm$ 0.03 
    & (\phantom{0}8.8 $\pm$ 1.0) $\times$ 1e-3 
    & - \\
 PVT \cite{Zhang21a}        
    & 0.50 $\pm$ 0.02
    & -
    & 0.51 $\pm$ 0.04
    & (31.4 $\pm$ 3.4) $\times$ 1e-3 
    & - \\
  \bottomrule
\end{tabular}
}
\end{adjustbox}
\end{small}
\vspace{-0.34cm}
\end{center}
\end{table}

\begin{figure}
    \centering
    \pgfplotstableread[col sep=comma]{new_images/Figure3/figure5.csv}\alphatable

\begin{tikzpicture}
\pgfplotsset{
  every axis/.append style={
    font=\scriptsize,
    title style={font=\scriptsize},
    label style={font=\scriptsize},
    tick label style={font=\scriptsize},
    legend style={font=\scriptsize},
    legend image post style={mark size=1pt},
  },
  every axis plot/.append style={mark size=1.5pt},
}
\begin{groupplot}[
    group style={
        group size=3 by 1,   
        horizontal sep=1.8cm,
    },
    width=0.32\textwidth,
    height=4cm,
    xmode = log,
    grid=major,
    title style={yshift=-5pt},
    ylabel style={yshift=-18pt}
]
\nextgroupplot[
  width=0.33\textwidth,
  height=0.33\textwidth,
  xmin=0, xmax=1,
  ymin=0, ymax=1,
  axis on top,
  grid = none,
  axis line style={draw=white},
  ticks=none,
  xmode = linear,
  xlabel = {Point cloud with 10 holes},
  xlabel style={yshift=-5pt}
]
\addplot graphics [
  includegraphics={width=\axiswidth, keepaspectratio},
  xmin=0, ymin=0,
  xmax=1, ymax=1
] {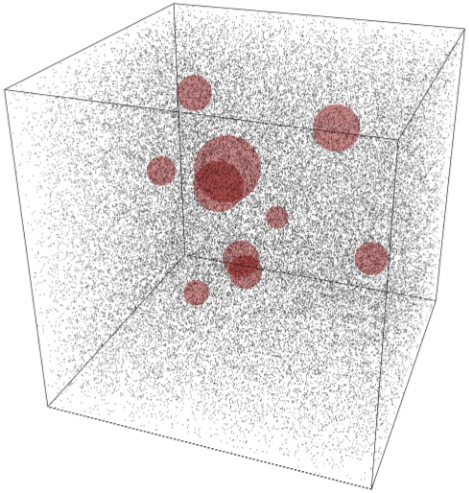};

% ---------- First plot ----------
\nextgroupplot[
    % width=.3\textwidth,
    xlabel={Number $|X|$ of points},
    ylabel={Accuracy},
    ytick={0.5,0.6,0.7,0.8,0.9,1.0},
    ymin=0.5, ymax=1.04,
    legend style={at={(0.65,.35)}, anchor=north, legend columns=1}, 
    extra y ticks={0.98611},
    extra y tick labels={\textcolor{tabblue}{Flood PH}},        
    extra y tick style={
      grid style={dashed, tabblue, line width=0.6pt},
      tick style=gray,   
      line width=2pt,
      yticklabel pos=right,
    },
]
\addplot+[
    mark=*, mark options={draw=taborange, fill=taborange, scale=0.7},
    % mark=*,
    % mark options={draw=taborange, fill=taborange},
    color=taborange,
    line width=1,
    error bars/.cd, 
        y dir=both, 
        y explicit,
        % error bar style={line width=1},
] table[x=Points, y=Accuracy, y error=Std] {\alphatable};
\addlegendentry{Alpha PH};

% ---------- Second plot ----------
\nextgroupplot[
    % width=.3\textwidth,
    xlabel={Number $|X|$ of points},
    ylabel={Runtime},
    ymode=log,
    extra y ticks={1.03},   
    extra y tick labels={\textcolor{tabblue}{Flood PH}},        
    extra y tick style={
      grid style={dashed, tabblue, line width=0.6pt},
      tick style=gray,   
      line width=2pt,
      yticklabel pos=right,
    },
]
\addplot+[
    % mark=*,
    % mark options={draw=taborange, fill=taborange},
    mark=*, mark options={draw=taborange, fill=taborange, scale=0.7},
    color=taborange,
    line width=1,
] table[x=Points, y=Runtime] {\alphatable};
\end{groupplot}
\end{tikzpicture}
    \vspace{-0.1cm}
    \caption{Comparison of classification of accuracy (on \texttt{swisscheese}) and runtime (in s) between Flood PH (2k landmarks) and Alpha PH when the latter has access to an increasing number of points in $X$. The leftmost panel shows an example of a \texttt{swisscheese} point cloud with 10 holes.
    \label{fig:cheesesummary}}
    \vspace{-0.2cm}
\end{figure}

\textbf{Swiss cheese}. We create a synthetic dataset of 3D point clouds by uniformly sampling 1M points in $[0,5]^3$ and 
removing $k$ disjoint balls with random radii (in $[0.1, 0.5]$) and centers. We set $k \in \{10,20\}$ and seek to distinguish point clouds by their number of voids (\ie, a \emph{binary} problem). An example (with $k=10$ voids highlighted in red) is shown in \cref{fig:cheesesummary} (left). The motivation for creating this dataset is to demonstrate that any approach based solely on subsampling the data for computational reasons (either in the context of computing \gls{ph}, or for training neural network models) will perform poorly, as the class-specific topological characteristics (\ie, the voids in $H_2$) will be difficult to identify. 

\cref{table:synthetic} confirms that all neural network baselines fail on this problem. Moreover, the classification performance of Alpha$^\dagger$ \gls{ph} is worse than that of Flood PH, most likely because Alpha$^\dagger$ \gls{ph} provides less information. To examine the latter, we compare Flood \gls{ph} with Alpha \gls{ph} on increasingly larger subsampled point clouds in \cref{fig:cheesesummary}. As expected, the classification accuracy increases with the subsample size. However, surprisingly, it never matches the accuracy of Flood \gls{ph}, but reaches its maximum of 94\% at 1M points, \ie, at the entire point cloud. We hypothesize that the remaining gap to Flood PH can be attributed to the very large size of the resulting persistence diagrams, making it difficult to represent the relevant information in the vectorization. Furthermore, we observe an approximate linear increase in runtime for Alpha \gls{ph} with increasing point cloud size, resulting in more than 24 hours runtime for computing \gls{ph} on the entire dataset when all 1M points are used.

\textbf{Rocks}. 
Extending the ideas underlying the \texttt{swisscheese} data, we generate a more difficult dataset that mimics \emph{porous materials}. Its point clouds contain up to 16M points and are extracted from Boolean voxel grids of size 256\textsuperscript{3}, produced using two different generators (blobs and fractal noise) available as part of the \texttt{PoreSpy} library \cite{Gostick19a}. We evaluate classification accuracy \wrt the data generating method (\ie, a \emph{binary} problem) and regression accuracy \wrt the surface area computed from the voxel grid. %\fgrafmar{Maybe add that points are sampled from corner for nns.}
From \cref{table:synthetic}, it is apparent that \gls{ph}, and particularly Flood PH, excels on this dataset, achieving an average classification accuracy of 88\%, whereas neural networks are only marginally better than random guessing. Similarly, on the regression task, Flood \gls{ph} performs by far the best, with all neural networks yielding MSEs more than one magnitude larger than Flood PH.

\subsubsection{Real-world data}
\label{subsection:experiments:pclclassification}

\textbf{ModelNet10}. \modelnet\ is a subset of the larger ModelNet40 \cite{Wu15a} corpus, containing geometrically rather \emph{simple} objects with few characteristic topological features. Considering the latter, unsurprisingly, all neural network baselines perform much better (by more than 20 percentage points) than any purely \gls{ph}-based approach, regardless of the simplicial complex. Nonetheless, we observe that all \gls{ph}-based approaches yield accuracies notably higher than the $\approx$54\% reported in \cite{Surrel22a} (on the same benchmark data) for a method that \emph{predicts} vectorized persistence barcodes via a neural network.

\vspace{-0.1cm}
\textbf{Mechanical Components Benchmark (MCB)}. As a slightly more challenging dataset, we selected a (11-class) \emph{subset} of meshes, dubbed \mcb, from the publicly available MCB corpus \cite{sangpil2020large}, focusing on objects with more geometrically and topologically interesting features, see \cref{sec:app:datasets}. 
Comparing the neural network results of \cref{table:objclsresults} with the results in \cite[Table 5]{sangpil2020large} (mostly $>$90\% on the \emph{full} dataset of 68 classes), we see that the increased object complexity (of our 11-class subset) manifests as a drop in classification accuracy.
Notably, Flood \gls{ph} is competitive with the neural networks and achieves a significantly better balanced accuracy than Alpha$^\dagger$ \gls{ph} and Alpha$^\dagger$ \gls{ph} (avg.)

% Comparing the neural network results of \cref{table:objclsresults} to the results in \cite[Table 5]{sangpil2020large} (mostly $>$90\% on the \emph{full} dataset of 68 classes), we not only see that the increased object complexity (of our \textcolor{red}{11}-class subset) manifests as a drop in classification accuracy, but also that the gap towards any \gls{ph}-based approach narrows to $\approx$10 percentage points (compared to $>$30 percentage points on \modelnet). Among the latter, we emphasize that Flood PH yields a notably higher accuracy than Alpha$^\dagger$ \gls{ph} and Alpha$^\dagger$ \gls{ph} (avg). This is in contrast to the \modelnet\ results, where all \gls{ph} approaches essentially perform on par.
 
\vskip-0.5ex
\textbf{Coral mesh dataset}. Finally, we curated a challenging dataset of 3D point clouds by uniformly sampling 1M points from surface meshes of \emph{corals} from the \emph{Smithsonian 3D Digitization} initiative. 
Our results on the \emph{binary} classification problem of distinguishing corals by genus show that Flood PH yields (by far) the best result, suggesting that capturing fine details in the surface structure is mandatory for extracting discriminative information. Similar to \cref{fig:cheesesummary}, we observed that the large number of points in the persistence diagrams of Alpha PH variants (compared to the leaner diagrams produced by Flood PH) tends to confound the persistence diagram vectorization technique, leading to drops in downstream performance. 
Moreover, all neural network baselines achieve only a balanced accuracy of less than 60\% with a high variance across splits.
% Also, on the \corals\ data, the only somewhat competitive neural network baseline, \ie, PointMLP, yields only 62\% with a high variance across splits.
 
\begin{table}[]
\caption{Classification results on \textbf{real world} data. \emph{Runtime} (in s) is given per point cloud (averaged over all point clouds in a dataset); \emph{Acc} denotes balanced accuracy (averaged over all ten splits). We do not list runtimes for neural networks (bottom part), as they are not directly comparable. {\scriptsize\faClock[regular]} indicates that results cannot be computed within a 48-hour time budget.}
\label{table:objclsresults}
\vskip0.5ex
\begin{small}
\begin{center}
\begin{adjustbox}{width=\textwidth}
{
\begin{tabular}{lcccccc}
\toprule
 \multirow{2}{*}{} & \multicolumn{2}{c}{\corals\ (2)} & \multicolumn{2}{c}
{\mcb\ (11)} & \multicolumn{2}{c}{\modelnet\ (10)} \\   
        & Acc $\uparrow$ & Runtime  $\downarrow$ & Acc $\uparrow$ & Runtime  $\downarrow$ & Acc $\uparrow$ & Runtime $\downarrow$\\
        \cmidrule{2-7}
  \textbf{Flood PH} 
    & \textbf{0.77} $\pm$ 0.09 
    &  \phantom{00}1.8 $\pm$ \phantom{0}1.2  
    & \textbf{0.74} $\pm$ 0.02 
    & \phantom{00}1.5 $\pm$ 0.5 
    & 0.72 $\pm$ 0.02  
    & \phantom{00}0.8 $\pm$ 0.2 \\
  Alpha$^\dagger$ PH       
    & 0.58 $\pm$ 0.13         
    & \phantom{00}1.7 $\pm$ \phantom{0}0.3   
    & 0.66 $\pm$ 0.03  
    &  \phantom{00}1.8 $\pm$ 0.2  
    & 0.64 $\pm$ 0.01
    &  \phantom{00}1.6 $\pm$ 0.1  \\
  Alpha$^\dagger$ PH (avg.)     
    & 0.52 $\pm$ 0.08 
    & \phantom{00}1.4 $\pm$ \phantom{0}0.1 
    & 0.69 $\pm$ 0.03 
    &  \phantom{00}1.4 $\pm$ 0.1 
    & 0.66 $\pm$ 0.02 
    &  \phantom{00}1.3 $\pm$ 0.1\\
  Alpha PH (full) 
    & 0.44 $\pm$ 0.07 
    & 153.4 $\pm$ 16.8 
    & {\scriptsize\faClock[regular]}
    & 119.1 $\pm$ 7.6
    & {\scriptsize\faClock[regular]}
    &  107.3 $\pm$ 5.9\\  
  \midrule
  PointNet++ \cite{Qi17a} 
    & 0.53 $\pm$ 0.10  
    & -  
    &  \textbf{0.76} $\pm$ 0.02
    & - 
    & \textbf{0.93} $\pm$ 0.01 
    & - 
    \\
  PointMLP   \cite{Ma22a} 
    & 0.59 $\pm$ 0.15  
    & -  
    & \textbf{0.74} $\pm$ 0.03 
    & - 
    & \textbf{0.93} $\pm$ 0.01 
    & - 
    \\
  PVT \cite{Zhang21a} 
    & 0.56 $\pm$ 0.11  
    & - 
    & \textbf{0.77} $\pm$ 0.03  
    & - 
    & \textbf{0.94} $\pm$ 0.01 
    & -
    \\
  \bottomrule
\end{tabular}
}
\end{adjustbox}
\end{center}
\end{small}
\vspace{-0.2cm}
\end{table}

\vspace{-0.1cm}
\section{Discussion}
\label{section:discussion}
We introduced the \textit{Flood complex}, a novel simplicial complex designed to address the long-standing computational challenges of \glsfirst{ph} on large-scale point clouds. By combining careful subsampling and GPU parallelism, the Flood complex enables efficient computation of accurate \gls{ph} approximation on point clouds with millions of points within seconds, offering speed increases of up to two orders of magnitude over the widely used Alpha complex. 
The Flood complex opens up several research questions for future work, including strengthening its theoretical guarantees and developing even more efficient algorithms. Specifically, we anticipate that the theoretical guarantees can be extended to Euclidean spaces of arbitrary dimension, and that more direct proofs, yielding tighter bounds on approximation quality and stability with respect to landmarks, are possible.
Moreover, exploring additional real-world applications that require computing \gls{ph} on large point clouds would further highlight the value of the Flood complex as a lightweight tool, and
we envision that the Flood complex will facilitate a more widespread use of \gls{ph} in machine learning applications that have, so far, been limited by computational constraints.

\clearpage
\makeatletter
\if@submission
\else
  \subsection*{Acknowledgments}

  This work was supported by the Federal State of Salzburg within the EXDIGIT project 20204-WISS/263/6-6022 and projects 0102-F1901166- KZP, 20204-WISS/225/197-2019, the Austrian Science Fund (FWF) under project 10.55776/DFH4791124 (REVELATION).
  M.~Uray and S.~Huber were supported by the Christian Doppler Research
  Association (JRC ISIA), the Austrian Federal Ministry for Digital and Economic Affairs and the Federal State of Salzburg.

  The authors also thank the anonymous reviewers for the valuable feedback during the review process, and Karsten Borgwardt and the Max Planck Institute
of Biochemistry for providing access to the computing infrastructure.
  
%   \if@preprint
%   \else
%     \emph{We also thank all reviewers for the valuable feedback during the review process.}
%   \fi
% \fi
\makeatother

\bibliographystyle{plainnat}   % can also do numeric
\bibliography{refs}

\clearpage

\appendix

\begin{center}
    \hrule height 4pt \vskip 0.25in
    {\LARGE\bfseries Supplementary Material \par}
    \vskip 0.29in \hrule height 1pt \vskip 0.09in
    \vskip 1em
\end{center}

\emph{In following supplementary parts to the main manuscript, we discuss implementation details \& provide dataset descriptions (in \cref{section:appendix:additionaldiscussions}), some additional experiments (in \cref{appendix:subsection:additionalexperiments}), and all proofs (in \cref{appendix:section:theory}) for our theoretical results.}

\section{Additional details}
\label{section:appendix:additionaldiscussions}

\subsection{Implementation details of masking and flooding procedures} \label{sec:app:triton}

In this section, we provide an algorithm for computing the \emph{masking} and \emph{(approximate) flooding process} tailored to modern GPU architectures. 

First, recall that we compute for each simplex $\sigma \in \mathrm{Del}(L)$ its (approximate) filtration value $f(\sigma) = \max_{p\in P_\sigma} \min_{x\in X} d(p,x)$ with $P_\sigma\subset \mathrm{conv}(\sigma)$ a finite set of points on the convex hull of $\sigma$.

As described in \cref{sec:computational}, to avoid computing all $\sum_{\sigma \in \mathrm{Del}(L)} |P_\sigma| \cdot |X|$ pairwise distances $d(p,x)$, we identify for each simplex $\sigma$ a subset of points from $X$ over which the minimum $\min_{x\in X} d(p,x)$ is achieved. Specifically, this set is $\smash{B_{\sqrt 2 r_\sigma}(c_\sigma) \cap X}$, where $B_{r_\sigma}(c_\sigma)$ is an enclosing ball of $\sigma$ computed using a variation of Ritter's heuristic.
Checking for all simplices $\sigma \in \mathrm{Del}(L)$ which points $x \in X$ are within $\smash{B_{\sqrt 2 r_\sigma}(c_\sigma)}$ requires $|\mathrm{Del}(X)| \cdot |X|$ distance evaluations.
In fact, by presorting $X$ along one coordinate axis, we can compute, for each simplex $\sigma$, a slice $\tilde X_\sigma$ of $X$ enclosing the ball $B_{r_\sigma}(c_\sigma)$ via two binary searches, leading to an $\smash{O(\sum_\sigma  |\tilde X_\sigma| + |X|\log|X| + |\mathrm{Del}(L)| \log|X|)}$ complexity, which is usually much faster than doing all $|\mathrm{Del}(L)| \cdot |X|$ distance evaluations.
The distances $d(c_\sigma, x)$ can then be computed independently, and are therefore straightforward to parallelize, yielding a Boolean masking matrix of shape $|\mathrm{Del}(L)| \times |X|$.

To be more precise, in our implementation, we also presort the simplex centers $c_\sigma$ along the same coordinate axis. This allows for efficient selection of a common bounding slice $\tilde X_B$ for an entire \emph{batch} $B$ of simplices to compute a masking matrix of shape $b \times \tilde X_B$, where $b=|B|$ is the batch size.
The non-zero indices of the masking matrix are then, together with the points in $\tilde X_B$ and the sets $P_{\sigma_i}, i\in B$, input to a custom Triton \cite{triton} kernel, which efficiently computes the directed Hausdorff distances $f(\sigma_i)$. 

We first initialize (with infinity) an array $A$ of shape $b \times |P_\sigma|$ in shared memory which collects the minimal distances $\min_{x\in X} d(p,x)$ for all $p \in P_B:=(P_{\sigma_1}, \dots, P_{\sigma_B} )$.
The kernel is then launched over a grid of multiple independent programs, each scheduled to run on the GPU’s streaming multiprocessors.
Each program receives the points in the slice $\tilde X_B$ and a chunk of simplex points $P_\sigma$ all from the same simplex $\sigma$. Moreover, it receives a chunk of non-zero $(\sigma, x)$ index pairs of the masking matrix, where we ensure that, for one chunk, all pairs correspond to the simplex $\sigma$. It then computes the minimal distances $\min_{x} d(p,x)$, where the minimum is taken over all $x$ in the chunk, and finally writes these values to $A$ via an atomic min operation.
Once all programs terminate, the filtration value $f(\sigma_i)$ of each simplex in the batch is computed by taking the maximum of a row of $A$.

In fact, when representing each simplex $\sigma$ by a discrete set $P_\sigma$ defined in terms of barycentric coordinates as described in \cref{sec:computational} (or any other discrete set $P_\sigma$ such that $P_\tau \subset P_\sigma$ for all faces $\tau\subset \sigma$), we can directly extract the filtration values of its faces by taking the maximum along the rows over appropriately selected column indices.
It therefore suffices to perform the masking and distance computation only on batches of maximal simplices.

We additionally support a CPU implementation based on a $k$-d tree \cite{kdtree} data structure. Avoiding the masking step, we directly build a $k$-d tree on the points $X$ and compute the (global) minimal distance matrix $A$ (of shape $|X| \times |P_\sigma|$) using nearest-neighbor queries for each $p \in \bigcup_{\sigma \in \mathrm{Del}(L)} P_\sigma$.

\subsection{Source code} \label{sec:app:sourcecode}
\noindent
\textbf{Flooder source code.} We provide the full source code for constructing the Flood complex with subsequent PH computation at 

\begin{tcolorbox}[colback=yellow!10!white, colframe=black, boxrule=0.5pt, arc=4pt]
\begin{center}
\url{https://github.com/plus-rkwitt/flooder}
\end{center}
\end{tcolorbox}

Furthermore, for easy use, practitioners can \emph{install} (including all dependencies) our Python package \texttt{flooder}\footnote{all experiments were run with \texttt{flooder (v1.0rc5)}} (which is available on \href{https://pypi.org/project/flooder/}{PyPi}) via 

% \begin{minted}[frame=lines, bgcolor=yellow!10!white,fontsize=\small, style=xcode]{bash}
% pip install flooder
% \end{minted}
\includegraphics{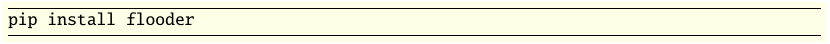}

\vspace{-0.2cm}
Below is a minimal working example (MWE) of how to compute Flood PH on 1M points sampled from the surface of a torus. Other examples, including timing experiments, can be found in the \texttt{examples} folder of the GitHub repository listed above.

% \begin{minted}[frame=lines, bgcolor=yellow!10!white,fontsize=\small,style=xcode]{python}
% from flooder import (
%     flood_complex, generate_landmarks, generate_noisy_torus_points_3d
% )

% DEVICE = "cuda"
% n_pts = 1_000_000  # Number of points to sample from torus
% n_lms = 1_000      # Number of landmarks for Flood complex

% pts = generate_noisy_torus_points_3d(n_pts).to(DEVICE)
% lms = generate_landmarks(pts, n_lms)

% stree = flood_complex(pts, lms, return_simplex_tree=True)
% stree.compute_persistence()
% ph_diags = [stree.persistence_intervals_in_dimension(i) for i in range(3)]
% \end{minted}
\includegraphics{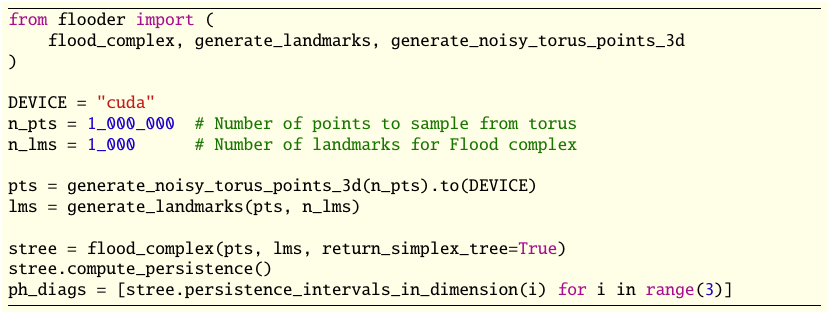}

\noindent
\textbf{Datasets.}
In addition to the Flood complex implementation, the \texttt{flooder} package provides all point cloud datasets used for the object classification experiments in \cref{sec:exp-ml}. These datasets are ready-to-use, with all pre-processing steps already applied and come with pre-defined splits to ensure reproducibility. These topologically challenging datasets can serve as a standardized benchmark for future research in topological data analysis and machine learning on point clouds in general.
Below is a minimal working example (MWE) of how to load a dataset.

% \begin{minted}[frame=lines, bgcolor=yellow!10!white,fontsize=\small,style=xcode]{python}
% from flooder.datasets import (
%     CoralDataset, MCBDataset, RocksDataset, 
%     ModelNet10Dataset, SwisscheeseDataset,
% )

% dataset = CoralDataset('./coral_dataset/')
% for data in dataset:
%     print(data.x.shape, data.y)
% \end{minted}
\includegraphics{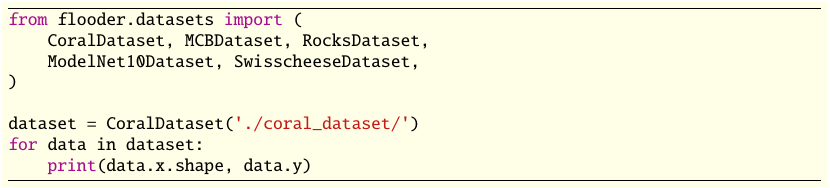}

\vskip1ex
\noindent
\textbf{Experiments code.} Experiments can be reproduced using the following repository:

\begin{tcolorbox}[colback=yellow!10!white, colframe=black, boxrule=0.5pt, arc=4pt]
\begin{center}
\url{https://github.com/plus-rkwitt/flooder-experiments}
\end{center}
\end{tcolorbox}

\subsection{Computing infrastructure} 
\label{sec:app:compute}
%All experiments were run on an Ubuntu Linux system (24.04), running kernel 6.8.0-83-generic,
%with 64 AMD EPYC 7313 16-Core Processors, 512 GB of main memory, and two NVIDIA H100 NVL GPUs.
The main experiments  were run on an SUSE Linux Enterprise Server 15 SP6 system
with AMD EPYC 9554 64-Core Processors, 1024 GB of main memory, and NVIDIA H100 80GB HBM3 GPUs. 
\clearpage
\subsection{Dataset details}
\label{sec:app:datasets}
In the following, we provide a more detailed description of the used datasets and their properties. 

\textbf{Corals.} We collected and curated a challenging 3D point cloud classification dataset by uniformly
\begin{wrapfigure}{r}{0.27\textwidth}
\vspace{-0.5cm}
  \begin{center}
    %\fbox{
    \includegraphics[trim=27cm 15cm 25cm 3cm, clip, width=0.8\linewidth]{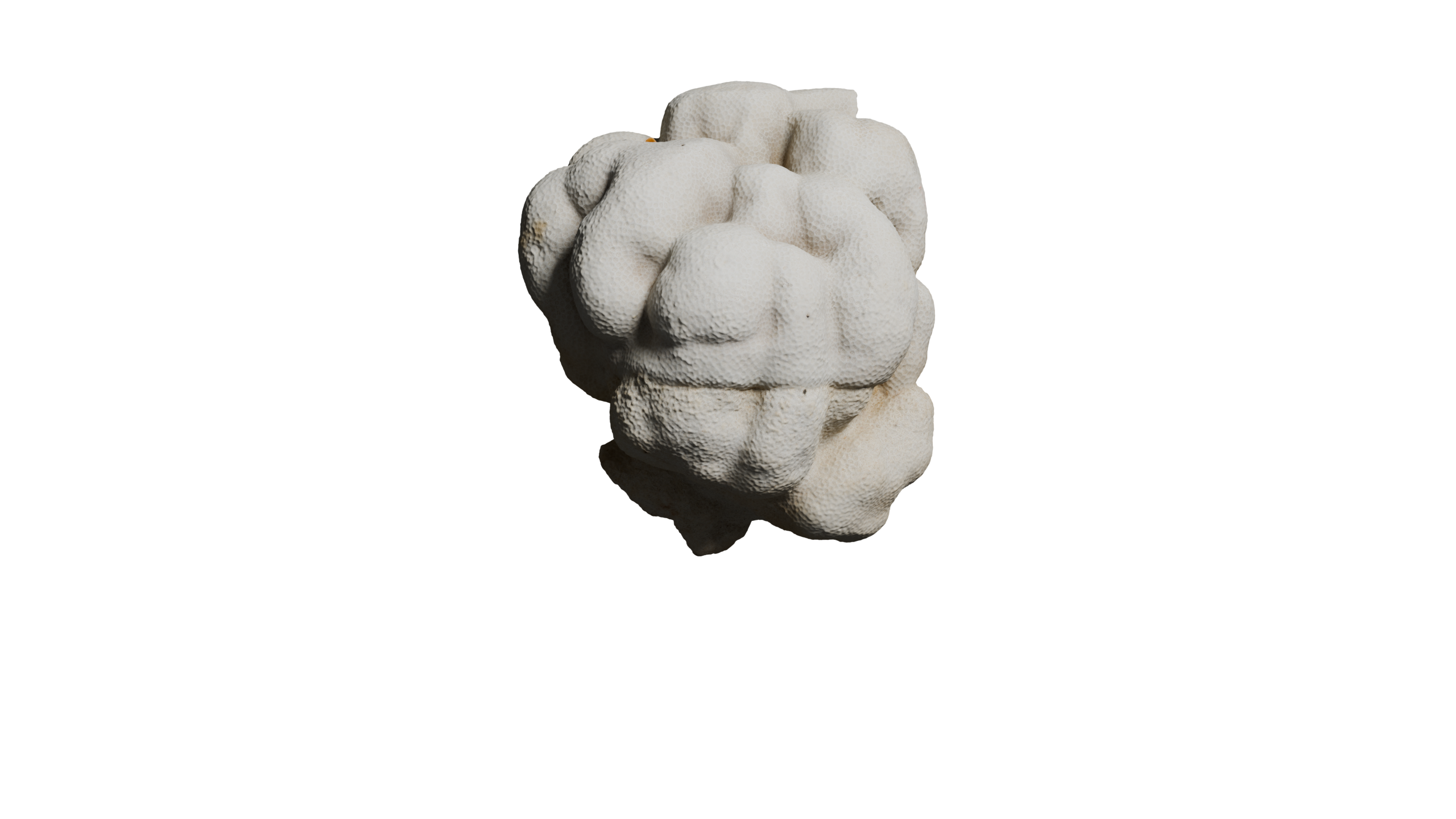}%}
  \end{center}
  \vspace{-0.2cm}
  \caption{\label{fig:coral} One example of a \emph{Poritidae} coral.}
  \vspace{-0.4cm}
\end{wrapfigure}
 sampling $\text{1M}$ points from surface meshes of \emph{corals} obtained from the \emph{Smithsonian 3D Digitization} initiative\footnote{available at \url{https://3d.si.edu/corals}}. 
In particular, this dataset is a \emph{subset} of all coral meshes that are available under CC0 license, classified by \emph{genus}. We label the corals according to their genus, and only use classes that have at least 30 instances. Overall, this yields a binary classification problem with a total of 83 available coral meshes. Specifically, there are 31 corals with the genus Acroporidae and 52 with Poritidae. On average, the meshes have $\approx$ 900k vertices (ranging from $\approx$ 29k to $\approx$ 10M with median $\approx$ 500k).  A rendered example (with 1818450 vertices) is shown in Figure \ref{fig:coral}. In particular, the coral \textit{Leptoseris paschalensis} (USNM 53156), which is the mesh with the most vertices in the collection, is also used to showcase the capabilities of the Flood complex in computing \gls{ph} on large point clouds; see Section~\ref{sec:exp-largescale}.

\textbf{Mechanical Components Benchmark (MCB).}
We used a \emph{subset}, dubbed \mcb, of the publicly available MCB-A dataset \cite{sangpil2020large} to assess the classification performance on geometrically and topologically challenging objects. We filter MCB-A by (1) mesh size and (2) class size. In particular, we only take meshes with more than 10k vertices and then keep classes with more than 30 remaining instances. Here, the \emph{vertex count} of a mesh serves as a proxy of \emph{geometrical complexity}, with the intuition that object meshes with a larger number of vertices tend to be geometrically more complex. After filtering, 1,745 meshes split into 11 classes remain (out of 68 original classes), with $\approx$ 34.5k vertices on average. Finally, we uniformly sample $\text{1M}$ points from each mesh surface to obtain our training/testing point clouds.

\textbf{ModelNet10.} 
\modelnet\footnote{available at \url{https://modelnet.cs.princeton.edu}} is a subset of the larger ModelNet40 benchmark \cite{Wu15a} with 10 object categories (bathtub, toilet, table, \etc) distributed across 4,899 surface meshes. Originally, we have 3,591 training and 1,308 testing instances. We aggregate all meshes and then create ten random 80/20\% training/testing splits. On average, we have $\approx 9.5$k vertices per mesh. Overall, this dataset contains geometrically rather \emph{simple} objects with few characteristic topological features. From each surface mesh we uniformly sample 250k points. 

\textbf{Rhinovirus.} This point cloud is obtained from a cryo-electron-microscopy reconstruction \cite{Wald2024} of the protein structure of RV-A89\footnote{available at \url{https://www.emdataresource.org/EMD-50844}}. The raw (density) data is provided as voxels, which we first smooth out by applying a 3D average pooling (with kernel size 3, stride 1 and padding 1).
We then transform it into a point cloud by taking the centers of the  voxels that pass the provided density threshold (of $0.151$).
This results in $|X|\approx \text{12.3M}$ points.

\textbf{Rocks.}
We created a point cloud dataset mimicking porous materials using the \texttt{PoreSpy} library \cite{Gostick19a}. In particular, we create 1k Boolean voxel grids of size 256\textsuperscript{3}, whose true values define the point clouds. In addition, we compute the \emph{surface area} of a voxel grid as the sum over all voxels of the differences between the voxel grid and its shift by 1 along an axis.
Following this strategy, we create 500 voxel grids using the \emph{fractal noise} data generator and 500 using the \emph{blobs} generator. 
In both cases, we select the porosity hyperparameter from a uniform distribution on $[0.05, 0.95]$, resulting in point cloud sizes between 800k and 16M. The hyperparameters that affect the surface area (\ie, frequency for fractal noise and ``blobiness'' for blobs) are empirically tuned so that a wide range of (porosity, surface) tuples is approximately uniformly covered, see \cref{fig:rocks}. For details, we refer to the source code.
\begin{figure}[t]
    \centering
    \includegraphics[width=1.0\linewidth]{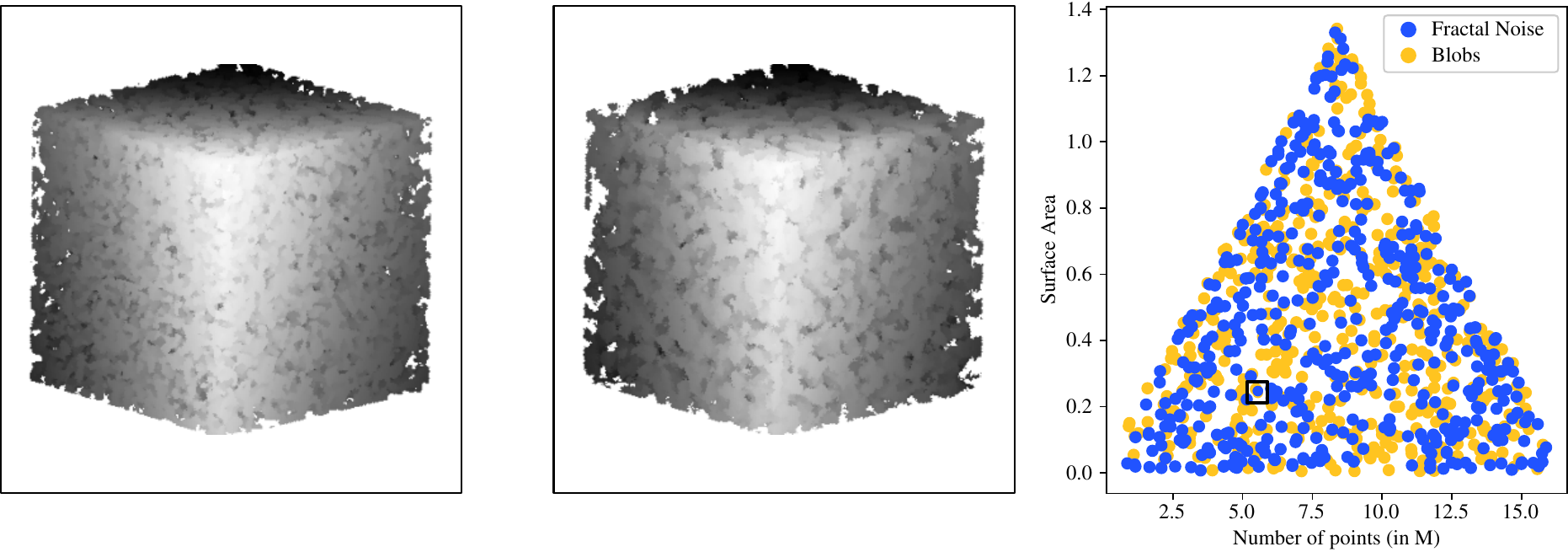}
    \caption{Two point clouds from the \texttt{rocks} dataset with similar porosity and surface area, one created using the blobs generator (\textbf{left}) and one  using the fractal noise generator (\textbf{middle}). The \textbf{right} panel shows the number of points plotted against surface area (\ie, the regression target) for both generators. The black square $\boldsymbol{\square}$ indicates the location of the two examples.}
    \label{fig:rocks}
\end{figure}

\section{Additional experiments}
\label{appendix:subsection:additionalexperiments}

\subsection{Runtimes on different GPUs}

We compare the runtime for computing Flood PH using different GPU architectures. Specifically, we report runtimes (in s) on an NVIDIA GeForce RTX 2080 Ti, a GeForce RTX 3090 and a H100 NVL card.
As in \cref{sec:exp-scalability}, we evaluate on the \emph{vertices} of the meshes used for generating the \corals\ dataset and show results in \cref{fig:different_gpus}.

\begin{figure}[h]
    \centering
    \pgfplotstableread[col sep=comma]{new_images/gpus/coral_runtimes_brille.csv}\runtimesbrille
\pgfplotstableread[col sep=comma]{new_images/gpus/coral_runtimes_katze.csv}\runtimeskatze
\pgfplotstableread[col sep=comma]{new_images/gpus/coral_runtimes_kragen.csv}\runtimeskragen

\begin{tikzpicture}
\pgfplotsset{
  % fonts (as you had)
  every axis/.append style={
    font=\scriptsize,
    title style={font=\scriptsize},
    label style={font=\scriptsize},
    tick label style={font=\scriptsize},
    legend style={font=\scriptsize},
    % make legend dots small as well:
    legend image post style={mark size=1pt},
  },
  % shrink markers for all plots:
  every axis plot/.append style={mark size=1.5pt},
}
\begin{groupplot}[
    group style={
        group size=2 by 1,   % 2 plots in 1 row
        horizontal sep=1.5cm,% space between plots
    },
    width=0.47\textwidth,
    height=5cm,
    grid=major,
    xlabel style={font=\small},
    ylabel style={font=\small},
]
% ---------- First plot ----------
\nextgroupplot[
    xtick={0,1e6,2e6,3e6,4e6,5e6,6e6},
    xticklabels={0,1,2,3,4,5,6},
    scaled x ticks=false,
    xlabel={Number of points (in M)},
    ylabel={Runtime (in s)},
    legend style={at={(0.25,.975)}, anchor=north, legend columns=1, legend cell align=left},
]
\addplot+[
    mark=*,
    only marks,
    mark options={solid},
] table[x=num_verts, y=time] {\runtimesbrille};
% \addlegendentry{Flood};
\addplot+[
    mark=*,
    only marks,
    mark options={solid},
] table[x=num_verts, y=time] {\runtimeskatze};
% \addlegendentry{Flood};
\addplot+[
    mark=*,
    only marks,
    mark options={solid},
] table[x=num_verts, y=time] {\runtimeskragen};
% \addlegendentry{Flood};
\legend{RTX 2080 Ti,RTX 3090,H100 NVL}

% ---------- Second plot ----------
\nextgroupplot[
    xtick={0,1e6,2e6,3e6,4e6,5e6,6e6},
    xticklabels={0,1,2,3,4,5,6},
    scaled x ticks=false,
    xlabel={Number of points (in M)},
    ylabel={Relative runtime},
    legend style={at={(0.25,.975)}, anchor=north, legend columns=1,legend cell align=left},
]
\addplot+[
    mark=*,
    only marks,
    mark options={solid},
] table[x=num_verts, y=rel] {\runtimesbrille};
% \addlegendentry{Flood};
\addplot+[
    mark=*,
    only marks,
    mark options={solid},
] table[x=num_verts, y=rel] {\runtimeskatze};
% \addlegendentry{Flood};
\addplot+[
    mark=*,
    only marks,
    mark options={solid},
] table[x=num_verts, y=rel] {\runtimeskragen};
% \addlegendentry{Flood};

\end{groupplot}
\end{tikzpicture}
    \caption{ \label{fig:different_gpus}Absolute (\textbf{left}) and relative (\textbf{right}) runtimes of Flood complex construction plus persistent homology computation on the vertices of coral meshes on different GPUs. Hyperparameters were 2048 landmarks, 1024 random points and batch size 256.}
\end{figure}
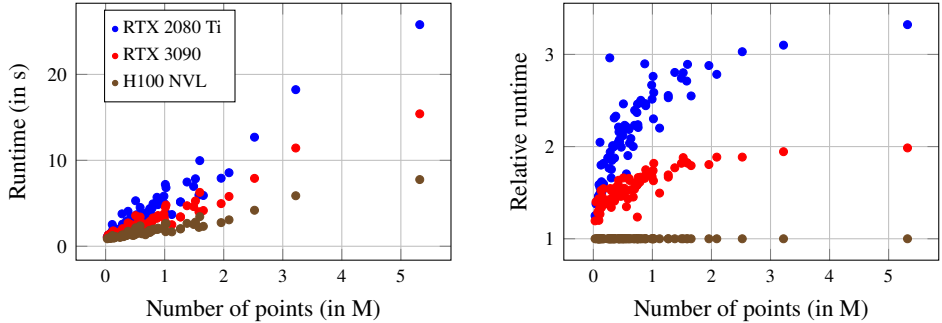

\subsection{Additional results on large-scale point clouds}
\label{sec:large-scale-appendix}
\begin{figure}
    \centering
    \includegraphics[width=0.94\linewidth]{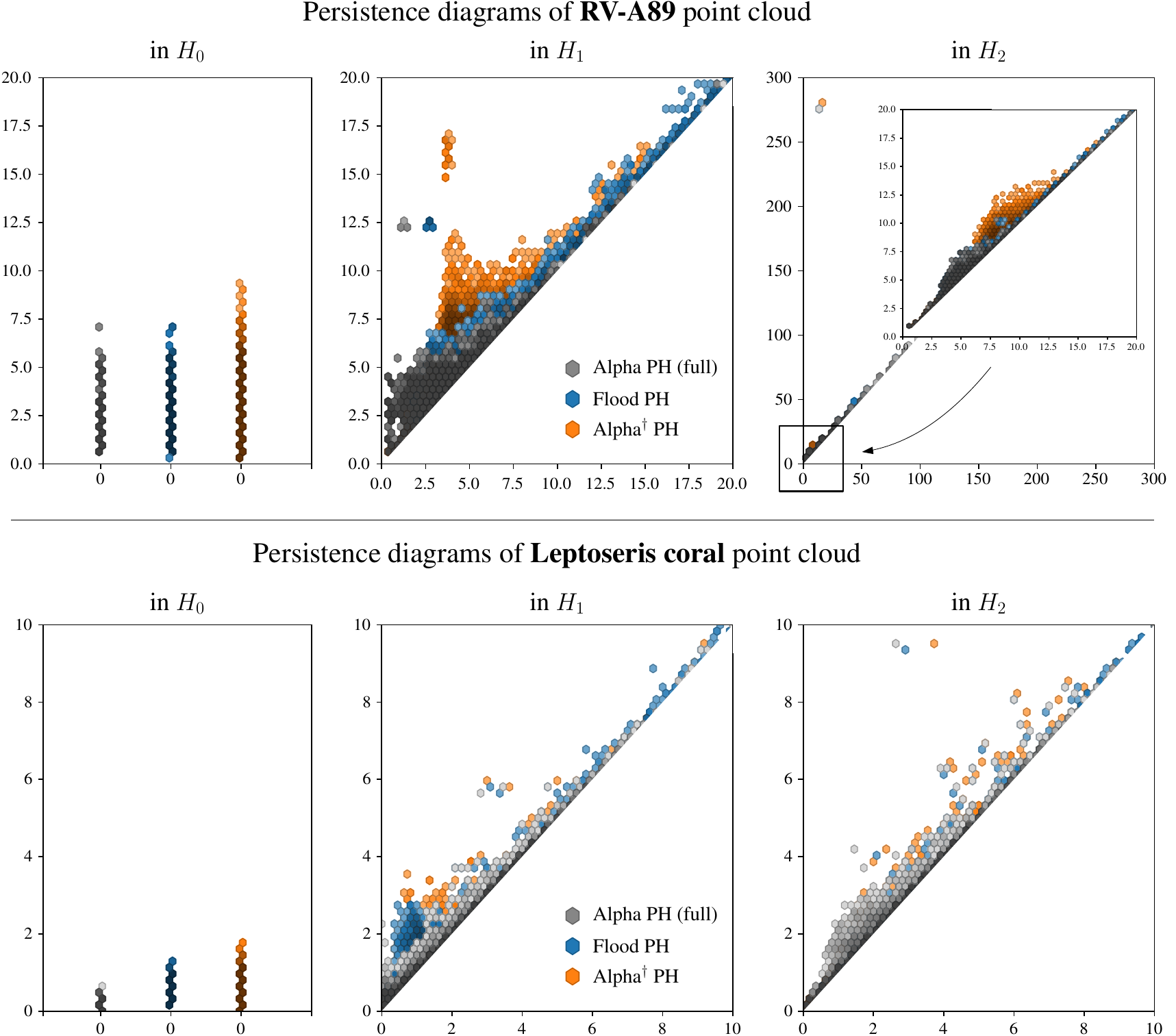}
    \caption{Hexbin plots of persistence diagrams of the RV-A89 and Leptoseris coral point clouds. Gray is Alpha PH on the full point cloud, blue is Flood PH with 10k landmarks, orange is Alpha PH on a subset of size 175k.}
    \label{fig:virus_coral_fullph}
\end{figure}

\begin{figure}[h!]
    \centering
\pgfplotstableread[col sep=comma]{new_images/bottleneck_dists/coral_flood.csv}\floodcoraltable
\pgfplotstableread[col sep=comma]{new_images/bottleneck_dists/virus_flood.csv}\floodvirustable
\pgfplotstableread[col sep=comma]{new_images/bottleneck_dists/coral_alpha.csv}\alphacoraltable
\pgfplotstableread[col sep=comma]{new_images/bottleneck_dists/virus_alpha.csv}\alphavirustable

\def\pgffigheight{4cm}
\pgfplotsset{
  % fonts (as you had)
  every axis/.append style={
    font=\scriptsize,
    title style={font=\scriptsize},
    label style={font=\scriptsize},
    tick label style={font=\scriptsize},
    legend style={font=\scriptsize},
    % make legend dots small as well:
    legend image post style={mark size=1pt},
  },
  % shrink markers for all plots:
  every axis plot/.append style={mark size=1.5pt},
}
\pgfplotsset{
    alphastyle/.style={
        scatter,
        mark=*,
        scatter src=explicit,
        colormap name = taborange_shades,%/viridis,
        color = taborange,
        error bars/.cd,
            y dir=both,
            y explicit,
            error bar style={color=taborange},
    }
}
\pgfplotsset{
   floodstyle/.style={
        scatter,
        mark=*,
        scatter src=explicit,
        colormap name = tabblue_shades,%/viridis,
        color=tabblue,
        error bars/.cd, 
            y dir=both, 
            y explicit,
            error bar style={color=tabblue},
    }
}
\pgfplotsset{
  colormap={tabblue_shades}{%
    color(0cm)=(tabblue!50!white)
    color(.5cm)=(tabblue)
    color(1cm)=(tabblue!50!black)
  }
}
\pgfplotsset{
  colormap={taborange_shades}{%
    color(0cm)=(taborange!50!white)
    color(0.5cm)=(taborange) 
    color(1cm)=(taborange!50!black)
  }
}

\begin{tikzpicture}

\begin{groupplot}[
    group style={
        group size=10 by 1,   % 3 plots in 1 row
        horizontal sep=.9cm,% space between plots
        % vertical sep=2cm,    % (for multi-row groups)
    },
    ymin=-.2, ymax=4,
    height=\pgffigheight,
    grid=major,
    title style={
            yshift=-.2cm               % shift left
        }
]
% ---------- First plot ----------
\nextgroupplot[
    width=0.24\textwidth,
    xlabel={Runtime (in s)},
    ylabel={Bottleneck distance $d_B$},
    % legend style={at={(0.25,.975)}, anchor=north, legend columns=1},
    % legend image post style={only marks},
    title={in $H_0$},
]

\addplot+[
    floodstyle,
] table[x=t, y=db0m, y error=db0s, meta=meta] {\floodcoraltable};
\addplot+[
    alphastyle,
] table[x=t, y=db0m, y error=db0s, meta=meta] {\alphacoraltable};

\nextgroupplot[
    width=0.24\textwidth,
    xlabel={Runtime (in s)},
    % ylabel={Bottleneck Distance},
    % legend style={at={(0.25,.975)}, anchor=north, legend columns=1},
    % legend image post style={only marks},
    title={in $H_1$},
]
\addplot+[
    floodstyle,
] table[x=t, y=db1m, y error=db1s, meta=meta] {\floodcoraltable};
\addplot+[
    alphastyle,
] table[x=t, y=db1m, y error=db1s, meta=meta] {\alphacoraltable};

\nextgroupplot[
    width=0.24\textwidth,
    xlabel={Runtime (in s)},
    % ylabel={Bottleneck Distance},
    % legend style={at={(0.25,.975)}, anchor=north, legend columns=1},
    % legend image post style={only marks},
    title={in $H_2$},
]
\addplot+[
    floodstyle,
] table[x=t, y=db2m, y error=db2s, meta=meta] {\floodcoraltable};
\addplot+[
    alphastyle,
] table[x=t, y=db2m, y error=db2s, meta=meta] {\alphacoraltable};

\nextgroupplot[
    width=0.12\textwidth,
    x axis line style={draw=none}, % hide axis line
    x tick style={draw=none},   % hides major tick lines
    xtick=\empty,               % removes major ticks
    xticklabels=\empty,          % removes labels
    minor x tick num=0,          % removes minor tick subdivisions
    xlabel={\phantom{Runtime (in s)}},
    ylabel={Hausdorff distance $d_H$},
    ymin =0, ymax= 20,
    ylabel style = {yshift =-8pt},
]
\addplot+[
    floodstyle,
] table[x expr=0, y=dhm, meta=log_lms] {\floodcoraltable};
\addplot+[
    color = tabblue,
    no marks,    
] table[row sep=\\] {
x y \\
0 0 \\
0 20 \\
};

\nextgroupplot[
    width=0.12\textwidth,
    % legend style={at={(0.25,.975)}, anchor=north, legend columns=1},
    % legend image post style={only marks},
    x axis line style={draw=none}, % hide axis line
    x tick style={draw=none},   % hides major tick lines
    xtick={0},               % removes major ticks
    xticklabels={\phantom{0}},          % removes labels
    minor x tick num=0,          % removes minor tick subdivisions
    xlabel={\phantom{Runtime (in s)}},
    title={\phantom{$H_0$}},
    ymin = 0, ymax = 20,
]

\addplot+[
    alphastyle,
] table[x expr=0, y=dhm, meta=lms] {\alphacoraltable};
\addplot+[
    color = taborange,
    no marks,    
] table[row sep=\\] {
x y \\
0 0 \\
0 20 \\
};

\nextgroupplot[
    hide axis,
    width=0.15\textwidth,
    point meta min = 5.29831736654804,
    point meta max = 9.90348755253613,    
    colormap name = tabblue_shades,%/viridis,
    colorbar,
    colorbar style={
        ytick={
            5.29831736654804,
            6.21460809842219,
            6.90775527898214,
            7.60090245954208,
            8.51719319141624,
            9.21034037197618
        },
        yticklabels = {
            200,
            500,
            1k,
            2k,
            5k,
            10k
            },
        width=0.2cm, 
        yticklabel style={/pgf/number format/fixed},
        ylabel={\#Landmarks (Flood PH)},
        ylabel style={
            yshift=1.3cm               % shift left
        }
    },
]

% \nextgroupplot[hide axis, width=0pt, height=0pt, scale only axis]

\nextgroupplot[
    width=0.15\textwidth,
    hide axis,
    colormap name = taborange_shades,%/viridis,
    point meta min=75000,
    point meta max=275000,
    colorbar,
    colorbar style={
    scaled y ticks=false,
    ytick={
            75000,125000,175000,225000,275000
        },
        yticklabels = {
            75k,
            125k,
            175k,
            225k,
            275k
            },
        width=0.2cm, 
        yticklabel style={/pgf/number format/fixed},
        ylabel={\#Points (Alpha PH)},
        ylabel style={
            yshift=1.4cm               % shift left
        }
    },
]

\end{groupplot}

\path (group c3r1.east) -- (group c5r1.west)
      node[midway] (midline) {};
\draw[dash pattern=on 2pt off 1pt]
  ([yshift=-10pt, xshift=-20pt]midline |- current bounding box.north)
  -- ([yshift=15pt, xshift=-20pt]midline |- current bounding box.south);
\draw[dash pattern=on 2pt off 1pt]
  ([yshift=-10pt, xshift=1.5cm]midline |- current bounding box.north)
  -- ([yshift=15pt, xshift=1.5cm]midline |- current bounding box.south);
\end{tikzpicture}
\caption{\label{fig:bottlneck_dists_coral} \emph{Approximation quality} of Flood PH and Alpha PH on the Leptoseris paschalensis coral. The \textbf{(left)} panel shows bottleneck distances to Alpha PH (full) in $H_0$, $H_1$ and $H_2$ when varying the number of landmarks for Flood PH and the subsample size for Alpha PH. The (\textbf{middle}) panel shows the Hausdorff distance between the full point cloud and the landmarks,  resp., the points in the subsample. The (\textbf{right}) panel shows the color coding used in all the plots.}
    \vspace{-0.2cm}
\end{figure}
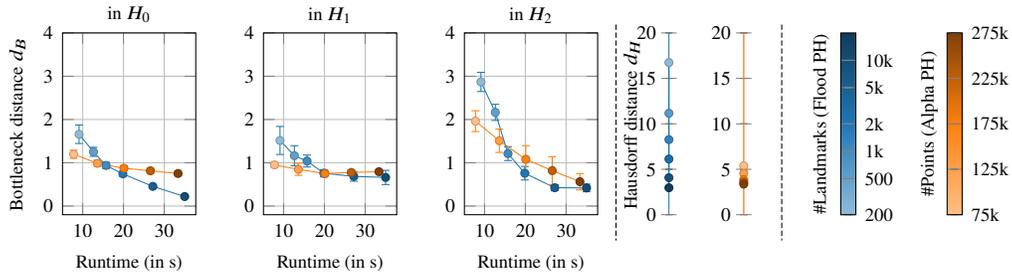

We present additional results to \cref{sec:exp-largescale}. 
\cref{{fig:virus_coral_fullph}} extends \cref{fig:large-scale} and shows \emph{all} persistence diagrams from the RV-A89 virus and the Leptoseris paschalensis coral.
\cref{fig:bottlneck_dists_coral} shows bottleneck distances to Alpha PH on the full coral point cloud for varying numbers of landmarks. 
When comparing to \cref{fig:large-scale}, one notices the much larger runtimes of Flood PH on the coral point cloud, which is increased by a factor of around 2 when many landmarks are used, \eg, 27 versus 14 seconds when using 5k landmarks and 35 versus 17 seconds when using 10k landmarks. On closer inspection, this is caused by the masking step of Flood PH being more successful for the latter, resulting in only around half of the distance comparisons that need to be done for computing the filtration values.

\subsection{Effect of landmark selection method}
\label{sec:fps_vs_unif}
In \cref{table:fps_vs_unif}, we compare the bottleneck distances between Alpha PH on the full large-scale point clouds from \cref{sec:exp-largescale}
when landmarks are selected using either farthest point sampling (FPS) or uniform random sampling (abbr. as Rand). We report results obtained when using 1k landmarks and filtration values are computed from a grid with 30 points per edge. Specifically, we report mean and standard deviations from 25 runs, where the randomness in FPS comes from different initial points.
As expected, FPS achieves a better coverage of the point clouds, resulting in lower Hausdorff distances between landmarks and the point cloud, and consequently lower bottleneck distances to the persistent homology of Alpha \gls{ph} (full).

\begin{table}[h!]
\caption{Comparison of bottleneck distances ($d_B$) between Alpha PH (full) and Flood PH when selecting landmarks using FPS and uniformly random (Rand) sampling on the large-scale point clouds from \cref{sec:exp-largescale}.}
    \label{table:fps_vs_unif}
    \centering
    \vspace{0.2cm}
    \begin{small}
    \begin{tabular}{l|cc|cc}
    \toprule
    & \multicolumn{2}{c|}{RV-A89} & \multicolumn{2}{c}{Leptoseris Paschalensis} \\
    & FPS & Rand & FPS & Rand \\    
    \midrule
    Hausdorff distance $d_H(L,X)$          
        & 44.7 $\pm$ 0.2
        & 79.1 $\pm$ 4.0
        & \phantom{0}8.2 $\pm$ 0.1
        & 20.0 $\pm$ 3.3\\
    Bottleneck distance $d_B$ in $H_0$ 
        & \phantom{0}1.7 $\pm$ 0.1
        & \phantom{0}3.2 $\pm$ 0.1
        & \phantom{0}0.9 $\pm$ 0.1 
        & \phantom{0}0.9 $\pm$ 0.1 \\
    Bottleneck distance $d_B$ in $H_1$ 
        & \phantom{0}3.9 $\pm$ 0.8
        & \phantom{0}3.3 $\pm$ 0.5
        & \phantom{0}1.0 $\pm$ 0.1
        & \phantom{0}1.2 $\pm$ 0.2 \\
    Bottleneck distance $d_B$ in $H_2$ 
        & \phantom{0}5.7 $\pm$ 1.1
        & \phantom{0}5.4 $\pm$ 1.6
        & \phantom{0}1.2 $\pm$ 0.2
        & \phantom{0}2.6 $\pm$ 3.2 \\
    \bottomrule
    \end{tabular}
    \end{small}
\end{table}

\clearpage
\section{Theory}
\label{appendix:section:theory}

\subsection{Proofs}

For $p \in \mathbb{R}^d$ and a bounded set $X \subset \nR^d$, we define $d(p, X) = \min_{x \in X} d(p, x)$. We recall the definition of the \emph{directed Hausdorff distance} 
\begin{equation}
\label{app:eqn:hausdorffdistance}
    \vec d_H(X, Y) := \max_{x \in X} \min_{y \in Y} d(x, y) = \max_{x \in X} d(x, Y)
\end{equation}
for bounded sets $X, Y \subset \mathbb{R}^d$. 

For completeness, we include a proof that the directed Hausdorff distance satisfies a triangle inequality.
\begin{lemma}
    \label{lem:triangle_hausdorff}
    Let $(X,d)$ be a metric space and let $A,B,C\subset X$ be compact.
    Then, 
    \begin{equation}
        \max_{a\in A} d(a,C) \le \max_{a\in A} d(a,B) + \max_{b\in B} d(b,C)
        \enspace.
    \end{equation}
\end{lemma}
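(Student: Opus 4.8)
The plan is to reduce this set-level inequality to the ordinary pointwise triangle inequality of the underlying metric, applied at a single, carefully chosen triple of points, and then take a maximum. First I would fix an arbitrary $a \in A$ and produce two nearest points: since $B$ and $C$ are compact and the maps $x \mapsto d(x,y)$ are continuous, the extreme value theorem guarantees that there exist $b^\ast \in B$ with $d(a,b^\ast) = d(a,B)$ and $c^\ast \in C$ with $d(b^\ast, c^\ast) = d(b^\ast, C)$.

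Next I would chain the metric triangle inequality through this triple. The key estimate is
\begin{equation*}
    d(a, C) \le d(a, c^\ast) \le d(a, b^\ast) + d(b^\ast, c^\ast) = d(a,B) + d(b^\ast, C)\enspace,
\end{equation*}
where the first inequality holds because $d(a,C)$ is the \emph{minimal} distance from $a$ to $C$ and $c^\ast \in C$, the middle step is the pointwise triangle inequality in $(X,d)$, and the final equality substitutes the defining properties of $b^\ast$ and $c^\ast$. Bounding $d(b^\ast, C) \le \max_{b\in B} d(b,C)$ then yields $d(a,C) \le d(a,B) + \max_{b\in B} d(b,C)$.

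Finally, because this bound holds for every $a\in A$ with the \emph{same} additive constant $\max_{b\in B} d(b,C)$ appearing on the right-hand side, I would take the maximum over $a \in A$ on both sides. This gives
\begin{equation*}
    \max_{a\in A} d(a,C) \le \max_{a\in A} d(a,B) + \max_{b\in B} d(b,C)\enspace,
\end{equation*}
which is the claimed inequality.

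I do not expect any genuine obstacle: the entire argument is a direct application of the metric triangle inequality, and the only technical point is the existence of the minimizing and maximizing points, which is supplied by compactness of $A$, $B$, and $C$ together with continuity of the distance function. The one place to be careful is to select $c^\ast$ as a nearest point of $C$ to $b^\ast$ (rather than to $a$), since it is precisely this choice that lets the second summand be controlled uniformly by $\max_{b\in B} d(b,C)$.
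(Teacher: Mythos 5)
Your proof is correct and follows essentially the same route as the paper's: fix $a\in A$, use compactness to obtain a nearest point $b^\ast\in B$ to $a$ and a nearest point $c^\ast\in C$ to $b^\ast$, chain the pointwise triangle inequality through $(a, b^\ast, c^\ast)$, and take the supremum over $a$. The one subtlety you flag --- choosing $c^\ast$ nearest to $b^\ast$ rather than to $a$ --- is exactly the choice the paper makes as well.
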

\begin{proof}
    Fix $a_0\in A$. Since $A,B,C$ are compact, there are $b_0\in B$  such that $d(a_0,B) = d(a_0,b_0)$ and $c_0\in C$ such that $d(b_0, C) = d(b_0,c_0)$.
    Hence, 
    \begin{align*}
        d(a_0, C)
        &= \inf_{c\in C} d(a_0,c)
        \le d(a_0, c_0)
        \\ 
        & \le d(a_0, b_0) + d(b_0, c_0)
        = d(a_0, B) + d(b_0, C) 
        \\
        & \le \max_{a\in A} d(a,B) + \max_{b\in B} d(b,C)
        \enspace.
    \end{align*}
    Since $a_0\in A$ was arbitrary, we conclude
    \begin{equation*}
        \max_{a\in A} d(a,C) \le \max_{a\in A} d(a,B) + \max_{b\in B} d(b,C)\enspace.
    \end{equation*}
\end{proof}
\subsubsection*{Proof of \cref{theorem:bottleneckstabilityX}}

\begin{proof}
    Let $\sigma\in \mathrm{Del}(L)$ and denote by $f,f':\mathrm{Del}(L)\to \mathbb R^+$ the filter functions that define $\text{Flood}(X,L)$ and $\text{Flood}(X',L)$, respectively.
    Observe, that $f$ is the directed Hausdorff distance between $\hull{\sigma}$ and $X$, which satisfies the following triangle inequality (see \cref{lem:triangle_hausdorff}),
    \begin{equation*}
        f(\sigma) = \max_{p\in \hull{\sigma}} d(p,X) \le  \max_{p\in \hull{\sigma}} d(p,X') +  \max_{x'\in X'} d(x',X)
        =  f'(\sigma) + \max_{x'\in X'} d(x',X)
        \enspace,
    \end{equation*}
    and therefore
    $f(\sigma)- f'(\sigma) \le \max_{x'\in X'} d(x',X)$. As, similarly, $f'(\sigma)- f(\sigma) \le \max_{x\in X} d(x,X')$, we conclude
    \begin{equation}
        |f(\sigma)- f'(\sigma)| 
        \le \max\left(\max_{x\in X} d(x,X'), \max_{x'\in X'} d(x',X)\right) 
        = d_H(X,X')
        \enspace.
    \end{equation}
    The main stability theorem from \cite{Cohen-Steiner2007a} then implies Eq. \eqref{eq:stable1} which concludes the proof.
\end{proof}

\subsubsection*{Proof of \cref{corollary:alphavsflood}}
\begin{proof}
    Upon combination of \cref{theorem:bottleneckstabilityX,theorem:homotopytypeequiv},  and Hausdorff stability of Alpha complexes, we immediately get $\forall i\in \mathbb{N}$:
    \begin{align*}
         & d_B\left(\dgm_i(\mathrm{Alpha}(X)), \mathcal \dgm_i(\mathrm{Flood}(X,L))\right)
         \\
         & \hspace{1cm} \le d_B\left(\dgm_i\mathrm{Alpha}(X)), \dgm_i(\mathrm{Alpha}(L))\right)
         + 
         d_B\left(\dgm_i(\mathrm{Alpha}(L)), \dgm_i(\mathrm{Flood}(X,L))\right) 
         \\
         & \hspace{1cm} \le d_H(X,L) + d_H(X,L) = 2 d_H(X,L)
    \enspace.
    \end{align*}
\end{proof}

\subsection*{Proof of \cref{thm:stable_random_points}}

\begin{proof}
    To prove the theorem, we need to show that $\dgm_i(\mathrm{Flood}(X,L))$ and $\dgm_i(\mathrm{Flood}(X,L,P))$ are $\max_{\sigma \in \mathrm{Del}(L)} d_H(P_\sigma, \hull{\sigma})$ interleaved.     
    As already mentioned, $\mathrm{Flood}_r(X,L) \subset  \mathrm{Flood}_r(X,L,P)$. Consequently, it suffices to show that $\mathrm{Flood}_r(X,L,P) \subset \mathrm{Flood}_{r+\epsilon}(X,L)$ for any $r\ge 0$ and $\epsilon < \max_{\sigma \in \mathrm{Del}(L)} d_H(P_\sigma, \hull{\sigma})$.

    To this end, we assume that $\sigma\in \mathrm{Flood}_r(X,L,P)$, \ie, $P_\sigma \subset  \bigcup_{x\in X} B_r(x)$, and let $\epsilon<\max_{\sigma \in \mathrm{Del}(L)} d_H(P_\sigma, \hull{\sigma})$. 
    By assumption, for each $q\in \hull{\sigma}$, there is a point $p\in P_{\sigma}$ such that $d(q,p)\le \epsilon$. Moreover, since $\sigma\in \mathrm{Flood}_r(X,L,P)$, for each $p\in P_\sigma$ there exists a point $x_p\in X$ with $d(p,x_p)\le r$. 
    Hence,
    \begin{equation}
        d(q,X) \le d(q,x_p) \le d(q,p) + d(p,x_p) \le r +\epsilon
        \enspace,
    \end{equation}
    and thus $\mathrm{Flood}_r(X,L,P) \subset \mathrm{Flood}_{r+\epsilon}(X,L)$. Eq.~\eqref{eq:interleving_rp} then follows from \cite{Cohen-Steiner2007a}.
\end{proof}

\subsubsection*{Proof of \cref{lem:covering1}}
\begin{proof}
Let $\sigma =\{v_0,\dots,v_k\}\subset \mathbb R^d$ a $k$-simplex and let $P_\sigma = \{p=\sum_{i=0}^k \lambda_i v_i\in \colon \sum_{i=0}^k \lambda_i = 1, m \lambda_i \in \mathbb N \} \subset \hull{\sigma}$ be the grid points. 
For any $q := \sum_{i=0} \mu_i v_i \in \hull{\sigma}$, there exists a point $p:= \sum_{i=0} \lambda_i v_i\in P_\sigma$ such that $|\mu_i-\lambda_i|\le \nicefrac{1}{m}$ for every $i=0,\dots,k$. Now, let $\delta_i:=\mu_i -\lambda_i$ and observe that $\sum_{i=0}^k \delta_i = 0$. It follows that
\begin{align*}
    2 \left\Vert p-q \right\Vert^2 &= 2 \left\Vert \sum_{i=0}^k \delta_i v_i \right\Vert^2 
    = 2 \sum_{i,j} \delta_i \delta_j \langle v_i, v_j \rangle \\ 
     & = 2 \sum_{i,j} \delta_i \delta_j \langle v_i, v_j \rangle -  \sum_{j} \delta_j \sum_{i} \delta_i \Vert v_i \Vert^2-  \sum_{i} \delta_i \sum_j \delta_j \Vert v_j \Vert^2 \\
    & = - \sum_{i,j} \delta_i \delta_j 
    \left(  \Vert v_i \Vert^2 +  \Vert v_j \Vert^2 -
    2 \langle v_i, v_j \rangle \right)\\   
    & = - \sum_{i,j} \delta_i \delta_j \Vert v_i - v_j \Vert^2\enspace,
\end{align*}

and therefore that
\begin{equation}
    \Vert p-q \Vert^2  = -\sum_{i<j} \delta_i \delta_j \Vert v_i - v_j \Vert^2
    \le \frac {1} {m^2} \sum_{i<j} \Vert v_i - v_j \Vert^2
    % \le \frac  {k(k-1)}{2 m^2} \operatorname{diam}(\sigma)
    \enspace.
\end{equation}
Since this holds for any $q\in \hull {\sigma}$, we conclude 
$d_H(P_{\hull{\sigma}}, \hull{\sigma}) \le \frac 1 m \sqrt{\sum_{i<j} \Vert v_i - v_j \Vert^2}$.
\end{proof}

A similar result holds with high probability for \emph{randomly sampled points} as shown next.

\begin{lemma} \label{lem:covering}
    Let $P_\sigma \subset \mathbb R^d$ be a set of independently drawn points from a uniform distribution on $\hull{\sigma}$ of size $\Tilde{O}(\epsilon^{-d} \ln(\nicefrac{1}{\delta}))$. Then, with probability of at least $1-\delta$ (over the choice of $P_\sigma$), we have
    \begin{equation}
        d_H(P_\sigma, \hull{\sigma}) \le \epsilon \,\mathrm{diam}(\sigma)
        \enspace.
    \end{equation}
\end{lemma}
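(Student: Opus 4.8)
The plan is to run a covering-plus-occupancy argument adapted to the simplex geometry. Write $k\le d$ for the dimension of $\sigma$, and observe that since $P_\sigma\subset\hull{\sigma}$, the quantity $d_H(P_\sigma,\hull{\sigma})$ coincides with the directed distance $\vec d_H(\hull{\sigma},P_\sigma)=\max_{q\in\hull{\sigma}}\min_{p\in P_\sigma}d(q,p)$; it therefore suffices to guarantee that \emph{every} point of $\hull{\sigma}$ is close to some sample. Set the target radius $\rho:=\epsilon\,\mathrm{diam}(\sigma)$. First I would fix a maximal $(\rho/2)$-packing $\{c_1,\dots,c_M\}\subset\hull{\sigma}$; by maximality the balls $B_{\rho/2}(c_j)$ cover $\hull{\sigma}$, and a routine disjoint-volume estimate inside the affine hull $\aff(\sigma)$ (the balls $B_{\rho/4}(c_j)$ are pairwise disjoint and lie in a $k$-ball of radius $\le 2\,\mathrm{diam}(\sigma)$) bounds their number by $M\le(8/\epsilon)^{k}=O(\epsilon^{-k})$.

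The crux is a uniform lower bound on the probability $q_{\min}$ that a single uniform sample lands in a given covering ball. The obstacle is that $B_{\rho/2}(c_j)\cap\hull{\sigma}$ can occupy a tiny fraction of a Euclidean ball when $c_j$ sits near a vertex or low-dimensional face, so I would avoid comparing to a ball and instead use a homothety. For the scaling $h(x)=c_j+\lambda(x-c_j)=(1-\lambda)c_j+\lambda x$ with ratio $\lambda=\rho/(2\,\mathrm{diam}(\sigma))=\epsilon/2$, convexity of $\hull{\sigma}$ together with $c_j\in\hull{\sigma}$ gives $h(\hull{\sigma})\subset\hull{\sigma}$, while every point of $\hull{\sigma}$ lies within $\mathrm{diam}(\sigma)$ of $c_j$, so $h(\hull{\sigma})\subset B_{\rho/2}(c_j)$. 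Hence $h(\hull{\sigma})\subset B_{\rho/2}(c_j)\cap\hull{\sigma}$, and passing to $k$-dimensional volume yields $q_{\min}=\Pr[\,p\in B_{\rho/2}(c_j)\,]\ge\mathrm{vol}_k(h(\hull{\sigma}))/\mathrm{vol}_k(\hull{\sigma})=\lambda^{k}=(\epsilon/2)^{k}$, uniformly in $j$ and independent of the shape of $\sigma$.

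With these two ingredients the probabilistic step is immediate. Writing $N:=|P_\sigma|$, the probability that a fixed covering ball contains no sample is at most $(1-q_{\min})^{N}\le e^{-N q_{\min}}$, so by a union bound the probability that \emph{some} ball is empty is at most $M\,e^{-N q_{\min}}$. Forcing this below $\delta$ and substituting $M=O(\epsilon^{-k})$ and $q_{\min}\ge(\epsilon/2)^{k}$ gives the requirement $N\ge(2/\epsilon)^{k}\ln(M/\delta)=\tilde O\!\left(\epsilon^{-k}\ln(\nicefrac{1}{\delta})\right)$, which is subsumed by the stated size $\tilde O\!\left(\epsilon^{-d}\ln(\nicefrac{1}{\delta})\right)$ since $k\le d$ and $\epsilon<1$. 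On the complementary event, where every covering ball is nonempty, I close with the triangle inequality: any $q\in\hull{\sigma}$ lies in some $B_{\rho/2}(c_j)$ which contains a sample $p$, so $d(q,p)\le d(q,c_j)+d(c_j,p)\le\rho/2+\rho/2=\rho=\epsilon\,\mathrm{diam}(\sigma)$, and taking the maximum over $q$ yields $d_H(P_\sigma,\hull{\sigma})\le\epsilon\,\mathrm{diam}(\sigma)$.

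I expect the main obstacle to be precisely the uniform mass bound $q_{\min}$. A naive estimate comparing the intersection $B_{\rho/2}(c_j)\cap\hull{\sigma}$ to a full ball degrades badly for centers near the boundary faces of the simplex and would introduce a shape-dependent constant; the homothety argument is what makes the bound depend only on $\lambda=\epsilon/2$ and $k$, after which the net count and the union bound are standard and contribute only the logarithmic factors hidden in $\tilde O$.
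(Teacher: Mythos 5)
Your proposal is correct and follows essentially the same covering-plus-occupancy route as the paper, which partitions $\hull{\sigma}$ into $m = O(\epsilon^{-d})$ cells of diameter at most $\epsilon\,\mathrm{diam}(\sigma)$ and bounds the probability that some cell misses every sample. Your homothety argument for the uniform mass lower bound $(\epsilon/2)^{k}$ per covering ball is a worthwhile refinement: the paper's appeal to a ``standard balls in bins'' bound of the form $m e^{-|P_\sigma|/m}$ implicitly assumes each cell carries probability mass at least $1/m$, which a diameter-based partition does not guarantee near the boundary of the simplex, whereas your scaling argument supplies exactly that missing uniform lower bound at the cost of only a constant in the exponent.
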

\begin{proof}
    Let $\epsilon>0$, and let $\mathcal{Q}_r = \{ Q_1, \dots, Q_m\}$ be a partition of $\sigma$ into sets of diameter at most $r:= \epsilon \, \mathrm{diam}(\sigma)$, which can be obtained, \eg, by taking a covering of $\sigma$ with balls $B_j$ of radius $\nicefrac{r}{2}$ and letting \[Q_j = B_j \setminus \bigcup_{k=1}^{j-1} B_k\enspace.\]
    Note that $m$ is smaller than the covering number of $\hull{\sigma}$ for radius $r/2$, and therefore 
    $m \le (4\ {\rm diam}(\sigma)/r)^d = (4 \epsilon)^d$.
    % $m\leq \mathcal{N}(\hull{\sigma}, r/2) \leq (4\ {\rm diam}(\sigma)/r)^d$.
    By a standard \textit{balls in bins} argument \cite{Mitzenbacher05a}, we obtain 
    \[
    \mathbb{P}\left[\max_{p\in \sigma} d(p, P_\sigma) > \epsilon \, \mathrm{diam}(\sigma)\right] = 
    \mathbb{P}\left[\max_{p\in \sigma} d(p, P_\sigma) > r\right] \leq \mathbb{P}[\exists j : Q_j \cap P_\sigma = \emptyset] \leq m e^{- |P_\sigma|/m}\enspace.
    \]
    Taking $|P_\sigma| \geq (4/\epsilon)^d \big(d \ln{(4/\epsilon)} + \ln(\nicefrac{1}{\delta}) \big) \geq m \ln(m/\delta)$ yields $\mathbb{P}[\max_{x\in \sigma} d(x, P_\sigma) > \epsilon {\rm diam}(\sigma)] \leq \delta$ which establishes the claim.
\end{proof}

\subsubsection*{Proof of \cref{lem:filter}}

\begin{proof}
    Let $\sigma=\{v_0,\dots, v_k\}$ and let $p = \sum_{i=0}^k \lambda_i v_i \in \hull{\sigma}$, \ie, $\sum_{i=0}^k \lambda_i=1$ with $\lambda_i>0$.
    Since $\sigma \subset X$, $z = \arg \min_{x\in X} d(p,x)$ implies $d(z,p) \le d(p,v_i)$ for each $i\in \{1,\dots, k\}$. 
    It therefore suffices to minimize $d(p,x)$ over $x\in B_{\min_i d(p,v_i)} \cap X$.
    
    Below, we will show that if $B_r(c)$ is an enclosing ball of $\sigma$, then for any $p\in \hull{\sigma}$ it holds that $B_{\min_i d(p,v_i)}(p) \subset B_{\sqrt 2 r}(c)$, and therefore that $\max_{p\in \hull{\sigma}} \min_{x\in X} d(p,x) = \max_{p\in \hull{\sigma}} \min_{x\in X\cap  B_{\sqrt 2 r}(c)} d(p,x)$.
    
    First, consider an arbitrary point $c\in \mathbb R^d$. Observe that
    % \begin{align*}
       \[ \|v_i-c\|^2 = \|v_i-p\|^2 + \|p-c\|^2 + 2\langle v_i-p, p-c\rangle \]
        % \enspace,
    % \end{align*}
    for each $v_i\in \sigma$,
    and therefore
    \begin{equation*}
        \sum_{i=0}^k\lambda_i \|v_i-c\|^2 
        = \sum_{i=0}^k  \lambda_i  \|p-c\|^2 + \sum_{i=0}^k   \lambda_i \|v_i-p\|^2+ 2  \sum_{i=0}^k \langle\lambda_i ( v_i-p), p-c\rangle
        = \|p-c\|^2 + \sum_{i=0}^k   \lambda_i \|v_i-p\|^2
        \enspace,
    \end{equation*}
    where the last equality follows from $\sum_{i=0}^k \lambda_i = 1$ and $\sum_{i=0}^k \lambda_i v_i = p$. 
    In particular, if $B_r(c)$ is an enclosing ball of $\sigma$ (and therefore $\hull{\sigma}$), then
    \begin{equation*}
       \|p-c\|^2
       = \sum_{i=0}^k\lambda_i \|v_i-c\|^2  - \sum_{i=0}^k   \lambda_i \|v_i-p\|^2
       \le r^2 -  \min_{i=0,\dots,k}  \|v_i-p\|^2
       \enspace.
    \end{equation*}
    
     Hence, if $z \in  B_{\min_i d(p,v_i)}(p)$, \ie, 
     $\|z -p \| \le \min_{i=0,\dots,k} \|p-v_i\|$, then 
     \begin{equation*}
     \begin{split}
         \|z - c\| & \leq \min_{i} \|v_i-p\| + \|p-c\| \\
                   & \leq \min_i \|v_i-p\| + \sqrt {r^2 -  \min_{i} \|v_i-p\|^2} \\
                   & \le \max_{0 \leq s \leq r} s + \sqrt{ r^2 - s^2}\\
                   & \leq  \sqrt 2 r\enspace.
     \end{split}
     \end{equation*}
\end{proof}
\subsection{\texorpdfstring{Proof of \cref{theorem:homotopytypeequiv}}{Proof of Theorem 3}}

\label{section:appendix:mainproof}
In this section, we cover the \emph{homotopy equivalence} between $\mathrm{Flood}_r(X,X)$ and the union of balls $\bigcup_{x\in X} B_r(x)$. Specifically, we show that $|\mathrm{Flood}_r(X,X)|$ is homotopy equivalent to $|\mathrm{Alpha}_r(X)|$, which, according to the nerve theorem \cite[Corollary 4G.3]{Hatcher02a}, is homotopy equivalent to $\bigcup_{x\in X} B_r(x)$.

\subsubsection{Background tools}

Let us first collect some tools and background information on Voronoi diagrams and Delaunay triangulations which, in arbitrary dimensions, express themselves as a combinatorial complex.

\textbf{Voronoi complex.} Consider $X \subset \mathbb{R}^d$ to be a finite point set. Then, a Voronoi complex tessellates $\nR^d$ into closest-neighbor cells around the points of $X$. That is, we define by $\VC(x, X) = \{p \in \nR^d \colon d(p, x)
\le d(p, X)\}$ the \emph{Voronoi cell} of $x$ w.r.t.\ $X$. Below are some facts regarding Voronoi cells:
\begin{itemize}
    \item A Voronoi cell is a $d$-dimensional convex polyhedron, \ie, the intersection of half spaces.

    \item A Voronoi cell is unbounded if and only if $x$ is on the convex hull of $X$; in other words, the Voronoi cell is a convex polytope if $x$ is in the interior of the convex hull of $X$.

    \item As a convex polyhedron, the Voronoi cell itself possesses a combinatorial complex structure, \ie, its lower-dimensional faces are convex polyhedra again, and so are the non-empty intersections.
\end{itemize}

The \emph{Voronoi diagram} $\Vor(X)$ is usually defined geometrically as the union of the boundaries of the Voronoi cells. From a combinatorial perspective, it is more convenient to define the Voronoi diagram $\Vor(X)$ as the union of the Voronoi cells as complexes, and $\Vor(X)$ is then itself a complex of convex polyhedra. To emphasize this, we speak of the Voronoi complex and refer to the components of the Voronoi complex, $\Vor(X)$, as \emph{Voronoi faces}.

Any $k$-dimensional Voronoi face $f$ from $\Vor(X)$ is characterized by $d-k$ points $x_1, \dots, x_{d-k}$ of $X$ as follows: any point $x \in f$ is equidistant to $x_1, \dots, x_{d-k}$, $d(x, x_i) \le d(x, X)$ and, if $x$ is
from the relative interior $\mathrm{relint}f$ of $f$, then we even have $d(x, x_i) < d(x, X \setminus\{x_1,
\dots, x_{d-k} \})$. We call $x_1, \dots, x_{d-k}$ the \emph{defining points} of $f$.

\vskip1ex
\textbf{Delaunay complex.} We can naturally dualize a $k$-dimensional
Voronoi face $f$ by the simplex $\sigma$ formed by its defining points $x_1,
\dots, x_{d-k}$. If we do this for the entire Voronoi diagram $\Vor(X)$, we obtain the Delaunay complex $\Del(X)$. We distinguish between the
combinatorial simplex $\sigma = \{x_1, \dots, x_{d-k}\}$ and its geometric
realization $|\sigma| = \hull{\sigma}$. It will be convenient to denote $f$ as
$|\sigma|^\dagger$.

Note that $|\sigma|^\dagger$ and $|\sigma|$ are orthogonal, however
$|\sigma|^\dagger$ might not intersect $|\sigma|$. So, let us denote by $\aff
|\sigma|^\dagger$ the affine hull of $|\sigma|^\dagger$, \ie, the smallest
affine subspace supported by $|\sigma|^\dagger$ (\eg, in case of
$|\sigma|^\dagger$ being a line segment, $\aff |\sigma|^\dagger$ would be an infinite line).
% \begin{lemma}
%   \label{thm:closestonvoronoiedge}
Then, if
  $\dim |\sigma| <d$, the point $|\sigma| \cap \aff |\sigma|^\dagger$ is the closest equidistant
  point to the defining points of $\sigma$.
% \end{lemma}
Also note that the distance function $d(\cdot, \sigma)$ has no further local
minima on $\aff |\sigma|^\dagger$ and, in fact, is convex on $\aff
|\sigma|^\dagger$ (the Euclidean distance from a given point is convex on any affine subspace). In particular, if $\aff |\sigma|^\dagger$ is of dimension
1, then walking along this line away from $|\sigma| \cap \aff |\sigma|^\dagger$
\emph{increases} the distance to $\sigma$, and walking towards $|\sigma| \cap \aff
|\sigma|^\dagger$ \emph{decreases} this distance.

\vskip1ex
\textbf{Simplicial collapses.}
Given an abstract simplicial complex $\Sigma$, it is often possible to simplify its structure while preserving the homotopy type of its geometric realization $|\Sigma|$ using simplicial collapses.
Specifically, if $(\tau,\sigma)\in \Sigma$ are a pair of simplices such that $\tau\subset \sigma$ is a face of $\sigma$, $\dim \sigma = \dim \tau +1$ and $\tau$ has no other cofaces, then the removal of the pair $(\tau,\sigma)$ from $\Sigma$ is called an elementary collapse. 
Similarly, if $\dim \sigma > \dim \tau$ and all cofaces of $\nu \supset \tau$ of $\tau$ are faces $\nu \subset \sigma$ of $\sigma$, then (starting from the top) there is a sequence of elementary collapses removing all such simplices $\nu$, including $\tau$ and $\sigma$. Such a sequence is called a (simplicial) collapse, and a simplex $\tau$ satisfying the assumption is called a \emph{free face} of $\Sigma$. Importantly, simplicial collapses induce a deformation retraction between the geometric realizations of the complexes before and after the collapse, and therefore preserves the homotopy type.
Consider a filtered complex $\{\Sigma_r\colon r\ge 0\}$ that stays constant between filtration values $i<i+1$ but changes at $r=i+1$ when a simplex $\tau$ is added. If, at $r=i+1$,  a simplex $\tau$ is added, resulting in a change of the homotopy type between $\Sigma_i$ and $\Sigma_{i+1}$, then we call the addition of $\tau$ a \emph{critical event}. If instead multiple simplices are added at $i+1$ but their removal is a simplicial collapse from $\Sigma_{i+1}$ to $\Sigma_i$, then we call their addition a \emph{regular event}.

\vskip1ex
\textbf{Alpha complexes.}
The Alpha complex $\mathrm{Alpha}_r(X)$ with radius $r$ on a set $X\in \mathbb X^d$ is defined as the nerve of $\{\VC(x,X)\cap B_r(x)\colon x\in X\}$, \ie, $\sigma\subset X$ is a simplex of $\mathrm{Alpha}_r(X)$  if and only if $\bigcap_{x\in \sigma} (\VC(x,X)\cap B_r(x) ) \neq \emptyset $. By the nerve theorem, $|\mathrm{Alpha}_r(X)|$ is homotopy equivalent to $\bigcup_{x\in X} B_r(x)$. Moreover, as a subcomplex of the Delaunay triangulation $\mathrm{Del}(X)$, it is naturally embedded in $\mathbb R^d$.
An equivalent definition of Alpha complexes is given by the filtration function 
\begin{equation}
    f_{\mathrm{Alpha}}: \mathrm{Del}(X) \to \mathbb R, \quad \sigma \mapsto \inf \bigg\{r\ge 0 \colon \bigcap_{x\in \sigma} (\VC(x,X)\cap B_r(x) ) \neq \emptyset\bigg\}\enspace.
\end{equation}

\begin{lemma}
    \label{lem:freeface_alpha}
    Let $X\subset \mathbb R^d$ be in general position. A simplex $\tau \in \mathrm{Del}(X)$ is a free face of $\mathrm{Alpha}_{f_{\mathrm{Alpha}}(\tau)}$ if and only if $|\tau| \cap \mathrm{relint}| \tau|^\dagger = \emptyset$.
\end{lemma}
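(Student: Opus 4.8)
The plan is to reduce the combinatorial question of when $\tau$ is a free face to the purely geometric question of \emph{where} on the dual Voronoi face $|\tau|^\dagger$ the Alpha value $f_{\mathrm{Alpha}}(\tau)$ is attained. First I would make the filtration value explicit as $f_{\mathrm{Alpha}}(\tau) = \min_{p \in |\tau|^\dagger} d(p,v)$, where $v$ is any vertex of $\tau$: every $p \in |\tau|^\dagger$ is equidistant from all defining points of $\tau$ and realizes the nearest-neighbor distance to $X$, so $\bigcap_{x\in\tau}(\VC(x,X)\cap B_r(x))\neq\emptyset$ exactly when some $p\in|\tau|^\dagger$ has $d(p,v)\le r$. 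Writing $\rho(p)=d(p,v)$ for $p\in\aff|\tau|^\dagger$, the background facts give $\rho(p)^2 = R^2 + \|p-c\|^2$, with $c$ the circumcenter $\aff|\tau|\cap\aff|\tau|^\dagger$ and $R$ the circumradius; thus $\rho$ is strictly convex on $\aff|\tau|^\dagger$ with unique minimizer $c$. Consequently the minimizer $z_\tau$ of $\rho$ over the polytope $|\tau|^\dagger$ is the point of $|\tau|^\dagger$ nearest $c$, and it lies in the relative interior of a unique face $g^\ast$ of $|\tau|^\dagger$.

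Next I would translate this into coface structure via Delaunay--Voronoi duality. Since $\tau\subseteq\nu$ in $\Del(L)$ corresponds to $|\nu|^\dagger\subseteq|\tau|^\dagger$, and $\rho$ is unchanged on $|\nu|^\dagger\subseteq|\tau|^\dagger$ (every $p$ there is still equidistant from the vertices of $\tau$), we get $f_{\mathrm{Alpha}}(\nu)=\min_{p\in|\nu|^\dagger}\rho(p)\ge f_{\mathrm{Alpha}}(\tau)$, with equality iff $z_\tau\in|\nu|^\dagger$, i.e.\ iff $g^\ast\subseteq|\nu|^\dagger$, i.e.\ iff $\nu\subseteq\nu^\ast$, where $\nu^\ast$ is the simplex dual to $g^\ast$. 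General position guarantees $z_\tau$ is unique and these inequalities are strict off the interval, so the cofaces of $\tau$ lying in $\mathrm{Alpha}_{f_{\mathrm{Alpha}}(\tau)}$ are exactly the simplices $\nu$ with $\tau\subseteq\nu\subseteq\nu^\ast$. Hence $\tau$ is a free face precisely when $\nu^\ast\supsetneq\tau$ (then $\sigma=\nu^\ast$ witnesses the collapse, since all cofaces of $\tau$ are faces of $\nu^\ast$), while if $\nu^\ast=\tau$ then $\tau$ is maximal in $\mathrm{Alpha}_{f_{\mathrm{Alpha}}(\tau)}$ and cannot be free. In summary, $\tau$ is free iff $g^\ast\subsetneq|\tau|^\dagger$, i.e.\ iff $z_\tau\notin\mathrm{relint}|\tau|^\dagger$.

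It then remains to identify $z_\tau\notin\mathrm{relint}|\tau|^\dagger$ with the stated condition $|\tau|\cap\mathrm{relint}|\tau|^\dagger=\emptyset$, and this equivalence is the main obstacle. One inclusion is clean: any common point of $|\tau|\subseteq\aff|\tau|$ and $\mathrm{relint}|\tau|^\dagger\subseteq\aff|\tau|^\dagger$ must equal $c$, because the two affine hulls are orthogonal of complementary dimension and meet only at $c$; so $|\tau|\cap\mathrm{relint}|\tau|^\dagger\neq\emptyset$ forces $c\in\mathrm{relint}|\tau|^\dagger$, whence $z_\tau=c$ is an interior minimizer and $z_\tau\in\mathrm{relint}|\tau|^\dagger$. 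The converse requires that whenever the minimizer $z_\tau$ lands in $\mathrm{relint}|\tau|^\dagger$ (so that $z_\tau=c$ by interior-minimizer-equals-circumcenter) the circumcenter $c$ in fact lies in $|\tau|$, yielding a point of $|\tau|\cap\mathrm{relint}|\tau|^\dagger$. I expect the delicate step to be exactly this claim, $c\in\mathrm{relint}|\tau|^\dagger\Rightarrow c\in|\tau|$: it is where one must exploit general position together with the precise orthogonal coupling between the Delaunay simplex $|\tau|$ and its Voronoi dual $|\tau|^\dagger$ to pin the circumcenter inside $|\tau|$, and where the boundary and low-dimensional degeneracies of $|\tau|^\dagger$ (for instance when $|\tau|^\dagger$ collapses to a single Voronoi vertex) must be checked uniformly.
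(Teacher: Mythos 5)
Your reduction of ``free face'' to the location of the minimizer of the circumradius function on $|\tau|^\dagger$ is correct and, in places, more complete than the paper's own argument: identifying the cofaces of $\tau$ present at time $f_{\mathrm{Alpha}}(\tau)$ as exactly the simplices $\nu$ with $\tau\subseteq\nu\subseteq\nu^*$, where $\nu^*$ is dual to the unique face of $|\tau|^\dagger$ whose relative interior contains the minimizer $z_\tau$, is precisely the bookkeeping the paper delegates to ``standard textbooks'' when it sorts cofaces by degree. Otherwise the paper follows the same route (boundary versus interior location of the point of $|\tau|^\dagger$ closest to $\tau$).

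The genuine gap is the step you flagged, and it is worse than delicate: the implication $c\in\mathrm{relint}|\tau|^\dagger\Rightarrow c\in|\tau|$ is false in general, so the final identification cannot be completed for arbitrary $d$ and $\dim\tau$. Concretely, in $\nR^3$ take an obtuse triangle $\tau$ with circumcenter $c\in\aff|\tau|\setminus|\tau|$ and circumradius $R$, and add two points of $X$ on the line $\aff|\tau|^\dagger$ at distance $h>R$ on either side of $c$: then $\tau\in\Del(X)$, $z_\tau=c\in\mathrm{relint}|\tau|^\dagger$, so by your (correct) criterion $\tau$ is maximal in $\mathrm{Alpha}_{f_{\mathrm{Alpha}}(\tau)}$ and not free, yet $|\tau|\cap\mathrm{relint}|\tau|^\dagger=\emptyset$. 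Your intermediate characterization ($\tau$ free iff $c\notin\mathrm{relint}|\tau|^\dagger$) is the right one; it agrees with the stated condition $|\tau|\cap\mathrm{relint}|\tau|^\dagger=\emptyset$ only when $c\in|\tau|$ holds automatically, namely for $\dim\tau\le 1$, where $c$ is the vertex itself or the midpoint of the edge. Those are exactly the instances the paper uses in its $d=2$ collapsing argument, so to close your proof you should either restrict to $\dim\tau\le 1$ and observe $c\in|\tau|$ there, or replace the right-hand side of the lemma by $\aff|\tau|\cap\mathrm{relint}|\tau|^\dagger=\emptyset$. Note that the paper's own proof shares this blind spot: its assertion that $|\tau|\cap\mathrm{relint}|\tau|^\dagger=\emptyset$ forces the closest point of $|\tau|^\dagger$ to lie on $\partial|\tau|^\dagger$ implicitly assumes $c\in|\tau|$.
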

\begin{proof}
    Assume that $|\tau| \cap \mathrm{relint}| \tau|^\dagger = \emptyset$. Then, the point $q \in |\tau|^\dagger$ that is closest to $\tau$ is on the boundary $\partial|\tau|^\dagger$. Hence, $q$ is also closest to other points $x \in X \setminus \tau$, \ie, there are $\sigma \supset \tau$ with $q\in |\sigma|^\dagger$ and $f_{\mathrm{Alpha}}(\sigma) = f_{\mathrm{Alpha}}(\tau) = d(q,\tau)$. Sorting these cofaces in descending order by their degree, we get a sequence of elementary collapses which will eventually remove $\tau$.
    For details, we refer to standard textbooks such as \cite{Edelsbrunner10a}.
    
    Conversely, assume $\{q\}= |\tau| \cap \mathrm{relint} |\tau|^\dagger$. Then $q$ is the closest equidistant point to $\tau$ and therefore $f_{\mathrm{Alpha}}(\tau) = d(q,\tau)$. Moreover, since $q\in \mathrm{relint}| \tau|^\dagger$, there is no $\sigma\supset \tau$ with $q \in |\sigma|^\dagger$, and hence, no coface of $\tau$ is added at filtration value $f_{\mathrm{Alpha}}(\tau)$. Hence, $\tau$ is not a free face of $\mathrm{Alpha}_{f_{\mathrm{Alpha}}(\tau)}$.
\end{proof}

\subsubsection{Properties of the Flood filtration}

Throughout this and the subsequent section, we will only consider Flood complexes whose landmark set $L$ is the entire point set $X$.
In this situation, the filtration function is given by 
\begin{equation}
    f_{\mathrm{Flood}}: \mathrm{Del}(X) \to \mathbb R, \quad \sigma \mapsto \max \{d(p,X) \colon  p\in |\sigma|\}\enspace.
\end{equation}
It is worth mentioning that the point $p\in |\sigma|$ that realizes $f_{\mathrm{Flood}}(\sigma) = d(p,X)$ cannot be in the interior of a Voronoi region $\VC(x,X)$ of some point $x\in X$.

\begin{lemma}
    \label{lem:freeface_flood}
    Let $X\subset \mathbb R^2$ be in general position.
    A simplex $\tau \in \mathrm{Del}(X)$ is a free face of $\mathrm{Flood}_{f_{\mathrm{Flood}(\tau)}}$ if and only if $|\tau| \cap \mathrm{relint}| \tau|^\dagger = \emptyset$.
\end{lemma}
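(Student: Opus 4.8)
The plan is to mirror the proof of \cref{lem:freeface_alpha}, replacing the "closest point of $|\tau|^\dagger$ to $\tau$" (which governs the Alpha value) by the point $p^\ast \in |\tau|$ realizing the Flood value $f_{\mathrm{Flood}}(\tau) = \max_{p \in |\tau|} d(p, X)$, and exploiting the already-noted fact that $p^\ast$ lies on the Voronoi diagram $\Vor(X)$. Since vertices carry $f_{\mathrm{Flood}} = 0$ and hence have no coface in $\mathrm{Flood}_0$, they are never free faces, while each site satisfies $x \in \interior \VC(x,X) = \mathrm{relint}|\{x\}|^\dagger$; this settles $\dim\tau = 0$, and top-dimensional (triangle) simplices are never free faces and are treated separately. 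The substantive case is $\dim\tau = 1$, an edge $\tau = \{a,b\}$, for which $|\tau| \cap \aff|\tau|^\dagger = \{m\}$ is the midpoint, so that $|\tau| \cap \mathrm{relint}|\tau|^\dagger$ equals either $\{m\}$ or $\emptyset$ according to whether $m$ lies on the open dual Voronoi edge.

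For the direction $|\tau| \cap \mathrm{relint}|\tau|^\dagger = \{q\} \neq \emptyset$, I would first show $q = p^\ast$. The affine hull $\aff|\tau|^\dagger$ is exactly the locus equidistant from the vertices of $\tau$, so $q$ is the circumcenter of $\tau$; since $q \in \mathrm{relint}|\tau|^\dagger$, the ball $B(d(q,\tau))(q)$ is empty, whence $d(q,X) = d(q,\tau) = \max_{p \in |\tau|} \min_i d(p, v_i) \ge f_{\mathrm{Flood}}(\tau)$, while trivially $f_{\mathrm{Flood}}(\tau) \ge d(q,X)$ because $q \in |\tau|$; hence $p^\ast = q$. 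Any proper coface $\sigma \supsetneq \tau$ has $|\sigma|^\dagger$ a proper face of $|\tau|^\dagger$, so the interior point $q$ cannot be the farthest point of $\sigma$; thus every coface has strictly larger Flood value, $\tau$ has no coface in $\mathrm{Flood}_{f_{\mathrm{Flood}}(\tau)}$, and is not a free face. (Note this argument also covers acute triangles, consistently giving "not free.")

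For the direction $|\tau| \cap \mathrm{relint}|\tau|^\dagger = \emptyset$, the key geometric claim I would establish is that, for an adjacent Delaunay triangle $T = \tau \cup \{c\}$, one has $f_{\mathrm{Flood}}(T) = f_{\mathrm{Flood}}(\tau)$ if and only if $c$ lies in the open disk with diameter $ab$ (equivalently, the angle at $c$ is obtuse, equivalently the circumcenter of $T$ lies strictly on the far side of the line $ab$ from $c$). Granting this, I would track the positions of the two circumcenters along the perpendicular bisector of $ab$ to show that $m \notin \mathrm{relint}|\tau|^\dagger$ holds precisely when exactly one of the (at most two) adjacent apices lies in that diameter disk, using that "both apices inside" is incompatible with $\{a,b\}$ being a Delaunay edge. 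Consequently exactly one adjacent triangle $T$ becomes available at filtration value $f_{\mathrm{Flood}}(\tau)$ and acts as the unique maximal coface, so $\tau$ is a free face; conversely, when neither apex lies inside, the midpoint lands in $\mathrm{relint}|\tau|^\dagger$ and $\tau$ fails to be free.

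The main obstacle is the geometric claim of the previous paragraph. Its proof hinges on $p^\ast \in \Vor(X)$ together with the convexity of $d(\cdot,X)$ along each Voronoi edge: when $c$ is inside the diameter disk the circumcenter escapes $|T|$, so $|T|$ contains no interior Voronoi vertex and the maximizer of $d(\cdot,X)$ is forced onto $\partial|T|$, after which one must argue it lands on the longest edge $ab$ rather than on $ac$ or $bc$ (equivalently, that $ab$ realizes the largest per-edge Flood value). Pinning the maximizer to the correct boundary edge, and the dichotomy that makes the qualifying coface unique, are exactly the points where $X \subset \mathbb{R}^2$ is essential; they are more delicate than in the Alpha case, where the point of $|\tau|^\dagger$ closest to $\tau$ automatically lies on a Voronoi face shared with a coface.
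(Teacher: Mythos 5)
Your forward direction ($|\tau|\cap\mathrm{relint}|\tau|^\dagger\neq\emptyset$ implies not free) matches the paper's argument and is sound: $q$ is the flood maximizer of $|\tau|$, and every coface contains points of $|\tau|^\dagger$ strictly farther from $X$, so all cofaces arrive strictly later. Your counting observation -- that both apices lying in the open diameter disk of $ab$ would leave no empty circle through $a,b$ and hence contradict $\{a,b\}\in\Del(X)$ -- is also correct and is a clean way to get uniqueness of the collapsing partner.

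The gap is in the converse, and it sits exactly where you flag "the main obstacle": you never prove the claim that $f_{\mathrm{Flood}}(T)=f_{\mathrm{Flood}}(\tau)$ when the apex $c$ lies in the open diameter disk, i.e., that the maximizer of $d(\cdot,X)$ over $|T|$ lands on $|\tau|$. The partial justification offered does not close it. First, ``the circumcenter of $T$ escapes $|T|$, so $|T|$ contains no interior Voronoi vertex'' is a non sequitur: a Voronoi vertex in $\interior|T|$ need not be dual to $T$ -- obtuse Delaunay triangles elsewhere in $\Del(X)$ can place their circumcenters inside $|T|$, so local maxima of $d(\cdot,X)$ in $\interior|T|$ are not excluded by this reasoning. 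Second, even granting that the maximizer reaches $\partial|T|$, you leave open why it lies on $ab$ rather than $ac$ or $bc$; ``longest edge'' does not settle this, since the per-edge value $\max_{p\in|e|}d(p,X)$ depends on all of $X$, not just on $|e|$. The paper closes precisely this hole by a different mechanism: it takes $\sigma$ to be the coface whose Voronoi vertex is the endpoint $q$ of $|\tau|^\dagger$ nearest $\tau$, and shows that $d(\cdot,X)$ restricted to $|\sigma|$ increases along every Voronoi edge toward $|\tau|$, by proving (a) every Delaunay edge $\gamma$ whose dual crosses $|\sigma|$ has both endpoints in the half-plane $H$ bounded by $\aff|\tau|$ that contains $|\sigma|$ (via an argument about circular segments of the circumball $B$ of $\sigma$), and (b) the minimum of $d(\cdot,\gamma)$ on $\aff|\gamma|^\dagger$ is the midpoint of $|\gamma|$, which cannot lie in $\mathrm{relint}|\sigma|$ without violating the geometric realization of $\Del(X)$. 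Some version of (a)--(b) is what your plan still needs; without it, the reduction to the diameter-disk dichotomy, while structurally correct, is not yet a proof.
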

\begin{proof}
    Let $\{q\}= |\tau| \cap \mathrm{aff} |\tau|^\dagger$.
    Since $q$ is the unique point on $|\tau|$ that is equidistant to $\tau$, it is the point of $|\tau|$ that is contained in $X_r$ the latest, \ie, only after the largest radius $r$. Consequently, $q\in |\tau|$ implies $f_{\mathrm{Flood}}(\tau) = d(q,\tau)$. 
    Moreover, $q\in \mathrm{relint}|\tau|$ implies that every coface $\sigma \supset \tau$ contains a point $q'\in |\sigma| \cap |\tau|^\dagger$ with $d(q',\sigma) > d(q,\tau)$. Therefore, $f_{\mathrm{Flood}}(\sigma) > f_{\mathrm{Flood}}(\tau)$, and $\tau$ is not a free face.

    Assume that $|\tau| \cap \mathrm{relint}| \tau|^\dagger = \emptyset$.     
    Then, the point $q \in |\tau|^\dagger$ that is closest to $\tau$ is on the boundary $\partial|\tau|^\dagger$. Hence, $q$ is also closest to other points $x \in X \setminus \tau$, \ie, there are $\sigma \supset \tau$ with $q\in |\sigma|^\dagger$.
    In fact, since $X\in \mathbb R^2$, $|\tau|$ must be an edge and $|\sigma |= |\tau \cup \{v\}|$ must be a triangle with additional vertex $v$.
    To show that $\tau$ is a free face, we need to show that $\sigma$ has the same filtration value, \ie, that the point $o\in |\sigma|$ that realizes $f_{\mathrm{Flood}}(\sigma) = d(o,X)$ satisfies $o\in |\tau|$.
    To this end, denote by $H$ the closed half-space bounded by $\mathrm {aff}|\tau|$ that contains $|\sigma|$, and denote by $B:=B_{d(q,\sigma)}(q)$ the ball whose boundary is the circumcircle of $|\sigma|$.    
    Observe that $o$ must be on a Voronoi edge, and thus, we need to show that
    $d(\cdot, X)|_{|\sigma|}$ increases along the Voronoi edges towards $|\tau|$.   
    The latter is true, if every 1-simplex $\gamma \in \mathrm{Del}(X)$ whose Voronoi edge $|\gamma|^\dagger$ intersects $|\sigma|$ satisfies (i) $\gamma\subset H$, and (ii) the point of $\mathrm{aff}|\gamma|^\dagger$ with minimal distance to $\gamma$ is outside the interior $\mathrm{relint} |\sigma|$ (because $d(\cdot,\gamma)$ is convex on $|\gamma|^\dagger$)\ .
    
    We first show $\gamma \subset H$. In fact, we show the (slightly) more general result that for any $x\in X$,  $\VC(x,X) \cap (B\cap H) \neq \emptyset$ implies $x\in H$. Obviously, the latter is true, if $x \in \tau$. Hence, from now on we assume $x\in X\setminus \tau$, as illustrated in \cref{fig:support_freeface_flood}.
    By assumption, there is a point $p\in \VC(x,X)\cap (B\cap H)$.
    Since $p\in \VC(x,X)$, it holds that $t:=d(p,x) < d(p,\tau)$. Further, $p \in B\cap H$, \ie, $p$ is in the minor circular segment of $B$ defined by $|\tau|$. In particular, $p$ is contained in the subset of $H$ that orthogonally projects onto $|\tau|$. Hence, the circle $C_t(p)$ with center $p$ and radius $t$ intersects the affine hull $\mathrm{aff} |\tau|$ only in a subset of the \emph{segment} $|\tau|$. As $|\tau|$ is a chord of $B$, this implies that $C_t(p)\setminus B \subset H$. Since also $x\in C_t(p)\setminus B$, we conclude $x\in H$. 
    
    To finish the proof, we need to show that the point $c\in \mathrm{aff}|\gamma|^\dagger$ that minimizes the distance to $\gamma$ is not in the interior of $|\sigma|$. But this point $c$ is simply the midpoint of $|\gamma|$, and therefore $c\in |\gamma|$. Hence, $c\in \mathrm{relint}|\sigma|$ would imply that $|\gamma|$ intersects $\mathrm{relint}|\sigma|$, which contradicts that $|\mathrm{Del}(X)|$ is a well-defined geometric realization, \ie, that realizations of simplices only intersect in a common face.
\end{proof}

\begin{figure}
    \centering
    \includegraphics[width=0.8\linewidth]{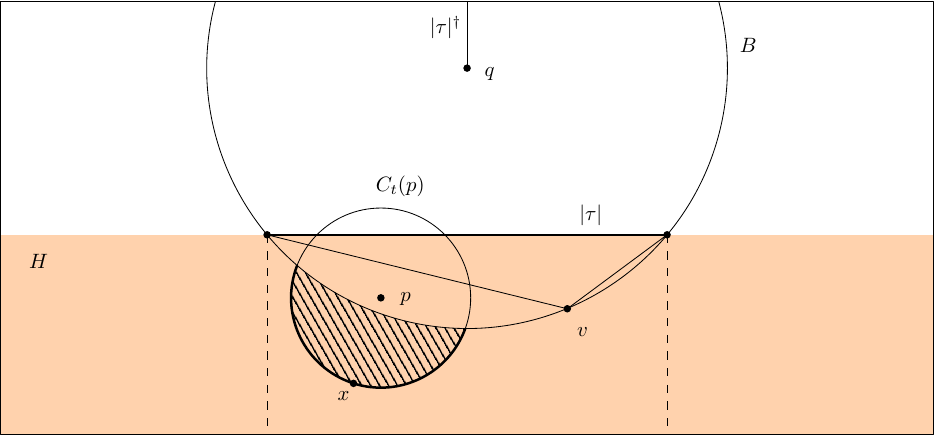}
    \vspace{0.1cm}
    \caption{The existence of $p \in \VC(x,X) \cap (B \cap H)$ implies that $x$ is on the circular arc that bounds the tiled region, and therefore $x \in H$.}
    \label{fig:support_freeface_flood}
\end{figure}

\begin{lemma}
    \label{lem:alpha_flood_simul}
    Let $X\subset \mathbb R^d$ in general position and let $\sigma \in \mathrm{Del}(X)$. Then, $f_{\mathrm{Flood}}(\sigma) \le f_{\mathrm{Alpha}}(\sigma)$ with equality if and only if $|\sigma| \cap |\sigma|^\dagger \neq \emptyset$.
\end{lemma}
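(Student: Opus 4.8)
The plan is to reduce both filtration values to extremal distance problems and connect them through a single well-chosen point. First I would rewrite $f_{\mathrm{Alpha}}(\sigma)$ directly from its definition: a point $p$ lies in $\bigcap_{x\in\sigma}\VC(x,X)\cap B_r(x)$ exactly when $p\in|\sigma|^\dagger$ and the (then common) distance $d(p,x)$, $x\in\sigma$, is at most $r$, so $f_{\mathrm{Alpha}}(\sigma)=\min_{p\in|\sigma|^\dagger} d(p,\sigma)$, attained at a unique minimizer $p^\dagger\in|\sigma|^\dagger$ (uniqueness from strict convexity of $p\mapsto\|p-v_i\|^2$). By construction $p^\dagger$ is equidistant to all vertices $v_0,\dots,v_k$ of $\sigma$, with common distance $\|v_i-p^\dagger\|=f_A:=f_{\mathrm{Alpha}}(\sigma)$. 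On the Flood side we simply have $f_{\mathrm{Flood}}(\sigma)=\max_{p\in|\sigma|} d(p,X)$.

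For the inequality $f_{\mathrm{Flood}}(\sigma)\le f_{\mathrm{Alpha}}(\sigma)$, the key step is to apply the polarization identity established in the proof of \cref{lem:filter} with center $c=p^\dagger$. For any $p=\sum_i\lambda_i v_i\in|\sigma|$ this identity reads $\sum_i\lambda_i\|v_i-p^\dagger\|^2=\|p-p^\dagger\|^2+\sum_i\lambda_i\|v_i-p\|^2$, which, using $\|v_i-p^\dagger\|=f_A$, simplifies to $\sum_i\lambda_i\|v_i-p\|^2=f_A^2-\|p-p^\dagger\|^2$. Since each $v_i\in\sigma\subset X$, I would then chain $d(p,X)^2\le\min_i\|v_i-p\|^2\le\sum_i\lambda_i\|v_i-p\|^2=f_A^2-\|p-p^\dagger\|^2\le f_A^2$, where the middle inequality is just the fact that a weighted average dominates its minimum. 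Taking the maximum over $p\in|\sigma|$ yields $f_{\mathrm{Flood}}(\sigma)\le f_A$.

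The equality characterization falls out of the same chain. For the ``if'' direction, suppose $q\in|\sigma|\cap|\sigma|^\dagger$; since $q\in|\sigma|^\dagger$ forces $d(q,X)=d(q,\sigma)\ge f_A$ (as $f_A$ is the minimum over $|\sigma|^\dagger$), while $q\in|\sigma|$ gives $d(q,X)\le f_{\mathrm{Flood}}(\sigma)\le f_A$, the chain $f_A\le d(q,X)\le f_{\mathrm{Flood}}(\sigma)\le f_A$ collapses and equality holds. For the ``only if'' direction I would argue by contraposition: if $|\sigma|\cap|\sigma|^\dagger=\emptyset$ then $p^\dagger\notin|\sigma|$, so $\delta:=d(p^\dagger,|\sigma|)>0$, and the residual term forces $d(p,X)^2\le f_A^2-\|p-p^\dagger\|^2\le f_A^2-\delta^2<f_A^2$ for every $p\in|\sigma|$, hence $f_{\mathrm{Flood}}(\sigma)<f_A$ strictly.

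The main obstacle is spotting the right point to feed into the polarization identity: choosing $c=p^\dagger$, the Alpha-optimal circumcenter sitting on the Voronoi face $|\sigma|^\dagger$, is exactly what turns the left-hand side into the constant $f_A^2$ and isolates the slack term $\|p-p^\dagger\|^2$, which simultaneously delivers the inequality and pins down the equality case. The remaining care is routine: verifying $d(q,X)=d(q,\sigma)$ for $q\in|\sigma|^\dagger$ straight from the Voronoi-cell definition, and confirming uniqueness of $p^\dagger$ so that $\delta>0$ is well-defined. Notably, this argument is dimension-independent, so, unlike \cref{theorem:homotopytypeequiv}, it needs no restriction to $\mathbb R^2$.
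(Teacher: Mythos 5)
Your proof is correct, and it takes a genuinely different route from the paper's. The paper anchors everything at the center $c$ of the \emph{minimal enclosing ball} of $\sigma$: it observes $c\in|\sigma|$, asserts $f_{\mathrm{Flood}}(\sigma)\le d(c,\sigma)$, deduces $c\in\bigcap_{x\in\sigma}B_r(x)$ at $r=f_{\mathrm{Alpha}}(\sigma)$ from the non-emptiness of $|\sigma|^\dagger\cap\bigcap_{x\in\sigma}B_r(x)$, and settles the equality case by arguing that equality forces this intersection to be the singleton $\{c\}$, while $|\sigma|\cap|\sigma|^\dagger$ is either empty or equal to $\{c\}$. You instead anchor at the Alpha minimizer $p^\dagger\in|\sigma|^\dagger$ and run the polarization identity from the proof of \cref{lem:filter} with center $p^\dagger$, obtaining the pointwise bound $d(p,X)^2\le f_{\mathrm{Alpha}}(\sigma)^2-\|p-p^\dagger\|^2$ for every $p\in|\sigma|$. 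This buys two things. First, you bypass the paper's step $f_{\mathrm{Flood}}(\sigma)\le d(c,\sigma)$, which under the paper's own convention $d(c,\sigma)=\min_{v\in\sigma}d(c,v)$ is delicate: for a simplex whose vertices do not all lie on the boundary of its minimal enclosing ball (e.g., a very obtuse triangle), $\min_{v}d(c,v)$ can be much smaller than $\max_{p\in|\sigma|}\min_v d(p,v)$, so that step really needs to be read with the enclosing \emph{radius} $\max_v d(c,v)$ and then justified by essentially the polarization argument you supply explicitly. Second, your ``only if'' direction comes with a quantitative gap, $f_{\mathrm{Flood}}(\sigma)^2\le f_{\mathrm{Alpha}}(\sigma)^2-d(p^\dagger,|\sigma|)^2$, which makes the strictness immediate rather than relying on the terser singleton argument. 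The points you defer as routine (attainment and uniqueness of $p^\dagger$ by convexity of the Voronoi face, and $d(q,X)=d(q,\sigma)$ for $q\in|\sigma|^\dagger$) are indeed unproblematic, and, like the paper's lemma, your argument works in arbitrary dimension.
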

\begin{proof}
    Denote by $c$ the center of the minimal enclosing ball of $\sigma$,  observe that $c\in |\sigma|$, and that $f_{\mathrm{Flood}}(\sigma) \le d(c,\sigma)$. 
    Assume that $f_{\mathrm{Alpha}}(\sigma) = r$. Then, $|\sigma|^\dagger \cap \bigcap_{x\in \sigma} B_r(x) \neq \emptyset$, and therefore $c\in \bigcap_{x\in \sigma}  B_r(x) \neq \emptyset$. In particular,
    $c\in \bigcap_{x\in \sigma}  B_r(x)$, and thus 
    $f_{\mathrm{Flood}}(\sigma) \le d(c,\sigma)\le r = f_{\mathrm{Alpha}}(\sigma)$.
    Herein, equality holds if and only if $|\sigma|^\dagger \cap \bigcap_{x\in \sigma} B_r(x)  = \{c\}$. The lemma then follows because $c\in |\sigma|$ and $|\sigma| \cap |\sigma|^\dagger$ is either empty or a singleton $\{c\}$.
\end{proof}

\subsubsection{Proof of the theorem}

\begin{lemma}
    \label{lem:critical_events_simul}
    Let $X\subset \mathbb R^2$ be in general position and let $\sigma\in \mathrm{Del}(X)$. The addition of $\sigma$ to $\mathrm{Alpha}(X)$ is a critical event if and only if the addition of $\sigma$ to $\mathrm{Flood}(X)$ is a critical event.
    Moreover, if both are true, then, $f_{\mathrm{Flood}}(\sigma) = f_{\mathrm{Alpha}}(\sigma)$. 
\end{lemma}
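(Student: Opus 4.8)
The plan is to reduce the notion of a \emph{critical event} to the discrete Morse structure of $\Del(X)$ and then to observe that this structure is shared by the two filtrations. Recall that, at a fixed filtration value, the simultaneous addition of a batch of simplices is a \emph{regular event} precisely when that batch can be removed by simplicial collapses, i.e.\ when it decomposes into pairs $(\tau,\sigma)$ consisting of a free face $\tau$ and its matched coface $\sigma\supset\tau$; every remaining addition is a \emph{critical event}. Hence it suffices to show that the free faces, their matched cofaces, and the resulting set of critical (unmatched) simplices coincide for $\mathrm{Flood}(X)$ and $\mathrm{Alpha}(X)$, and that the two filtration functions agree on every critical simplex.

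First I would settle the free-face and matching structure. By \cref{lem:freeface_alpha} and \cref{lem:freeface_flood}, a simplex $\tau$ is a free face of $\mathrm{Alpha}_{f_{\mathrm{Alpha}}(\tau)}$ if and only if it is a free face of $\mathrm{Flood}_{f_{\mathrm{Flood}}(\tau)}$, in both cases exactly when $|\tau|\cap\mathrm{relint}|\tau|^\dagger=\emptyset$; so the two filtrations have the \emph{same} free faces. Moreover, the matched coface is determined by $\Del(X)$ alone: as in the proofs of both lemmas, the point $q$ of $|\tau|^\dagger$ closest to $\tau$ lies on $\partial|\tau|^\dagger$, whence $q\in|\sigma|^\dagger$ for a coface $\sigma\supset\tau$, and this $\sigma$ is the collapse partner in either filtration. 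Those same proofs show $f(\tau)=f(\sigma)$ for the respective filtration function, so each matched pair is added at a single value and collapses there. Consequently the matchings coincide, and so do the sets of unmatched simplices; this yields the first assertion, that the addition of $\sigma$ is a critical event in $\mathrm{Alpha}(X)$ if and only if it is one in $\mathrm{Flood}(X)$.

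For the value statement, suppose $\sigma$ is critical. Then $\sigma$ is not a free face, so $|\sigma|\cap\mathrm{relint}|\sigma|^\dagger\neq\emptyset$: for a vertex $v$ one has $|v|\cap\mathrm{relint}|v|^\dagger=\{v\}$, and for an edge non-freeness gives $|e|\cap\mathrm{relint}|e|^\dagger\neq\emptyset$ directly from the free-face lemmas. For a triangle $T$, criticality means $T$ is not matched downward with one of its edges, which forces its circumcenter into $|T|$ (an obtuse triangle, whose circumcenter lies outside across its longest edge, would instead collapse with that edge). In every case $|\sigma|\cap|\sigma|^\dagger\neq\emptyset$, so \cref{lem:alpha_flood_simul} yields $f_{\mathrm{Flood}}(\sigma)=f_{\mathrm{Alpha}}(\sigma)$, as claimed.

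The hard part will be the top-dimensional (triangle) case when checking that the matching is well defined and identical for both filtrations. One must consistently resolve the delicate configurations, e.g.\ an edge that is free because an \emph{adjacent} triangle is obtuse, or two neighbouring triangles sharing their common longest (Gabriel) edge, and confirm that a triangle is unmatched exactly when its circumcenter lies in its interior. The saving grace is that all of this combinatorial geometry depends only on $\Del(X)$ and on the positions of the circumcenters, hence is the same for $\mathrm{Alpha}(X)$ and $\mathrm{Flood}(X)$; the two free-face lemmas certify both that the free faces agree and that filtration values coincide along every matched pair, which is exactly what makes the critical events align.
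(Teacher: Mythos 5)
Your proof takes essentially the same route as the paper's: it reduces criticality to the shared free-face/collapse structure certified by \cref{lem:freeface_alpha,lem:freeface_flood} (the geometric condition $|\tau|\cap\mathrm{relint}|\tau|^\dagger=\emptyset$ being filtration-independent), and obtains the value equality from \cref{lem:alpha_flood_simul}. The only difference is that you spell out the top-dimensional case explicitly --- a critical triangle must contain its circumcenter, since an obtuse triangle is collapsed with its longest (non-Gabriel) edge --- a step the paper leaves implicit in its one-line appeal to the three lemmas, and your resolution of it is correct.
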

\begin{proof}
    Let $\sigma\in \mathrm{Del}(X)$. The addition of $\sigma$
    to $\mathrm{Flood}(X,X)$ or $\mathrm{Alpha}(X)$ is a regular event if and only if $\sigma$ is a free face or a coface of a free face in $\mathrm{Flood}_{f_{\mathrm{Flood}}(\sigma)}$ or $\mathrm{Alpha}_{f_{\mathrm{Alpha}}(\sigma)}$, respectively.
    By definition, the addition of $\sigma$ is a critical event if it is not regular.
    The lemma then follows directly from \cref{lem:freeface_flood,lem:freeface_alpha,lem:alpha_flood_simul}.
\end{proof}

\begin{theorem}
    Let $X\subset \mathbb R^2$ be in general position. Then,
    $\mathrm{Flood}_r(X,X), \mathrm{Alpha}_r(X)$ and $\bigcup_{x\in X} B_r(x)$ have the same homotopy type for any $r\ge 0$.
\end{theorem}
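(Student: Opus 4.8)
The plan is to deduce the statement from the chain of lemmas above, the governing observation being that the Alpha and Flood filtrations of $\mathrm{Del}(X)$ differ only by simplices that can be collapsed away. First, by the nerve theorem \cite[Corollary 4G.3]{Hatcher02a}, $|\mathrm{Alpha}_r(X)|$ is homotopy equivalent to $\bigcup_{x\in X}B_r(x)$, so it suffices to prove $|\mathrm{Flood}_r(X,X)|\simeq|\mathrm{Alpha}_r(X)|$ for every fixed $r\ge 0$. Since \cref{lem:alpha_flood_simul} gives $f_{\mathrm{Flood}}(\sigma)\le f_{\mathrm{Alpha}}(\sigma)$ for all $\sigma\in\mathrm{Del}(X)$, every simplex enters the Flood filtration no later than the Alpha filtration, whence $\mathrm{Alpha}_r(X)\subseteq\mathrm{Flood}_r(X,X)$ as subcomplexes of $\mathrm{Del}(X)$. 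I would then show that this inclusion is a homotopy equivalence by exhibiting a simplicial collapse $\mathrm{Flood}_r(X,X)\searrow\mathrm{Alpha}_r(X)$, which by the deformation-retraction property of collapses yields the claim.

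The second step is to understand the set of extra cells $D:=\mathrm{Flood}_r(X,X)\setminus\mathrm{Alpha}_r(X)$, i.e.\ those $\sigma$ with $f_{\mathrm{Flood}}(\sigma)\le r<f_{\mathrm{Alpha}}(\sigma)$. In particular $f_{\mathrm{Flood}}(\sigma)<f_{\mathrm{Alpha}}(\sigma)$, so \cref{lem:alpha_flood_simul} forces $|\sigma|\cap|\sigma|^\dagger=\emptyset$ for every $\sigma\in D$; equivalently $|\sigma|\cap\mathrm{relint}|\sigma|^\dagger=\emptyset$, which is exactly the free-face condition of \cref{lem:freeface_flood,lem:freeface_alpha}. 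I would then invoke the discrete Morse structure of $\mathrm{Del}(X)$: there is an acyclic matching (discrete gradient) $V$ whose critical cells are precisely the $\sigma$ with $|\sigma|\cap|\sigma|^\dagger\neq\emptyset$, and whose matched pairs $(\tau,\sigma)$ satisfy both $f_{\mathrm{Alpha}}(\tau)=f_{\mathrm{Alpha}}(\sigma)$ and $f_{\mathrm{Flood}}(\tau)=f_{\mathrm{Flood}}(\sigma)$. The identity of the critical cells together with the equality of Alpha-values on matched pairs is the classical Morse theory of the Alpha filtration, while the equality of Flood-values on pairs is exactly what the proof of \cref{lem:freeface_flood} establishes in $\mathbb R^2$ (the free edge and its matched triangle are realized at the same point, hence have the same value); \cref{lem:critical_events_simul} packages the resulting coincidence of critical events.

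Given such a matching, the collapse follows formally. Every cell of $D$ is non-critical (as $|\sigma|\cap|\sigma|^\dagger=\emptyset$ there), hence matched. If $\sigma\in D$ has partner $\sigma'$, then $f_{\mathrm{Flood}}(\sigma')=f_{\mathrm{Flood}}(\sigma)\le r$ and $f_{\mathrm{Alpha}}(\sigma')=f_{\mathrm{Alpha}}(\sigma)>r$, so $\sigma'\in D$ as well. Thus $D$ is a disjoint union of matched pairs, none of which meets $\mathrm{Alpha}_r(X)$, and the restriction of $V$ to $D$ is a perfect acyclic matching. By the standard collapsibility theorem of discrete Morse theory, this orders the pairs of $D$ into a sequence of elementary collapses realizing $\mathrm{Flood}_r(X,X)\searrow\mathrm{Alpha}_r(X)$, which completes the argument.

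I expect the genuine obstacle to be the construction and verification of the matching on the Flood side, i.e.\ turning \cref{lem:freeface_flood} into a globally consistent, acyclic discrete gradient. Concretely, one must check that the planar circumcircle and half-space analysis underlying \cref{fig:support_freeface_flood} assigns to each extra edge a \emph{unique} matched triangle of equal Flood-value, that no conflicts arise, and that the induced matching is acyclic. This is precisely the step that uses $d=2$ (the dual of an edge is a segment and the dual of a triangle is a point, so the free-face bookkeeping is one-dimensional), and it is the reason the higher-dimensional case is conjectural rather than proved.
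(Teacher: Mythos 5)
Your proposal is correct in substance but reorganizes the argument around a single global discrete gradient rather than the paper's induction over filtration values, so it is a genuinely different route. The paper fixes the ordered list $t_0<t_1<\dots$ of all Alpha and Flood values and inductively maintains a sequence of collapses from $\mathrm{Flood}_{t_i}(X,X)$ onto $\mathrm{Alpha}_{t_i}(X)$, splitting each step into three cases (a simplex enters Flood only, Alpha only, or both), arguing separately that prepending, omitting, or reusing collapses keeps the sequence well defined, and finishing with a perturbation argument to make the cases exclusive. You instead observe that $D=\mathrm{Flood}_r(X,X)\setminus\mathrm{Alpha}_r(X)$ consists entirely of cells with $|\sigma|\cap|\sigma|^\dagger=\emptyset$ (via the equality case of \cref{lem:alpha_flood_simul}) and collapse $D$ in one stroke using a perfect acyclic matching on it. Both routes rest on the same three lemmas (\cref{lem:freeface_alpha,lem:freeface_flood,lem:alpha_flood_simul}); yours is cleaner --- no case analysis, no perturbation, uniform in $r$ --- but shifts the entire burden onto the existence of a globally consistent acyclic matching whose pairs have equal Alpha \emph{and} equal Flood value, which is exactly the obstacle you flag and do not discharge. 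That obstacle is surmountable rather than fatal: the Alpha side is the generalized discrete Morse structure of the Delaunay radius function \cite{bauer2017}, which in $\mathbb R^2$ under general position pairs each non-Gabriel edge $\tau$ with the triangle dual to the endpoint of $|\tau|^\dagger$ nearest to $\tau$, and the proof of \cref{lem:freeface_flood} shows that precisely this pair also has equal Flood value. The one point you should make explicit is that the Flood free-face pairing and the Alpha gradient pairing select the \emph{same} triangle for each non-critical edge (they do, since both are determined by the unique closest point of $|\tau|^\dagger$ to $\tau$); without this identification the equal-Flood-value claim would not follow from the lemma. With that verified, restricting the matching to $D$ and invoking the standard collapsibility theorem for perfect acyclic matchings on $K\setminus K'$ is sound, and your argument goes through; what the paper's bookkeeping buys in exchange is self-containedness, since it never needs the global acyclicity result.
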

\begin{proof}
    The homotopy equivalence between $\mathrm{Alpha}_r(X)$ and $\bigcup_{x\in X} B_r(x)$ follows directly from the nerve theorem, see also \cite{edelsbrunner1993} for the construction of a particular deformation retraction. Hence, it suffices to show the homotopy equivalence between  $\mathrm{Flood}_r(X,X)$ and $\mathrm{Alpha}_r(X)$. 
    To this end, we will construct a sequence of simplicial collapses from $\mathrm{Flood}_r(X,X)$ onto $\mathrm{Alpha}_r(X)$ for each $r$ where each collapse is an elementary collapse of an (edge, triangle) pair.
    
    Let $0 =t_0<t_1<\dots< t_\infty = \infty$ be the ordered list of filtration values for which a simplex $\sigma\in \mathrm{Del}(X)$ exists with $f_{\mathrm{Flood}}(\sigma) = t_i$ or $f_{\mathrm{Alpha}}(\sigma) = t_i$. 
    We prove by induction on the $t_i$.

    \textbf{Induction hypothesis.} For any $t_i$, there exists a sequence of simplicial collapses from $\mathrm{Flood}_r(X,X)$ to $\mathrm{Alpha}_r(X)$.
    
    \textbf{Induction start.}    
    At filtration value $t_0=0$, it holds that $\mathrm{Flood}_0(X,X) = X =\mathrm{Alpha}_0(X)$.

    \textbf{Induction step.}
    Assume that there exists a sequence of collapses from $\mathrm{Flood}_{t_i}(X,X)$ onto $\mathrm{Alpha}_{t_i}(X)$. We will modify this sequence to get a sequence of collapses from $\mathrm{Flood}_{t_{i+1}}(X,X)$ onto $\mathrm{Alpha}_{t_{i+1}}(X)$. \emph{For brevity, we will denote $F_{t}:= \mathrm{Flood}_{t}(X,X)$ and $A_t:=\mathrm{Alpha}_{t}(X)$}. 
    
    Given a filtration value $t_{i+1}$ there are three non-exclusive possibilities: 
    \begin{enumerate}[label=(\roman*)]
        \item There is $\sigma \in \mathrm{Del}(X)$ such that $f_\mathrm{Flood}(\sigma) = t_{i+1}$ but $f_{\mathrm{Alpha}}(\sigma)> t_{i+1}$.
        \item There is $\sigma \in \mathrm{Del}(X)$ such that $f_{\mathrm{Alpha}}(\sigma) = t_{i+1}$ but $f_{\mathrm{Flood}}(\sigma) < t_{i+1}$.
        \item There is $\sigma \in \mathrm{Del}(X)$ such that $f_{\mathrm{Flood}}(\sigma) = t_{i+1} = f_{\mathrm{Alpha}}(\sigma) $.
    \end{enumerate}
    For now, assume that at each filtration value $t_{i+1}$ only one case is true.

    \underline{Case (i)}. By \cref{lem:critical_events_simul}, the addition of each such $\sigma$  to $\mathrm{Flood}_{t_i}(X,X)$ is a regular event. 
    Specifically, $F_{t_{i+1}} = F_{t_{i}} \cup f_{\mathrm{Flood}}^{-1}(\{t_{i+1}\})$, and the removal of $f_{\mathrm{Flood}}^{-1}(\{t_{i+1}\})$ from $F_{t_{i+1}}$ is a simplicial collapse. Thus we can simply prepend this collapse to the existing sequence of collapses given by the induction hypothesis.

    \underline{Case (ii)}. Similarly as above, \cref{lem:critical_events_simul} implies that the addition of $\sigma$ to $A_{t_i}$ is regular event and does not change its homotopy type. Specifically, $A_{t_{i+1}} = A_{t_{i}} \cup f_{\mathrm{Alpha}}^{-1}(\{t_{i+1}\})$, and the removal of $f_{\mathrm{Alpha}}^{-1}(\{t_{i+1}\})$ from $A_{t_{i+1}}$ is a simplicial collapse.
    Since $F_{r}\subset A_{r}$ for any $r\ge 0$, the pair $f_{\mathrm{Alpha}}^{-1}(\{t_{i+1}\})$ was already added to $\mathrm{Flood}_r$ at a previous filtration time, thus (see (i)), its removal must be in our sequence of simplicial collapses from $F_{t_i}$ onto $A_{t_i}$. Thus, the obvious candidate for a sequence from $F_{t_{i+1}}$ onto $A_{t_{i+1}}$ is to take the existing sequence of collapses but to omit collapsing $f_{\mathrm{Alpha}}^{-1}(\{t_{i+1}\})$. Note that collapsing  $f_{\mathrm{Alpha}}^{-1}(\{t_{i+1}\})$ can only create free faces that are in $A_{t_i}$, so the collapses in our sequence from $F_{t_i}$ onto $A_{t_i}$ are not affected by skipping the former. Thus, we have constructed a well-defined sequence of collapses from $F_{t_{i+1}}$ onto $A_{t_{i+1}}$.

    \underline{Case (iii)}. The last case is $F_{t_{i+1}}\setminus F_{t_{i}} = A_{t_{i+1}}\setminus A_{t_{i}} =:\{\sigma_1\dots,\sigma_m\}$.
    The candidate sequence of simplicial collapses from $F_{t_{i+1}}$ onto $A_{t_{i+1}}$, is the sequence $F_{t_{i}}$ onto $A_{t_{i}}$ given by the induction hypothesis but applied to $F_{t_{i+1}}$.
    This is well defined, if any free face of $F_{t_{i}}$ whose removal is in the existing sequence of collapses is still free after the addition of $f_{\mathrm{Flood}}^{-1}(\{t_{i+1}\})$ to $F_{t_{i}}$. 
    However, $\tau \in F_{t_{i}}$ can only be such a free face if $\tau \not \in A_{t_{i}}$, so by the assumption that $F_{t_{i+1}}\setminus F_{t_{i}} = A_{t_{i+1}}\setminus A_{t_{i}}$ no coface of $\tau$ is added to $F_{t_{i}}$ at filtration value $t_{i+1}$.

    To finish the proof, we need to discuss the case when the filtration values are non-unique. By slightly changing the filtration values of the regular events, we get two new filtered complexes $\mathrm{Flood}',\mathrm{Alpha}'\subset \mathrm{Del}(X)$ with homotopy equivalences $\mathrm{Flood}'_r(X,X) \simeq \mathrm{Flood}_r$ and $\mathrm{Alpha}'_r \simeq \mathrm{Alpha}_r$ for all $r\ge 0$ that satisfy the previously made assumption that cases (i) -- (iii) are exclusive.
\end{proof}

\subsection{Example: the Flood complex of a circle}
Herein, we will calculate $\mathrm{Flood}(X,L)$ when $X:=\{x\in \mathbb R^2 \colon \Vert x\Vert = 1\}$, \ie, the entire unit circle, and $L$ is selected using farthest point sampling (FPS). We will see that the deviation from the (persistent homology of) the thickenings $X_r$ depends only on \emph{curvature} via the sagittas of circular arcs. In contrast, for an Alpha complex computed on $L$, the deviation depends on the \emph{slope} via their chord lengths. This geometric difference manifests in the different decay of the approximation error in bottleneck distances:
\emph{linear} in the number of $L$ for the Alpha complex, but \emph{quadratic} for the Flood complex, enabled by the additional information it can extract from $X$.

\begin{figure}[h]
    \centering
    \includegraphics[width=0.85\linewidth]{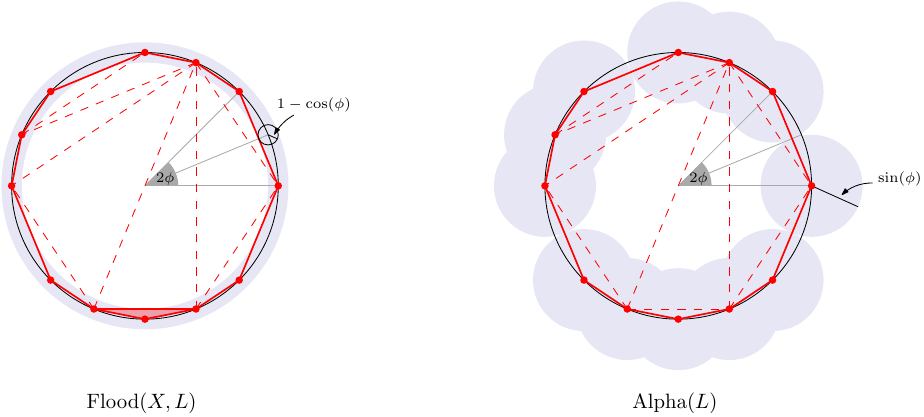}
    \caption{$\mathrm{Flood}(X,L)$ (\textbf{left}) and $\mathrm{Alpha}(L)$ (\textbf{right}) for $X$ the unit circle and $|L|=12$ landmarks ($\textcolor{red}{\bullet}$) selected using FPS. The Flood complex matches the underlying topology already at lower filtration values.}
    \label{fig:flood_vs_alpha_circle}
\end{figure}

\noindent
\textbf{Landmark selection.} Without loss of generality, we assume that the initial point for FPS is at $(1,0)$ corresponding to the angular coordinate of $0$. FPS then proceeds in stages, where in the $k$-th stage all points with angular coordinate $2\pi / 2^k$ are selected (in arbitrary order), if they have not been selected at an earlier stage. Once all $|L|$ landmarks are selected, the resulting Delaunay triangulation on $L$ is not unique. More specifically, it always contains $|L|$ edges along the circle connecting each landmark to the landmark with next lower and next higher angular coordinates, but the remaining $|L|-3$ edges (across the circle) are arbitrary.

\noindent
\textbf{Structure of persistence diagrams.} The persistent homology of $\mathrm{Flood}(X, L)$ does not depend on the choice of Delaunay triangulation. In fact, the lifetimes in $H_0$ are just the lengths of the edges along the circle, which are always in $\mathrm{Del}(L)$. Similarly, the only feature in $H_1$ is born as soon as all these edges are included in $\mathrm{Flood}_r(X, L)$, and dies as soon as the entire convex hull of $L$ is flooded, which always occurs at filtration value $1$.

\textbf{Filtration values of edges.} We will compute the filtration values of the edges that are relevant for persistent homology, \ie, of those along the circle. Their filtration values are just the distance from an edge's midpoint to the circle $X$, \ie, the sagitta of the corresponding arc. Denoting the arc length by $\varphi$, this is just $1-\cos(\varphi/2)$. 
Finally, we observe that there are exactly $m:= 2(|L|-2^{\lfloor \log_2(|L|)\rfloor})$ short arcs of length $\phi := \pi /2^{\lfloor \log_2(|L|)\rfloor}$ and $|L| - m$ long arcs of length $2 \phi$. This is because if $|L|$ is a power of $2$, then all $|L|$ arcs have length $2\pi/ L$, and adding any further landmark splits one long arc and into two short ones, see \cref{fig:flood_vs_alpha_circle}.
We conclude that
\begin{align}
    \dgm_0(\mathrm{Flood}_r(X,L))
    &= 
    \left\{\!\left\{
        \begin{array}{ll}
        (0,1-\cos(\phi/2)) & m\text{-times} \\
        (0,1-\cos(\phi)) & (|L| - m)\text{-times}\\
        (0,\infty) & \text{once}
        \end{array}
    \right\}\!\right\} \enspace,
    \\
    \dgm_1(\mathrm{Flood}_r(X,L)) &= \{\!\{(1-\cos(\phi), 1)\}\!\}\enspace.
\end{align}

\vskip1ex
\textbf{Bottleneck distances.} We compare the persistent homology of the Flood complex with that of the thickenings $\{X_r\}_{r\in [0,\infty)}$, which (by a slight abuse of notation) we denote as $\dgm_i(X)$. It is straightforward to see that 
\begin{align}
    \dgm_0(X) &= \{\!\{(0,\infty)\}\!\} \enspace, \text{and}\\
    \dgm_1(X) &= \{\!\{(0,1)\}\!\} \enspace.
\end{align}
Hence, bottleneck distances are just
\begin{align}
    d_B\big(\dgm_0(X),  \dgm_0(\mathrm{Flood}_r(X,L))\big) &= \frac 1 2 (1- \cos(\phi)) \qquad\text{(by matching to the diagonal)}\enspace, \\ 
    d_B\big(\dgm_1(X),  \dgm_1(\mathrm{Flood}_r(X,L))\big) &= (1 -\cos(\phi)) \enspace.
\end{align}
In particular, since $\phi < 2\pi /|L|$, we conclude that all bottleneck distances are smaller $\epsilon$ if $2\pi/|L| < \arccos(1-\epsilon)$, \ie, if we use $|L|> 2\pi /\arccos(1-\epsilon) \sim 2\pi/\sqrt{2\epsilon}$ landmarks.

\paragraph*{Comparison with Alpha complex.} 
We discard the additional information available through $X$ and simply compute an Alpha complex on $L$. Specifically, the arguments regarding the structure of persistence diagrams still apply, but now filtration values are not the sagitta of an arc, but half the chord length. Hence,
\begin{align}
    d_B\big(\dgm_0(X),  \dgm_0(\mathrm{Alpha}_r(L))\big) &= \frac 1 2 \sin(\phi)\enspace, \\ 
    d_B\big(\dgm_1(X),  \dgm_1(\mathrm{Alpha}_r(L))\big) &= \sin(\phi) \enspace,
\end{align}
and $|L|> 2\pi / \arcsin(\epsilon) \sim 2\pi/\epsilon$ are necessary.

\clearpage
\section{Changelog}
\subsection{Camera ready revisions / changes from arXiv version v1 to v2}
We make the point cloud datasets used for the object classification experiments in \cref{sec:exp-ml}
available as part of the \texttt{flooder} package. These datasets differ from the previous versions only (except for \mcb\ and \modelnet, see below) in that they use different seeds for sampling the point clouds and also different training/validation/test splits. For reproducibility, we reran all %Flood PH, Alpha$^\dagger$ and  Alpha$^\dagger$(avg.)
experiments on the now public datasets and updated \cref{table:objclsresults,table:synthetic} accordingly. On the $\texttt{swisscheese}$, $\texttt{rocks}$ and \texttt{corals} datasets, the new results are (up to statistical uncertainty) in line with our initial results. 

In the previous version of the manuscript, we selected a subset of \texttt{mcb} by joining MCB-A and MCB-B. This introduced duplicate point clouds (up to scaling and translation) in the dataset, however, with distinct labels. We now use only a subset of MCB-A. For \texttt{modelnet10}, we reduced the point clouds from 1M to 250k points to reduce the file size and facilitate sharing the data, and normalized point clouds to have coordinates in $[-1,1]$.
Moreover, we used different hardware for training the LGBM classification model, which uses the given training-time budget of 10 minutes more effectively.
As a result, on \mcb\ and \modelnet, balanced accuracies of the PH methods improved significantly while the neural network baselines are still consistent. 

\makeatletter
  \if@preprint
  \else
    \newpage
\section*{NeurIPS Paper Checklist}

\begin{enumerate}

\item {\bf Claims}
    \item[] Question: Do the main claims made in the abstract and introduction accurately reflect the paper's contributions and scope?
    \item[] Answer: \answerYes{} % Replace by \answerYes{}, \answerNo{}, or \answerNA{}.
    \item[] Justification: We provide, as claimed, both theoretical results (Section~\ref{section:theory}) and an experimental validation (Section~\ref{section:experiments}).
    \item[] Guidelines:
    \begin{itemize}
        \item The answer NA means that the abstract and introduction do not include the claims made in the paper.
        \item The abstract and/or introduction should clearly state the claims made, including the contributions made in the paper and important assumptions and limitations. A No or NA answer to this question will not be perceived well by the reviewers. 
        \item The claims made should match theoretical and experimental results, and reflect how much the results can be expected to generalize to other settings. 
        \item It is fine to include aspirational goals as motivation as long as it is clear that these goals are not attained by the paper. 
    \end{itemize}

\item {\bf Limitations}
    \item[] Question: Does the paper discuss the limitations of the work performed by the authors?
    \item[] Answer: \answerYes{} % Replace by \answerYes{}, \answerNo{}, or \answerNA{}.
    \item[] Justification: Limitations are discussed in the conclusions and in a separate section in the appendix.
    \item[] Guidelines:
    \begin{itemize}
        \item The answer NA means that the paper has no limitation while the answer No means that the paper has limitations, but those are not discussed in the paper. 
        \item The authors are encouraged to create a separate "Limitations" section in their paper.
        \item The paper should point out any strong assumptions and how robust the results are to violations of these assumptions (e.g., independence assumptions, noiseless settings, model well-specification, asymptotic approximations only holding locally). The authors should reflect on how these assumptions might be violated in practice and what the implications would be.
        \item The authors should reflect on the scope of the claims made, e.g., if the approach was only tested on a few datasets or with a few runs. In general, empirical results often depend on implicit assumptions, which should be articulated.
        \item The authors should reflect on the factors that influence the performance of the approach. For example, a facial recognition algorithm may perform poorly when image resolution is low or images are taken in low lighting. Or a speech-to-text system might not be used reliably to provide closed captions for online lectures because it fails to handle technical jargon.
        \item The authors should discuss the computational efficiency of the proposed algorithms and how they scale with dataset size.
        \item If applicable, the authors should discuss possible limitations of their approach to address problems of privacy and fairness.
        \item While the authors might fear that complete honesty about limitations might be used by reviewers as grounds for rejection, a worse outcome might be that reviewers discover limitations that aren't acknowledged in the paper. The authors should use their best judgment and recognize that individual actions in favor of transparency play an important role in developing norms that preserve the integrity of the community. Reviewers will be specifically instructed to not penalize honesty concerning limitations.
    \end{itemize}

\item {\bf Theory assumptions and proofs}
    \item[] Question: For each theoretical result, does the paper provide the full set of assumptions and a complete (and correct) proof?
    \item[] Answer: \answerYes{} % Replace by \answerYes{}, \answerNo{}, or \answerNA{}.
    \item[] Justification: The proof of all lemmas and theorems are reported in the appendix. 
    \item[] Guidelines:
    \begin{itemize}
        \item The answer NA means that the paper does not include theoretical results. 
        \item All the theorems, formulas, and proofs in the paper should be numbered and cross-referenced.
        \item All assumptions should be clearly stated or referenced in the statement of any theorems.
        \item The proofs can either appear in the main paper or the supplemental material, but if they appear in the supplemental material, the authors are encouraged to provide a short proof sketch to provide intuition. 
        \item Inversely, any informal proof provided in the core of the paper should be complemented by formal proofs provided in appendix or supplemental material.
        \item Theorems and Lemmas that the proof relies upon should be properly referenced. 
    \end{itemize}

    \item {\bf Experimental result reproducibility}
    \item[] Question: Does the paper fully disclose all the information needed to reproduce the main experimental results of the paper to the extent that it affects the main claims and/or conclusions of the paper (regardless of whether the code and data are provided or not)?
    \item[] Answer: \answerYes{}{} % Replace by \answerYes{}, \answerNo{}, or \answerNA{}.
    \item[] Justification: We provide all details for the implementation of the complexes and training of the models. Moreover, we provide our source code.
    \item[] Guidelines:
    \begin{itemize}
        \item The answer NA means that the paper does not include experiments.
        \item If the paper includes experiments, a No answer to this question will not be perceived well by the reviewers: Making the paper reproducible is important, regardless of whether the code and data are provided or not.
        \item If the contribution is a dataset and/or model, the authors should describe the steps taken to make their results reproducible or verifiable. 
        \item Depending on the contribution, reproducibility can be accomplished in various ways. For example, if the contribution is a novel architecture, describing the architecture fully might suffice, or if the contribution is a specific model and empirical evaluation, it may be necessary to either make it possible for others to replicate the model with the same dataset, or provide access to the model. In general. releasing code and data is often one good way to accomplish this, but reproducibility can also be provided via detailed instructions for how to replicate the results, access to a hosted model (e.g., in the case of a large language model), releasing of a model checkpoint, or other means that are appropriate to the research performed.
        \item While NeurIPS does not require releasing code, the conference does require all submissions to provide some reasonable avenue for reproducibility, which may depend on the nature of the contribution. For example
        \begin{enumerate}
            \item If the contribution is primarily a new algorithm, the paper should make it clear how to reproduce that algorithm.
            \item If the contribution is primarily a new model architecture, the paper should describe the architecture clearly and fully.
            \item If the contribution is a new model (e.g., a large language model), then there should either be a way to access this model for reproducing the results or a way to reproduce the model (e.g., with an open-source dataset or instructions for how to construct the dataset).
            \item We recognize that reproducibility may be tricky in some cases, in which case authors are welcome to describe the particular way they provide for reproducibility. In the case of closed-source models, it may be that access to the model is limited in some way (e.g., to registered users), but it should be possible for other researchers to have some path to reproducing or verifying the results.
        \end{enumerate}
    \end{itemize}

\item {\bf Open access to data and code}
    \item[] Question: Does the paper provide open access to the data and code, with sufficient instructions to faithfully reproduce the main experimental results, as described in supplemental material?
    \item[] Answer: \answerYes{} % Replace by \answerYes{}, \answerNo{}, or \answerNA{}.
    \item[] Justification: Code will be publicly available on GitHub, and we provide it to reviewers.
    \item[] Guidelines:
    \begin{itemize}
        \item The answer NA means that paper does not include experiments requiring code.
        \item Please see the NeurIPS code and data submission guidelines (\url{https://nips.cc/public/guides/CodeSubmissionPolicy}) for more details.
        \item While we encourage the release of code and data, we understand that this might not be possible, so “No” is an acceptable answer. Papers cannot be rejected simply for not including code, unless this is central to the contribution (e.g., for a new open-source benchmark).
        \item The instructions should contain the exact command and environment needed to run to reproduce the results. See the NeurIPS code and data submission guidelines (\url{https://nips.cc/public/guides/CodeSubmissionPolicy}) for more details.
        \item The authors should provide instructions on data access and preparation, including how to access the raw data, preprocessed data, intermediate data, and generated data, etc.
        \item The authors should provide scripts to reproduce all experimental results for the new proposed method and baselines. If only a subset of experiments are reproducible, they should state which ones are omitted from the script and why.
        \item At submission time, to preserve anonymity, the authors should release anonymized versions (if applicable).
        \item Providing as much information as possible in supplemental material (appended to the paper) is recommended, but including URLs to data and code is permitted.
    \end{itemize}

\item {\bf Experimental setting/details}
    \item[] Question: Does the paper specify all the training and test details (e.g., data splits, hyperparameters, how they were chosen, type of optimizer, etc.) necessary to understand the results?
    \item[] Answer: \answerYes{} % Replace by \answerYes{}, \answerNo{}, or \answerNA{}.
    \item[] Justification: Hyperparameters and optimizers are described in the main part of the paper. Further details can be found in the appendix and the provided source code.
    \item[] Guidelines:
    \begin{itemize}
        \item The answer NA means that the paper does not include experiments.
        \item The experimental setting should be presented in the core of the paper to a level of detail that is necessary to appreciate the results and make sense of them.
        \item The full details can be provided either with the code, in appendix, or as supplemental material.
    \end{itemize}

\item {\bf Experiment statistical significance}
    \item[] Question: Does the paper report error bars suitably and correctly defined or other appropriate information about the statistical significance of the experiments?
    \item[] Answer: \answerYes{} % Replace by \answerYes{}, \answerNo{}, or \answerNA{}.
    \item[] Justification: For all classification experiments and ablation studies, we report averages with one standard deviation error bars and specify that they are with respect to cross-validation runs.
    \item[] Guidelines:
    \begin{itemize}
        \item The answer NA means that the paper does not include experiments.
        \item The authors should answer "Yes" if the results are accompanied by error bars, confidence intervals, or statistical significance tests, at least for the experiments that support the main claims of the paper.
        \item The factors of variability that the error bars are capturing should be clearly stated (for example, train/test split, initialization, random drawing of some parameter, or overall run with given experimental conditions).
        \item The method for calculating the error bars should be explained (closed form formula, call to a library function, bootstrap, etc.)
        \item The assumptions made should be given (e.g., Normally distributed errors).
        \item It should be clear whether the error bar is the standard deviation or the standard error of the mean.
        \item It is OK to report 1-sigma error bars, but one should state it. The authors should preferably report a 2-sigma error bar than state that they have a 96\% CI, if the hypothesis of Normality of errors is not verified.
        \item For asymmetric distributions, the authors should be careful not to show in tables or figures symmetric error bars that would yield results that are out of range (e.g. negative error rates).
        \item If error bars are reported in tables or plots, The authors should explain in the text how they were calculated and reference the corresponding figures or tables in the text.
    \end{itemize}

\item {\bf Experiments compute resources}
    \item[] Question: For each experiment, does the paper provide sufficient information on the computer resources (type of compute workers, memory, time of execution) needed to reproduce the experiments?
    \item[] Answer: \answerYes{} % Replace by \answerYes{}, \answerNo{}, or \answerNA{}.
    \item[] Justification: Computing infrastructure is detailed in the appendix.
    \item[] Guidelines:
    \begin{itemize}
        \item The answer NA means that the paper does not include experiments.
        \item The paper should indicate the type of compute workers CPU or GPU, internal cluster, or cloud provider, including relevant memory and storage.
        \item The paper should provide the amount of compute required for each of the individual experimental runs as well as estimate the total compute. 
        \item The paper should disclose whether the full research project required more compute than the experiments reported in the paper (e.g., preliminary or failed experiments that didn't make it into the paper). 
    \end{itemize}
    
\item {\bf Code of ethics}
    \item[] Question: Does the research conducted in the paper conform, in every respect, with the NeurIPS Code of Ethics \url{https://neurips.cc/public/EthicsGuidelines}?
    \item[] Answer: \answerYes{} % Replace by \answerYes{}, \answerNo{}, or \answerNA{}.
    \item[] Justification: The research conforms to the NeurIPS Code of Ethics.
    \item[] Guidelines:
    \begin{itemize}
        \item The answer NA means that the authors have not reviewed the NeurIPS Code of Ethics.
        \item If the authors answer No, they should explain the special circumstances that require a deviation from the Code of Ethics.
        \item The authors should make sure to preserve anonymity (e.g., if there is a special consideration due to laws or regulations in their jurisdiction).
    \end{itemize}

\item {\bf Broader impacts}
    \item[] Question: Does the paper discuss both potential positive societal impacts and negative societal impacts of the work performed?
    \item[] Answer: \answerNA{} % Replace by \answerYes{}, \answerNo{}, or \answerNA{}.
    \item[] Justification: There is no direct societal impact of the work performed.
    \item[] Guidelines:
    \begin{itemize}
        \item The answer NA means that there is no societal impact of the work performed.
        \item If the authors answer NA or No, they should explain why their work has no societal impact or why the paper does not address societal impact.
        \item Examples of negative societal impacts include potential malicious or unintended uses (e.g., disinformation, generating fake profiles, surveillance), fairness considerations (e.g., deployment of technologies that could make decisions that unfairly impact specific groups), privacy considerations, and security considerations.
        \item The conference expects that many papers will be foundational research and not tied to particular applications, let alone deployments. However, if there is a direct path to any negative applications, the authors should point it out. For example, it is legitimate to point out that an improvement in the quality of generative models could be used to generate deepfakes for disinformation. On the other hand, it is not needed to point out that a generic algorithm for optimizing neural networks could enable people to train models that generate Deepfakes faster.
        \item The authors should consider possible harms that could arise when the technology is being used as intended and functioning correctly, harms that could arise when the technology is being used as intended but gives incorrect results, and harms following from (intentional or unintentional) misuse of the technology.
        \item If there are negative societal impacts, the authors could also discuss possible mitigation strategies (e.g., gated release of models, providing defenses in addition to attacks, mechanisms for monitoring misuse, mechanisms to monitor how a system learns from feedback over time, improving the efficiency and accessibility of ML).
    \end{itemize}
    
\item {\bf Safeguards}
    \item[] Question: Does the paper describe safeguards that have been put in place for responsible release of data or models that have a high risk for misuse (e.g., pretrained language models, image generators, or scraped datasets)?
    \item[] Answer: \answerNA{} % Replace by \answerYes{}, \answerNo{}, or \answerNA{}.
    \item[] Justification: The paper poses no such risks.
    \item[] Guidelines:
    \begin{itemize}
        \item The answer NA means that the paper poses no such risks.
        \item Released models that have a high risk for misuse or dual-use should be released with necessary safeguards to allow for controlled use of the model, for example by requiring that users adhere to usage guidelines or restrictions to access the model or implementing safety filters. 
        \item Datasets that have been scraped from the Internet could pose safety risks. The authors should describe how they avoided releasing unsafe images.
        \item We recognize that providing effective safeguards is challenging, and many papers do not require this, but we encourage authors to take this into account and make a best faith effort.
    \end{itemize}

\item {\bf Licenses for existing assets}
    \item[] Question: Are the creators or original owners of assets (e.g., code, data, models), used in the paper, properly credited and are the license and terms of use explicitly mentioned and properly respected?
    \item[] Answer: \answerYes{} % Replace by \answerYes{}, \answerNo{}, or \answerNA{}.
    \item[] Justification: Creators or owners of assets are properly credited in the main part of the manuscript. License terms are specified in the supplementary material and license terms are always respected.
    \item[] Guidelines:
    \begin{itemize}
        \item The answer NA means that the paper does not use existing assets.
        \item The authors should cite the original paper that produced the code package or dataset.
        \item The authors should state which version of the asset is used and, if possible, include a URL.
        \item The name of the license (e.g., CC-BY 4.0) should be included for each asset.
        \item For scraped data from a particular source (e.g., website), the copyright and terms of service of that source should be provided.
        \item If assets are released, the license, copyright information, and terms of use in the package should be provided. For popular datasets, \url{paperswithcode.com/datasets} has curated licenses for some datasets. Their licensing guide can help determine the license of a dataset.
        \item For existing datasets that are re-packaged, both the original license and the license of the derived asset (if it has changed) should be provided.
        \item If this information is not available online, the authors are encouraged to reach out to the asset's creators.
    \end{itemize}

\item {\bf New assets}
    \item[] Question: Are new assets introduced in the paper well documented and is the documentation provided alongside the assets?
    \item[] Answer: \answerYes{} % Replace by \answerYes{}, \answerNo{}, or \answerNA{}.
    \item[] Justification: All newly introduced assets are carefully documented, \ie, details of
    the method are described in Sections 4 and 5 of the manuscript. Furthermore, source code is attached to the submission and will be released publicly in case of acceptance (including datasets used in this work as well as a documentation).
    \item[] Guidelines:
    \begin{itemize}
        \item The answer NA means that the paper does not release new assets.
        \item Researchers should communicate the details of the dataset/code/model as part of their submissions via structured templates. This includes details about training, license, limitations, etc. 
        \item The paper should discuss whether and how consent was obtained from people whose asset is used.
        \item At submission time, remember to anonymize your assets (if applicable). You can either create an anonymized URL or include an anonymized zip file.
    \end{itemize}

\item {\bf Crowdsourcing and research with human subjects}
    \item[] Question: For crowdsourcing experiments and research with human subjects, does the paper include the full text of instructions given to participants and screenshots, if applicable, as well as details about compensation (if any)? 
    \item[] Answer: \answerNA{} % Replace by \answerYes{}, \answerNo{}, or \answerNA{}.
    \item[] Justification: The paper does not involve crowdsourcing nor research with human subjects.
    \item[] Guidelines:
    \begin{itemize}
        \item The answer NA means that the paper does not involve crowdsourcing nor research with human subjects.
        \item Including this information in the supplemental material is fine, but if the main contribution of the paper involves human subjects, then as much detail as possible should be included in the main paper. 
        \item According to the NeurIPS Code of Ethics, workers involved in data collection, curation, or other labor should be paid at least the minimum wage in the country of the data collector. 
    \end{itemize}

\item {\bf Institutional review board (IRB) approvals or equivalent for research with human subjects}
    \item[] Question: Does the paper describe potential risks incurred by study participants, whether such risks were disclosed to the subjects, and whether Institutional Review Board (IRB) approvals (or an equivalent approval/review based on the requirements of your country or institution) were obtained?
    \item[] Answer: \answerNA{} % Replace by \answerYes{}, \answerNo{}, or \answerNA{}.
    \item[] Justification: The paper does not involve crowdsourcing nor research with human subjects.
    \item[] Guidelines:
    \begin{itemize}
        \item The answer NA means that the paper does not involve crowdsourcing nor research with human subjects.
        \item Depending on the country in which research is conducted, IRB approval (or equivalent) may be required for any human subjects research. If you obtained IRB approval, you should clearly state this in the paper. 
        \item We recognize that the procedures for this may vary significantly between institutions and locations, and we expect authors to adhere to the NeurIPS Code of Ethics and the guidelines for their institution. 
        \item For initial submissions, do not include any information that would break anonymity (if applicable), such as the institution conducting the review.
    \end{itemize}

\item {\bf Declaration of LLM usage}
    \item[] Question: Does the paper describe the usage of LLMs if it is an important, original, or non-standard component of the core methods in this research? Note that if the LLM is used only for writing, editing, or formatting purposes and does not impact the core methodology, scientific rigorousness, or originality of the research, declaration is not required.
    %this research? 
    \item[] Answer: \answerNA{} % Replace by \answerYes{}, \answerNo{}, or \answerNA{}.
    \item[] Justification: The core method development in this research does not involve LLMs.
    \item[] Guidelines:
    \begin{itemize}
        \item The answer NA means that the core method development in this research does not involve LLMs as any important, original, or non-standard components.
        \item Please refer to our LLM policy (\url{https://neurips.cc/Conferences/2025/LLM}) for what should or should not be described.
    \end{itemize}

\end{enumerate}

  \fi
\makeatother
\end{document}